\documentclass[preprint,12pt]{article}
\usepackage[top=1.1in, bottom=1.2in, left=0.8in, right=0.8in]{geometry}
\usepackage{amsmath,amsfonts,amssymb,amsthm,mathtools}
\usepackage{enumitem}
\usepackage{mathrsfs}
\usepackage{graphicx}
\usepackage{colortbl,dcolumn}
\usepackage{psfrag}
\usepackage{booktabs}
\usepackage{tabularx}
\usepackage{array}
\newcolumntype{Y}{>{\raggedright\arraybackslash}X}
\usepackage{float}
\usepackage{cite}
\usepackage{url}
\usepackage{xcolor}
\usepackage{hyperref}
\allowdisplaybreaks[4]
\numberwithin{equation}{section}

\hypersetup{
  colorlinks=false,
  linkbordercolor={0.75 0.15 0.15},
  citebordercolor={0.10 0.60 0.10},
  urlbordercolor={0.10 0.45 0.75},
  pdfborder={0 0 0.7}
}

\newcommand{\R}{\mathbb{R}}

\newcommand{\E}{\mathbb{E}}

\newcommand{\op}{\mathrm{op}}
\newcommand{\dd}{\,\mathrm{d}}

\newcommand{\piz}{\pi_z}
\newcommand{\pieta}{\pi_{\eta}}
\newcommand{\Qeta}{Q_\eta}
\newcommand{\Qsg}{Q_\eta^{\mathrm{sg}}}

\newcommand{\Lgen}{\mathcal L_z}

\newtheorem{theorem}{Theorem}[section]
\newtheorem{definition}[theorem]{Definition}
\newtheorem{lemma}[theorem]{Lemma}
\newtheorem{proposition}[theorem]{Proposition}
\newtheorem{corollary}[theorem]{Corollary}
\newtheorem{remark}[theorem]{Remark}
\newtheorem{example}[theorem]{Example}
\newtheorem{assumption}[theorem]{Assumption}

\begin{document}
\title{Deterministic Envelopes for Tamed SGLD:\\
Decoupling Stochastic Gradient Noise and Localized Taming}

\author{
Yiwei Zhou\footnotemark[1], Ziheng Chen\footnotemark[2]
}

\maketitle

\footnotetext{\footnotemark[1] School of Mathematics and Statistics, Yunnan University, Kunming, Yunnan, 650500, China. Email: yiwei.zhou@utexas.edu}
\footnotetext{\footnotemark[2] School of Mathematics and Statistics, Yunnan University, Kunming, Yunnan, 650500, China. Email: 12024113103@stu.ynu.edu.cn}

\begin{abstract}
      {\rm\small
Stochastic gradient Langevin algorithms often use tamed denominators to stabilize superlinear drifts. This paper shows that when the denominator depends on the current stochastic gradient, the transformed update can have a biased conditional mean even if the original stochastic gradient is unbiased. This creates a stationary mean-shift channel that is absent for deterministic denominators.
We propose a structure-preserving framework for designing tamed denominators. The construction keeps the denominator deterministic given the current state, and uses localized deterministic envelopes to avoid unnecessary taming in typical regions. These kernels retain the stabilizing effect of taming while avoiding the bias introduced by a gradient-dependent denominator. Our theory bounds the stationary bias through Euler, envelope, and stochastic-gradient residuals. The analysis also shows why purely local taming rules can lose control in the far tail and motivates a hybrid construction with additional tail protection. Experiments confirm the stationary distortions of random denominators, the bias reduction of deterministic-envelope designs, and the stabilizing effect of the hybrid construction.}\\[0.5em]

      \textbf{AMS subject classification: } {\rm\small Primary 65C30; Secondary 60H10, 60J22, 60J60, 62F15.}\\[0.3em]

      \textbf{Key Words: } {\rm\small Tamed stochastic gradient Langevin dynamics; deterministic envelopes; localized taming; stationary bias; invariant measure.}
\end{abstract}


\section{Introduction}
In many tamed SGLD schemes, the denominator is computed from the current stochastic gradient.  This is usually presented as a way to shrink large steps.  The key point of this paper is that the shrinkage is not neutral.  Even if \(g(x,U)\) is an unbiased estimator of the drift \(b(x)\), the ratio
\[
\frac{g(x,U)}{1+\eta^\alpha A(x,U)}
\]
can have a biased conditional mean.  Thus the denominator can change the one-step mean drift before any long-time sampling error is considered.

This effect is the starting point of the paper.  We study it for
fixed-stepsize stochastic gradient Langevin dynamics.  Given data
\[
 z=(z_1,\ldots,z_N), \qquad F_z(w)=\frac1N\sum_{i=1}^N f(w,z_i),
\]
the reference overdamped Langevin diffusion is
\begin{equation}\label{eq:langevin}
 \dd X_t=-\nabla F_z(X_t)\dd t+\sqrt{2\beta^{-1}}\dd B_t,
\end{equation}
with Gibbs invariant law
\begin{equation}\label{eq:gibbs}
 \piz(\dd w)=\Lambda_z^{-1}e^{-\beta F_z(w)}\dd w.
\end{equation}
For polynomial-growth objectives, fixed-stepsize Euler simulation can be
unstable even when the continuous-time drift is dissipative.  For example,
\[
 F(x)=\frac14x^4-\frac12x^2,
 \qquad F'(x)=x^3-x,
\]
has far-field Euler increments that can overshoot.  This classical instability
explains why taming is useful.  It does not determine what information the
denominator should use.

The first design rule is to keep the denominator fixed conditional on the
current state.  For denominators of the form
\[
  1+\eta^\alpha A_\eta(x),
\]
where \(A_\eta(x)\) is deterministic given \(x\), an unbiased stochastic-gradient
oracle satisfies
\[
  \E\left[
  \frac{g(x,U)}{1+\eta^\alpha A_\eta(x)}\mid x
  \right]
  =\frac{\nabla F_z(x)}{1+\eta^\alpha A_\eta(x)}.
\]
The denominator still modifies the drift, but it does not create an additional
mean-shift channel through the current oracle draw.  This is the deterministic
envelope principle.

Deterministic taming is not the whole design problem.  A global deterministic
denominator can be stable but too conservative.  It may damp the drift even in
regions where the Euler step is already well behaved.  We therefore separate
the dependence question from the activation question.  The denominator should
be deterministic conditional on \(x\), but its correction should turn on only
where stabilization is needed.  This leads to the localized template
\begin{equation}\label{eq:intro-hybrid-envelope}
  A_\eta^{c_s,c_h}(x)
  =c_s\big(\bar A(x)-R\big)_+^\theta
   +c_h\big(\bar A(x)-S_\eta\big)_+,
  \qquad c_s,c_h\in\{0,1\},\quad 0<\theta\le1,
\end{equation}
where \(\bar A\) is a deterministic growth indicator.  The choices
\((c_s,c_h)=(1,0),(0,1),(1,1)\) give local soft, local hard, and hybrid
envelopes.  These are not different oracle principles.  They are different
activation patterns for the same deterministic-envelope idea.  A global
envelope, by contrast, uses the growth indicator everywhere, for example
\(A(x)=c\bar A(x)\) with fixed \(c>0\).

The results follow the same logic.
\begin{itemize}
\item We quantify the mean shift caused by gradient-dependent denominators.  In
one dimension it is driven by oracle variance.  In higher dimensions it can
also rotate the averaged drift through the oracle covariance.

\item We formulate deterministic envelopes for fixed-step Langevin kernels.
Global, local soft, local hard, and hybrid rules are treated as activation
patterns of one denominator-design framework.

\item We prove Lyapunov stability, invariant-measure existence, and
small-stepsize stationary consistency.  For Poisson observables, the
stationary bias is controlled by one-step residuals.

\item We test the design principle on radial examples and empirical-risk
examples.  The experiments isolate the mean-shift channel of random
denominators, the typical-region damping of global deterministic taming, and
the tail protection supplied by hybrid envelopes.
\end{itemize}

\paragraph{Relation to existing tamed schemes.}
Taming and truncation are standard tools for SDEs with non-globally Lipschitz
coefficients \cite{HutzenthalerJentzenKloeden2011,Hutzenthaler2012,Sabanis2013,Sabanis2016,Mao2015}
and for Langevin algorithms with superlinear potentials
\cite{DurmusMoulines2017,DalalyanKaragulyan2019,BrosseDurmusMoulinesSabanis2019}.
TULA-style methods provide deterministic drift stabilization.  TUSLA-type
analyses use deterministic or state-dependent taming factors in many cases and
provide finite-time guarantees for stabilized stochastic gradient Langevin
dynamics, including sampling, optimization, and nonconvex-learning bounds
\cite{LovasLytrasRasonyiSabanis2023,LimNeufeldSabanisZhang2024}.

The present paper studies a different issue.  It first identifies the oracle
mean shift induced by gradient-dependent denominators.  It then treats
deterministic denominators as fixed-step design objects and decomposes the
stationary bias into Euler, envelope, and stochastic-gradient residuals.
Gradient-dependent denominators are not only a toy baseline.  They appear, for
example, in normalized stochastic-gradient updates of the form
\[
    \frac{\nabla f(\xi_n,w_n)}{1+\alpha_n\|\nabla f(\xi_n,w_n)\|},
\]
where the normalization uses the current stochastic-gradient realization
\cite{EisenmannStillfjord2022}. This conditional-mean effect may be less visible in optimization, where the goal is often stable descent rather than preservation of a target law.  In sampling, it is more delicate: changing the conditional mean drift changes the Markov
kernel and can lead to stationary bias. 

Table~\ref{tab:comparison} summarizes the
comparison.

\begin{table}[H]
\begin{center}
\scriptsize
\setlength{\tabcolsep}{3pt}
\renewcommand{\arraystretch}{1.18}
\begin{tabularx}{\textwidth}{p{0.20\textwidth}p{0.25\textwidth}p{0.23\textwidth}X}
\toprule
Class & Denominator / normalization & Main analytical target & Relation to this paper \\
\midrule
TULA-type Langevin taming & Deterministic drift, state, or full-gradient normalization & Stability and sampling error for deterministic superlinear drift & Background stabilization \\
TUSLA-type stochastic-gradient taming & Deterministic or state-dependent taming of stochastic-gradient updates & Finite-time sampling, optimization, or excess-risk control in nonconvex learning & Related tamed SGLD theory \\
Gradient-dependent denominator & Normalization by the sampled gradient magnitude, e.g. \(1+\alpha_n\|\nabla f(\xi_n,w_n)\|\) & Stabilized stochastic gradient descent or stochastic approximation & Oracle-coupled baseline \\
Deterministic global envelope & \(1+\eta^\alpha A(x)\) or a prescribed deterministic envelope & Oracle-preserving stabilization with a deterministic drift modification & Conservative deterministic baseline \\
Local/hybrid deterministic envelope & Inactive safe region, soft intermediate region, hard-tail safeguard & Stable tamed SGLD with localized deterministic residual & Proposed localized design \\
\bottomrule
\end{tabularx}
\caption{Technical comparison with related tamed stochastic-gradient schemes.}
\label{tab:comparison}
\end{center}

\end{table}

\paragraph{Organization.}
Section~\ref{sec:oracle-transform} studies denominator-induced oracle bias.  Section~\ref{sec:model-recursions} defines the Langevin model and the deterministic-envelope family.  Sections~\ref{sec:moment-stability}--\ref{sec:weak-stationary-identification} prove stability, invariant-measure existence, and small-stepsize stationary consistency.  Section~\ref{sec:quantitative-residual} gives the exact-gradient residual bound.  Section~\ref{sec:sg-extension} extends it to stochastic-gradient kernels.  Section~\ref{sec:experiments} gives numerical tests, and Section~\ref{sec:discussion} discusses proxy construction and weak-error directions.

\section{Stochastic-Gradient Transformations and Denominator-Induced Bias}\label{sec:oracle-transform}

In this section we view taming denominators as transformations of the stochastic-gradient output.  For an oracle output \(v\), a deterministic-envelope transformation is
\[
  \mathcal T_\eta^{\rm det}(x,v)
  =\frac{v}{1+\eta^\alpha A_\eta(x)}.
\]
A gradient-dependent transformation is instead
\[
  \mathcal T_\lambda^{\rm gd}(v)
  =\frac{v}{1+\lambda\|v\|}.
\]
The distinction is structural: conditional on \(x\), \(1+\eta^\alpha A_\eta(x)\) is fixed, whereas \(1+\lambda\|v\|\) remains coupled to the current oracle realization \(v\).

Let \(g(x,U)\) be a stochastic-gradient oracle satisfying
\begin{equation}\label{eq:oracle-unbiased-intro}
  \E[g(x,U)\mid x]=\nabla F_z(x).
\end{equation}
For a deterministic envelope \(A_\eta(x)\), define the scaled oracle output
\[
  G_\eta^{\rm det}(x,U)=\frac{g(x,U)}{1+\eta^\alpha A_\eta(x)}.
\]
If \eqref{eq:oracle-unbiased-intro} holds and \(A_\eta(x)\) depends only on \(x\), then the elementary conditional-mean identity
\begin{equation}\label{eq:oracle-preservation}
  \E[G_\eta^{\rm det}(x,U)\mid x]
  =\frac{\nabla F_z(x)}{1+\eta^\alpha A_\eta(x)}
\end{equation}
holds.  The denominator damps the drift, but it does not add a new
oracle-coupled mean shift.

This identity can fail for a gradient-dependent denominator.  The denominator
then changes the Markov kernel through the current oracle draw, and this can
change the invariant law at fixed stepsize.  The next results quantify this
mean shift.

\begin{proposition}[One-dimensional mean shift from a gradient-dependent denominator]\label{prop:gradient-dependent-denominator-bias}
Let \(g>0\) be a one-dimensional positive random variable with \(\E g=\mu\) and \(\E g^3<\infty\).  For \(\lambda\downarrow0\),
\begin{equation}\label{eq:variance-induced-bias}
  \E\left[\frac{g}{1+\lambda g}\right]
  -\frac{\mu}{1+\lambda\mu}
  =-\lambda\operatorname{Var}(g)+O(\lambda^2).
\end{equation}
In particular, gradient-dependent denominator taming introduces a first-order drift distortion proportional to the oracle variance.
\end{proposition}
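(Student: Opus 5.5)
The plan is to expand both terms to second order in \(\lambda\) using an exact algebraic identity, so that no Taylor series in \(g\) is needed. For any \(y\ge0\) and \(\lambda\ge0\),
\[
\frac{y}{1+\lambda y}=y-\lambda y^2+\frac{\lambda^2y^3}{1+\lambda y},
\]
and since \(g>0\) the remainder satisfies \(0\le \lambda^2 g^3/(1+\lambda g)\le \lambda^2 g^3\). First I apply this identity with \(y=g\) and take expectations. The hypothesis \(\E g^3<\infty\) makes every term integrable, and it gives
\[
\E\Big[\frac{g}{1+\lambda g}\Big]=\mu-\lambda\,\E g^2+R_1(\lambda),\qquad 0\le R_1(\lambda)\le \lambda^2\E g^3 .
\]
Next I apply the same identity with \(y=\mu>0\), which yields
\[
\frac{\mu}{1+\lambda\mu}=\mu-\lambda\mu^2+R_2(\lambda),\qquad 0\le R_2\le\lambda^2\mu^3 .
\]

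Subtracting the two expansions gives
\[
-\lambda(\E g^2-\mu^2)+R_1-R_2=-\lambda\operatorname{Var}(g)+O(\lambda^2),
\]
with explicit constant \(|R_1-R_2|\le\lambda^2\max(\E g^3,\mu^3)\le\lambda^2\E g^3\), where the last step uses Jensen. This proves \eqref{eq:variance-induced-bias}, and the bound is uniform for all \(\lambda\ge0\), not just asymptotically. The "in particular" statement then follows by reading \(g=g(x,U)\) conditionally on \(x\) with \(\lambda=\eta^\alpha\). The conditional drift of the gradient-dependent kernel then differs from the drift of the comparable deterministic denominator \(1+\lambda\nabla F_z(x)\) by \(-\lambda\operatorname{Var}(g\mid x)+O(\lambda^2)\), in contrast with \eqref{eq:oracle-preservation}.

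The only delicate point is controlling the remainder without assuming boundedness of \(g\), since a naive geometric-series expansion of \(1/(1+\lambda g)\) would require \(\lambda g<1\). Positivity of \(g\) together with the exact identity handles this, because the denominator \(1+\lambda g\ge1\). This is precisely where the hypotheses \(g>0\) and \(\E g^3<\infty\) enter.
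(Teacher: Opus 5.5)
Your proof is correct and follows the same route as the paper. Both arguments expand \(\E[g/(1+\lambda g)]\) and \(\mu/(1+\lambda\mu)\) to first order in \(\lambda\), subtract, and recognize \(\E g^2-\mu^2=\operatorname{Var}(g)\). Your exact identity with remainder \(0\le\lambda^2g^3/(1+\lambda g)\le\lambda^2g^3\) is the rigorous form of the paper's informal \(O(\lambda^3g^3)\) expansion, which read literally needs \(\lambda g<1\) or, after multiplying by \(g\), a fourth moment; yours uses exactly \(g>0\) and \(\E g^3<\infty\) and gives the non-asymptotic bound \(\lambda^2\E g^3\).
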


\begin{proof}
Use \((1+\lambda g)^{-1}=1-\lambda g+\lambda^2g^2+O(\lambda^3g^3)\) and the corresponding expansion for \((1+\lambda\mu)^{-1}\).  Then
\[
  \E\left[\frac{g}{1+\lambda g}\right]
  =\mu-\lambda\E[g^2]+O(\lambda^2),
  \qquad
  \frac{\mu}{1+\lambda\mu}
  =\mu-\lambda\mu^2+O(\lambda^2),
\]
which gives \eqref{eq:variance-induced-bias}.

\end{proof}

\begin{remark}[Absolute-value denominators]
Proposition~\ref{prop:gradient-dependent-denominator-bias} uses the scalar calculation with \(1+\lambda g\) to display the variance mechanism in its cleanest form.  The same oracle-coupling effect is present for the absolute-value normalization commonly used in scalar tamed stochastic-gradient updates.  Indeed, if \(G\) is a scalar oracle with \(\E|G|^3<\infty\) and \(\mu=\E G\), then, as \(\lambda\downarrow0\),
\[
  \E\left[\frac{G}{1+\lambda |G|}\right]
  -
  \frac{\mu}{1+\lambda |\mu|}
  =
  -\lambda\left(\E[G|G|]-\mu|\mu|\right)
  +O\left(\lambda^2\E|G|^3\right).
\]
When \(G\) has a fixed positive sign, this reduces to the variance term \(-\lambda\operatorname{Var}(G)+O(\lambda^2)\).  If \(G\) can cross zero, the nondifferentiability of \(|\cdot|\) changes the first-order coefficient.  It does not restore the conditional-mean identity. 
\end{remark}

\begin{example}[A two-point oracle]
Let \(G\) be a scalar unbiased oracle with
\[
  G=
  \begin{cases}
  0, & \text{with probability }1/2,\\
  2\mu, & \text{with probability }1/2.
  \end{cases}
\]
Then \(\E G=\mu\), but
\[
  \E\left[\frac{G}{1+\lambda G}\right]
  =\frac{\mu}{1+2\lambda\mu}
  \ne
  \frac{\mu}{1+\lambda\mu}
  =
  \frac{\E G}{1+\lambda\E G}.
\]
Thus, even this elementary unbiased oracle is not preserved by a gradient-dependent denominator.
\end{example}
In higher dimensions, an oracle-dependent denominator can change not only the size but also the direction of the averaged drift.
\begin{proposition}[High-dimensional mean shift from a gradient-dependent denominator]\label{prop:vector-gradient-dependent-denominator-bias}
Let \(G\in\R^d\) be an unbiased vector oracle with \(\E G=\mu\ne0\) and \(\E\|G\|^3<\infty\).  Define
\[
  T_\lambda(v)=\frac{v}{1+\lambda\|v\|},\qquad v\in\R^d .
\]
Then, as \(\lambda\downarrow0\),
\begin{equation}\label{eq:vector-first-order-bias}
  \E[T_\lambda(G)]-T_\lambda(\mu)
  =-\lambda\bigl(\E[G\|G\|]-\mu\|\mu\|\bigr)
   +O\bigl(\lambda^2\E\|G\|^3\bigr).
\end{equation}
Moreover, if \(G=\mu+\varepsilon\xi\), where \(\E\xi=0\), \(\E\|\xi\|^3<\infty\), and \(\Sigma=\E[\xi\xi^\top]\), then
\begin{equation}\label{eq:small-noise-directional-bias}
 \E[G\|G\|]-\mu\|\mu\|
 =\varepsilon^2\left[
   \frac{\Sigma\mu}{\|\mu\|}
   +\frac{\mu}{2\|\mu\|}
   \left(\operatorname{tr}\Sigma-
   \frac{\mu^\top\Sigma\mu}{\|\mu\|^2}\right)
 \right]+O(\varepsilon^3).
\end{equation}
The component \(\Sigma\mu/\|\mu\|\) is parallel to \(\mu\) only when \(\mu\) is an eigenvector of \(\Sigma\).  Thus a directional rotation appears when the oracle covariance is not aligned with the mean-gradient direction; in isotropic or aligned cases the leading bias is only a scalar shrinkage.
\end{proposition}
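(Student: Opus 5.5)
The proof has two independent parts: a pointwise algebraic expansion of \(T_\lambda\) that gives \eqref{eq:vector-first-order-bias}, and a second-order Taylor expansion of \(\phi(v)=v\|v\|\) at \(\mu\) that gives \eqref{eq:small-noise-directional-bias}.

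\textbf{First part.} For every \(v\in\R^d\) and \(\lambda\ge0\) we have the exact identity
\[
  \frac{1}{1+\lambda\|v\|}=1-\lambda\|v\|+\frac{\lambda^2\|v\|^2}{1+\lambda\|v\|}.
\]
Multiplying by \(v\) gives
\[
  T_\lambda(v)=v-\lambda v\|v\|+r_\lambda(v),
  \qquad
  \|r_\lambda(v)\|\le\lambda^2\|v\|^3 .
\]
The remainder bound is uniform in \(v\), which is what allows it to pass through the expectation without any truncation. Apply this with \(v=G\), take expectations (finite since \(\E\|G\|^3<\infty\)), and apply it again with \(v=\mu\). Subtracting, the linear terms cancel because \(\E G=\mu\). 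The remainders are bounded by \(\lambda^2(\E\|G\|^3+\|\mu\|^3)\). Jensen's inequality gives \(\|\mu\|^3\le\E\|G\|^3\), so the error is \(O(\lambda^2\E\|G\|^3)\). This proves \eqref{eq:vector-first-order-bias}.

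\textbf{Second part.} The map \(\phi\) is \(C^\infty\) on \(\R^d\setminus\{0\}\), hence smooth on a neighbourhood of \(\mu\ne0\). Write \(u=\mu/\|\mu\|\). Its first derivative is
\[
  D\phi(\mu)[h]=\|\mu\|h+\mu\,(u^\top h).
\]
Its second derivative is
\[
  D^2\phi(\mu)[h,h]=2h\,(u^\top h)+\frac{\mu}{\|\mu\|}\Bigl(\|h\|^2-(u^\top h)^2\Bigr).
\]
This comes from the product rule together with \(D^2\|\cdot\|(\mu)[h,h]=(\|h\|^2-(u^\top h)^2)/\|\mu\|\). Insert \(h=\varepsilon\xi\) and take expectations. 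The first-order term vanishes since \(\E\xi=0\). For the second-order term, \(\tfrac12\E\,D^2\phi(\mu)[\xi,\xi]\), we use
\[
  \E[\xi\,\xi^\top u]=\Sigma u,\qquad
  \E\|\xi\|^2=\operatorname{tr}\Sigma,\qquad
  \E(u^\top\xi)^2=\mu^\top\Sigma\mu/\|\mu\|^2 .
\]
Multiplying by \(\varepsilon^2\), these produce exactly the bracket in \eqref{eq:small-noise-directional-bias}. The final remark then follows because \(\Sigma\mu\parallel\mu\) iff \(\mu\) is an eigenvector of \(\Sigma\). When that holds, the whole leading term is a multiple of \(\mu\), i.e.\ a scalar shrinkage.

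\textbf{Main obstacle: the remainder \(O(\varepsilon^3)\).} Taylor's formula is only valid on a ball \(\|h\|\le\|\mu\|/2\), where \(\phi\) is smooth with bounded third derivative (of order \(1/\|\mu\|\)). There the cubic remainder is \(\le C\varepsilon^3\|\xi\|^3\). Off that ball, i.e.\ on the event \(\{\varepsilon\|\xi\|>\|\mu\|/2\}\), we can no longer Taylor-expand. Instead we bound the full difference crudely using only that \(\phi\) is \(C^1\) with \(\|D\phi(w)\|\le2\|w\|\). On this event \(\|\mu\|\le 2\varepsilon\|\xi\|\), so
\[
  \|\phi(\mu+h)-\phi(\mu)-D\phi(\mu)h-\tfrac12D^2\phi(\mu)[h,h]\|\lesssim \|h\|^2(\|\mu\|+\|h\|)+\|h\|^2\lesssim \varepsilon^3\|\xi\|^3(1+1/\|\mu\|).
\]
The last factor uses the inequality \(1\le 2\varepsilon\|\xi\|/\|\mu\|\), valid on this event. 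Taking expectations with \(\E\|\xi\|^3<\infty\), both regions together yield the \(O(\varepsilon^3)\) remainder, with a constant depending on \(\|\mu\|\).
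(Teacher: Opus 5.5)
Your proposal is correct and follows essentially the paper's route: the same pointwise expansion of $T_\lambda$ (your exact identity $\frac{1}{1+x}=1-x+\frac{x^2}{1+x}$ just makes the uniform $\lambda^2\|v\|^3$ remainder explicit), and a second-order expansion of $v\|v\|$ at $\mu$ that is the paper's ``expand the norm, then multiply by $\mu+\varepsilon\xi$'' computation in product-rule form, with a more careful remainder treatment than the paper gives. The only slip is harmless: from $\|D\phi(w)\|\le 2\|w\|$ the off-ball estimate is $\|h\|(\|\mu\|+\|h\|)+\|h\|^2\lesssim\|h\|^2\le 2\|h\|^3/\|\mu\|$ rather than $\|h\|^2(\|\mu\|+\|h\|)+\|h\|^2$, and the $O(\varepsilon^3)$ conclusion is unchanged.
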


\begin{proof}
The expansion
\[
  \frac{v}{1+\lambda\|v\|}=v-\lambda v\|v\|+O(\lambda^2\|v\|^3)
\]
with \(v=G\) and \(v=\mu\) gives \eqref{eq:vector-first-order-bias}.  For the small-noise expansion, write \(r=\|\mu\|\) and \(u=\mu/r\).  Taylor expansion of the norm gives
\[
  \|\mu+\varepsilon\xi\|
  =r+\varepsilon u^\top\xi+
  \frac{\varepsilon^2}{2r}\bigl(\|\xi\|^2-(u^\top\xi)^2\bigr)+O(\varepsilon^3\|\xi\|^3).
\]
Multiplying by \(\mu+\varepsilon\xi\), taking expectations, and using \(\E\xi=0\) yields
\[
  \E[(\mu+\varepsilon\xi)\|\mu+\varepsilon\xi\|]-\mu r
  =\varepsilon^2\left[\Sigma u+
  \frac{\mu}{2r}\bigl(\operatorname{tr}\Sigma-u^\top\Sigma u\bigr)\right]+O(\varepsilon^3),
\]
which is \eqref{eq:small-noise-directional-bias}.
\end{proof}


\section{Model, Assumptions, and Deterministic-Envelope Recursions}\label{sec:model-recursions}

We now turn the oracle-level mechanism into the concrete tamed Langevin kernels used in the rest of the paper.  This section fixes the deterministic setting: the drift assumptions, the envelope conditions, and the exact-gradient fixed-stepsize recursion.  The stochastic-gradient kernel is introduced later in  Section~\ref{sec:sg-extension}, after the exact-gradient stationary structure has been established.

\subsection{Langevin model and structural assumptions}

The dataset \(z\) is fixed and
\[
  b(x)=b_z(x)=-\nabla F_z(x).
\]
The Langevin generator is
\begin{equation}\label{eq:generator}
  \Lgen f(x)=\langle b(x),\nabla f(x)\rangle+\beta^{-1}\Delta f(x).
\end{equation}
Throughout the paper, constants may depend on fixed structural parameters, such as \(d,\beta,p\), polynomial-growth orders, and Lyapunov constants, but not on the iteration index or on sufficiently small \(\eta\), unless explicitly stated.

\begin{assumption}[Polynomial local Lipschitz drift]\label{assump:poly-lip}
There exist constants \(L>0\) and \(r>0\) such that for all \(x,y\in\R^d\),
\begin{equation}\label{eq:poly-lip}
 \|b(x)-b(y)\|\le L(1+\|x\|^r+\|y\|^r)\|x-y\|.
\end{equation}
Equivalently, if \(b\in C^1\), 
\begin{equation}\label{eq:grad-b-growth}
 \|\nabla b(x)\|_{\op}\le L(1+\|x\|^r).
\end{equation}
\end{assumption}

\begin{assumption}[Dissipativity]\label{assump:dissipative}
There exist constants \(m>0\) and \(b_0\ge0\) such that
\begin{equation}\label{eq:dissipative}
  \langle x,b(x)\rangle\le b_0-m\|x\|^{r+2},\qquad x\in\R^d.
\end{equation}
\end{assumption}

\begin{assumption}[Taming envelope]\label{assump:envelope}
There is a deterministic continuous envelope \(A:\R^d\to[0,\infty)\) and constants \(C_A,L_A>0\) such that
\begin{equation}\label{eq:envelope-dom}
  \|\nabla b(x)\|_{\op}\le A(x)\le L_A(1+\|x\|^r),
\end{equation}
\begin{equation}\label{eq:taming-compatible}
  \frac{\|b(x)\|}{1+A(x)}\le C_A(1+\|x\|),\qquad x\in\R^d.
\end{equation}
\end{assumption}

The assumptions have separate roles.  Assumptions~\ref{assump:poly-lip}--\ref{assump:dissipative} describe a polynomial-growth dissipative drift.  Assumption~\ref{assump:envelope} describes an envelope that controls the far-field size of the explicit drift increment.  The compatibility bound \eqref{eq:taming-compatible} is the key point; it will be used below to make the tamed increment effectively linear.

Here, deterministic means that the denominator is fixed by the current state and the dataset, and is not recomputed from the fresh stochastic-gradient sample used in one update. This distinction is used again in the stochastic-gradient extension in Section~\ref{sec:sg-extension}.

For many common empirical risks with polynomial growth, the preceding assumptions can be checked directly. The key point is Assumption~\ref{assump:envelope}. The following quartic example gives a simple choice of \(A\) satisfying it.

\begin{example}[Quartic empirical risk]\label{ex:quartic-risk}
Consider the regularized quartic empirical risk
\begin{equation}\label{eq:quartic-risk}
  F_z(w)=\frac1N\sum_{i=1}^N \frac14(a_i^\top w-y_i)^4+\frac\lambda2\|w\|^2,
  \qquad \lambda>0.
\end{equation}
Assume that the data are bounded and that the quartic loss is coercive in all directions, for instance under a standard nondegeneracy condition on the covariates.  Then \(b(w)=-\nabla F_z(w)\) has cubic growth, while \(\nabla b(w)\) has quadratic growth.  Hence the polynomial local-Lipschitz condition holds with \(r=2\), and the coercivity gives the order-four dissipativity required in Assumption~\ref{assump:dissipative}.  Moreover, a Hessian-type deterministic envelope
\[
  A(w)=C_z(1+\|w\|^2)
\]
satisfies
\[
  \|\nabla b(w)\|_{\op}\le A(w),
  \qquad
  \frac{\|b(w)\|}{1+A(w)}\le C(1+\|w\|).
\]
Thus this quartic empirical-risk class gives a non-globally Lipschitz example covered by the structural assumptions.
\end{example}

\subsection{Localized deterministic envelope family}\label{subsec:LDE family}

We next define a family of localized deterministic envelopes.  

\begin{definition}[Localized deterministic-envelope family]\label{def:hybrid-envelope}
Let \(\bar A:\R^d\to[0,\infty)\) be a deterministic growth indicator.  For parameters \(R\ge0\), \(0<\theta\le1\), a threshold \(S_\eta\ge R\), and activation indicators \(c_s,c_h\in\{0,1\}\), define
\begin{equation}\label{eq:hybrid-envelope}
  A_\eta^{c_s,c_h}(x)
  \coloneqq c_s\big(\bar A(x)-R\big)_+^\theta+c_h\big(\bar A(x)-S_\eta\big)_+.
\end{equation}
\end{definition}

The threshold \(R\) defines the safe region
\[
  \mathcal S_R \coloneqq \{x:\bar A(x)\le R\}.
\]
If \(S_\eta>R\), the intermediate region is
\[
  \mathcal I_{R,S_\eta} \coloneqq \{x:R<\bar A(x)\le S_\eta\}.
\]
The far-tail region is
\[
  \mathcal T_{S_\eta} \coloneqq \{x:\bar A(x)>S_\eta\}.
\]
We call these envelopes localized because they are inactive on the safe region and become active only after the prescribed growth thresholds are crossed.  The exponent \(\theta\) defines the soft component: it lets taming turn on gradually after the safe region, so that the intermediate region can be controlled without unnecessary drift modification. The second term is therefore the hard-tail safeguard, active only in \(\mathcal T_{S_\eta}\).  The hybrid envelope combines soft intermediate control with hard-tail protection.  The local soft, local hard, and hybrid envelopes are
\[
  A_\eta^{\rm soft} \coloneqq A_\eta^{1,0},\qquad
  A_\eta^{\rm hard} \coloneqq A_\eta^{0,1},\qquad
  A_\eta^{\rm hyb} \coloneqq A_\eta^{1,1}.
\]
When no distinction among these choices is needed, we write \(A_\eta\) for a generic deterministic envelope.

The localized family is based on the idea that different regions require different amounts of taming. In the safe region, \(A_\eta=0\), so the drift is used without taming. In the intermediate region, the drift may already be large enough to need a mild correction; the soft component plays this role, with \(\theta\) controlling its strength. In the far tail, stronger taming is needed, and the hard component provides this safeguard. The thresholds \(R\) and \(S_\eta\) determine these regions, and the hybrid rule combines the soft mild correction with the hard-tail safeguard and serves as a robust default, but the best localized envelope may depend on the problem.

\subsection{Envelope compatibility of localized rules}

The later stability theory uses the effective-linearity consequence of Assumption~\ref{assump:envelope}.  Global envelopes satisfy this condition by assumption.  Localized rules require a separate check.  The check below assumes that the growth indicator itself has the strong stabilization needed in the far tail.  It then shows that rules with a hard-tail safeguard, such as local hard and hybrid envelopes, retain the needed control.  Local soft rules do not satisfy this bound automatically.  The soft component alone may not provide enough far-tail stabilization, which motivates the hard-tail safeguard.

\begin{lemma}[Effective-linearity bound for localized envelopes with a hard-tail safeguard]\label{lem:hybrid-step-control}
Let \(S_\eta=c_S\eta^{-\alpha}\).  Let \(A_\eta^{c_s,1}\) be a localized envelope from \eqref{eq:hybrid-envelope} with a hard-tail safeguard.  Assume that the deterministic growth indicator satisfies
\begin{equation}\label{eq:localized-growth-compatibility}
  \frac{\|b(x)\|}{1+\bar A(x)}\le C_b(1+\|x\|),
  \qquad
  \bar A(x)\le C_b(1+\|x\|^r).
\end{equation}
Then there is a constant \(C\), independent of sufficiently small \(\eta\), such that
\begin{equation}\label{eq:localized-step-control}
  \eta\frac{\|b(x)\|}{1+\eta^\alpha A_\eta^{c_s,1}(x)}
  \le C\eta^{1-\alpha}(1+\|x\|),
  \qquad x\in\R^d .
\end{equation}
Moreover, \(A_\eta^{c_s,1}\) has the same polynomial upper growth as \(\bar A\):
\begin{equation}\label{eq:localized-upper-growth}
  A_\eta^{c_s,1}(x)
  \le C(1+\bar A(x))
  \le C(1+\|x\|^r).
\end{equation}
\end{lemma}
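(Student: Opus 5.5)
We split according to the regions of Definition~\ref{def:hybrid-envelope}, using only the hard-tail term as a lower bound. Since both summands in \eqref{eq:hybrid-envelope} are nonnegative, we have $A_\eta^{c_s,1}(x)\ge(\bar A(x)-S_\eta)_+$. Hence it suffices to prove \eqref{eq:localized-step-control} with $A_\eta^{c_s,1}$ replaced by this smaller quantity. On the complement of the far tail, and in the far tail, we use two different comparisons with $1+\bar A(x)$. In both cases we feed in the first inequality of \eqref{eq:localized-growth-compatibility}, $\|b(x)\|\le C_b(1+\bar A(x))(1+\|x\|)$.

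\emph{Case $\bar A(x)\le 2S_\eta$.} This covers the safe, intermediate and near-tail regions, where the denominator is at least $1$. Since $S_\eta=c_S\eta^{-\alpha}$, we get $1+\bar A(x)\le 1+2c_S\eta^{-\alpha}\le C\eta^{-\alpha}$ for $\eta\le1$. Therefore
\[
\eta\frac{\|b(x)\|}{1+\eta^\alpha A_\eta^{c_s,1}(x)}\le \eta\,C_b(1+\bar A(x))(1+\|x\|)\le C\eta^{1-\alpha}(1+\|x\|).
\]

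\emph{Case $\bar A(x)>2S_\eta$.} Here $\bar A(x)-S_\eta\ge \tfrac12\bar A(x)$, so
\[
1+\eta^\alpha A_\eta^{c_s,1}(x)\ge 1+\tfrac12\eta^\alpha\bar A(x)\ge \tfrac12\eta^\alpha(1+\bar A(x)),
\]
where the last inequality uses $\eta^\alpha\le1$. It follows that
\[
\eta\frac{\|b(x)\|}{1+\eta^\alpha A_\eta^{c_s,1}(x)}\le 2\eta^{1-\alpha}\frac{\|b(x)\|}{1+\bar A(x)}\le 2C_b\,\eta^{1-\alpha}(1+\|x\|).
\]
This is the only place where the hard term is essential. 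Without it, the soft term $(\bar A-R)_+^\theta$ with $\theta<1$ would give only $\eta^\alpha\bar A^\theta$ in the denominator, which cannot absorb $\bar A$. This failure is exactly the one the paper attributes to local soft rules. Choosing the split point $2S_\eta$ rather than $S_\eta$ is the step that needs a little care, since just above $S_\eta$ the hard term is small.

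For \eqref{eq:localized-upper-growth}, we bound each term separately. Using $\theta\le1$, we have $(\bar A-R)_+^\theta\le \max(1,\bar A)\le 1+\bar A$, and $(\bar A-S_\eta)_+\le\bar A$. Summing gives $A_\eta^{c_s,1}\le 2(1+\bar A)$, and the second inequality in \eqref{eq:localized-growth-compatibility} then yields the polynomial bound $C(1+\|x\|^r)$. All constants depend only on $C_b,c_S$ and are uniform in $\eta\in(0,1]$.
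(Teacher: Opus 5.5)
Your proof is correct and takes essentially the same route as the paper. The paper uses four regimes: safe, intermediate, moderate tail \(S_\eta<\bar A\le 2S_\eta\), and extreme tail \(\bar A>2S_\eta\). In the first three it uses only the lower bound \(1\) on the denominator, and in the last it uses \((\bar A-S_\eta)_+\ge\frac12\bar A\). Your two cases are the same argument with the first three regimes merged. Your explicit step \(1+\frac12\eta^\alpha\bar A\ge\frac12\eta^\alpha(1+\bar A)\) also fills in a detail the paper leaves implicit.
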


\begin{proof}
We first check that the localized envelope has the required polynomial growth. Since \(0<\theta\le1\), \(R\ge0\), and \(c_s\) is fixed,
\[
A_\eta^{c_s,1}(x)
=c_s(\bar A(x)-R)_+^\theta+(\bar A(x)-S_\eta)_+
\le C(1+\bar A(x)).
\]
Together with \(\bar A(x)\le C(1+\|x\|^r)\), this gives
\[
A_\eta^{c_s,1}(x)\le C(1+\|x\|^r).
\]

On the safe region \(\bar A(x)\le R\), the localized envelope is inactive.  Since \(R\le S_\eta\) for all sufficiently small \(\eta\), \eqref{eq:localized-growth-compatibility} gives
\[
  \eta\frac{\|b(x)\|}{1+\eta^\alpha A_\eta^{c_s,1}(x)}
  =\eta\|b(x)\|
  \le C\eta(1+R)(1+\|x\|)
  \le C\eta(1+S_\eta)(1+\|x\|).
\]
With \(S_\eta=c_S\eta^{-\alpha}\), this is bounded by
\(C\eta^{1-\alpha}(1+\|x\|)\).

On the intermediate region \(R<\bar A(x)\le S_\eta\), \(1+\eta^\alpha A_\eta^{c_s,1}(x)\ge1\).  Hence
\[
  \eta\frac{\|b(x)\|}{1+\eta^\alpha A_\eta^{c_s,1}(x)}
  \le \eta\|b(x)\|
  \le C\eta(1+\bar A(x))(1+\|x\|)
  \le C\eta(1+S_\eta)(1+\|x\|)
  \le C\eta^{1-\alpha}(1+\|x\|).
\]
This is the region where the soft part may be active, but the growth indicator is still below the hard-tail threshold.

Then, for the far-tail region \(\bar A(x)>S_\eta\), we first consider the moderate-tail region \(S_\eta<\bar A(x)\le 2S_\eta\).   Therefore the same argument as in the preceding case, using only the lower bound \(1+\eta^\alpha A_\eta^{c_s,1}(x)\ge1\), gives
\[
  \eta\frac{\|b(x)\|}{1+\eta^\alpha A_\eta^{c_s,1}(x)}
  \le C\eta^{1-\alpha}(1+\|x\|).
\]

Finally, on the extreme-tail region \(\bar A(x)>2S_\eta\), the hard-tail safeguard controls the growth indicator:
\[
  (\bar A(x)-S_\eta)_+\ge \frac12\bar A(x).
\]
Hence
\[
  1+\eta^\alpha A_\eta^{c_s,1}(x)
  \ge 1+\frac12\eta^\alpha\bar A(x).
\]
Using \eqref{eq:localized-growth-compatibility},
\[
  \eta\frac{\|b(x)\|}{1+\eta^\alpha A_\eta^{c_s,1}(x)}
  \le
  2\eta^{1-\alpha}\frac{\|b(x)\|}{1+\bar A(x)}
  \le C\eta^{1-\alpha}(1+\|x\|).
\]
Combining the four regimes proves \eqref{eq:localized-step-control}.  
\end{proof}

The preceding proof also gives a useful diagnostic for local soft rules. The far-tail part of Lemma~\ref{lem:hybrid-step-control} uses the fact that the envelope is comparable with \(\bar A\) when \(\bar A(x)\) is large. A purely soft rule
\[
  A_\eta^{\rm soft}(x)=(\bar A(x)-R)_+^\theta
\]
does not have this property when \(0<\theta<1\), since
\[
  \frac{A_\eta^{\rm soft}(x)}{\bar A(x)}
  \sim \bar A(x)^{\theta-1}\to0
  \qquad\text{as }\bar A(x)\to\infty .
\]
Thus the soft component gives only sublinear far-tail growth. The hard-tail component is added to restore the linear comparability needed in the Lyapunov argument.

\subsection{Exact-gradient deterministic-envelope recursion}\label{subsec:exact-gradient-recursion}

We now define the exact-gradient Markov chain that serves as the base stationary object. 
Fix \(\alpha\in(0,1/2)\).  For a constant stepsize \(\eta\in(0,\eta_0]\), define
\begin{equation}\label{eq:tamed-drift}
  b^T_\eta(x)=\frac{b(x)}{1+\eta^\alpha A_\eta(x)},
\end{equation}
where \(A_\eta\) may be the global envelope \(A\) from Assumption~\ref{assump:envelope} or any localized deterministic envelope satisfying the same effective-linearity condition.  

The exact-gradient tamed Langevin recursion is
\begin{equation}\label{eq:tamed-chain}
  Y_{k+1}=Y_k+\eta b^T_\eta(Y_k)+\sqrt{2\beta^{-1}\eta}\,\xi_{k+1},
\end{equation}
where \((\xi_k)\) are independent standard Gaussian vectors in \(\R^d\).  We denote the Markov kernel of this chain by \(\Qeta\).  Equivalently, if \(X_0=x\) and
\[
  X_1=x+\Delta_\eta(x),\qquad
  \Delta_\eta(x) \coloneqq \eta b^T_\eta(x)+\sqrt{2\beta^{-1}\eta}\,\xi,
\]
then, for every bounded measurable test function \(\varphi\),
\begin{equation}\label{eq:kernel-conditional-expectation}
  \Qeta \varphi(x)
  =\E\big[\varphi(X_1)\mid X_0=x\big]
  =\E\big[\varphi(x+\Delta_\eta(x))\big].
\end{equation}

The exponent \(\alpha\in(0,1/2)\) controls the strength of taming.  Smaller \(\alpha\) gives stronger stabilization, while values closer to \(1/2\) keep the modified drift closer to the original drift.  The restriction \(\alpha<1/2\) is used in the later residual estimates.  In the experiments we use \(\alpha=0.45\) to avoid overly aggressive taming while retaining visible stabilization.

\section{Lyapunov Stability and Invariant Measures}\label{sec:moment-stability}

We now prove Lyapunov stability and invariant-measure existence for the exact-gradient deterministic-envelope kernel \(\Qeta\).  The key input is the effective-linearity bound in Assumption~\ref{assump:envelope}.  For localized envelopes with a hard-tail safeguard, the corresponding bound is verified in Lemma~\ref{lem:hybrid-step-control}.

We first collect two elementary estimates used in the Lyapunov drift argument.  The first one controls the tamed drift under the envelope condition, while the second one converts one-step updates into increments of the polynomial Lyapunov function.

\begin{lemma}[Effective-linearity bound for the tamed drift step]\label{lem:effective-linear}
Under Assumption~\ref{assump:envelope}, for every \(\eta\in(0,1]\),
\begin{equation}\label{eq:effective-linear}
  \eta\|b^T_\eta(x)\|\le C\eta^{1-\alpha}(1+\|x\|),\qquad x\in\R^d.
\end{equation}
\end{lemma}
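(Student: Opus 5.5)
The plan is to work directly from the definition \eqref{eq:tamed-drift} with \(A_\eta=A\) the global envelope of Assumption~\ref{assump:envelope}, and reduce everything to the compatibility bound \eqref{eq:taming-compatible}. The only thing to show is that, for \(\eta\in(0,1]\), the tamed denominator \(1+\eta^\alpha A(x)\) dominates \(\eta^\alpha\bigl(1+A(x)\bigr)\). Multiplying the drift by \(\eta\) then costs only the factor \(\eta^{1-\alpha}\).

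The key elementary step is the following. Since \(\eta\le1\) and \(\alpha>0\), we have \(\eta^\alpha\le1\). Hence
\[
  1+\eta^\alpha A(x)\ge \eta^\alpha+\eta^\alpha A(x)=\eta^\alpha\bigl(1+A(x)\bigr),
\]
using \(A\ge0\). Therefore
\[
  \eta\|b^T_\eta(x)\|=\eta\frac{\|b(x)\|}{1+\eta^\alpha A(x)}
  \le \eta^{1-\alpha}\frac{\|b(x)\|}{1+A(x)}
  \le C_A\eta^{1-\alpha}(1+\|x\|),
\]
where the last inequality is \eqref{eq:taming-compatible}. This gives \eqref{eq:effective-linear} with \(C=C_A\), uniformly in \(\eta\in(0,1]\) and \(x\in\R^d\).

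For the localized envelopes \(A_\eta^{c_s,1}\), the same conclusion was already established in Lemma~\ref{lem:hybrid-step-control}. There, \eqref{eq:localized-step-control} plays the role of \eqref{eq:taming-compatible}, under \eqref{eq:localized-growth-compatibility} and for sufficiently small \(\eta\). In that case I would simply cite it, possibly after shrinking \(\eta_0\).

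I expect no real obstacle here. The one point to watch is that the constant does not degenerate as \(\eta\downarrow0\), and this is guaranteed because the comparison \(1+\eta^\alpha A\ge\eta^\alpha(1+A)\) is uniform.
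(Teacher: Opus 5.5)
Your proof is correct and matches the paper's argument: both use $1+\eta^\alpha A(x)\ge \eta^\alpha(1+A(x))$ for $\eta\le1$, and then apply the compatibility bound \eqref{eq:taming-compatible}, which gives $C=C_A$. Your remark about localized envelopes goes beyond the statement. That case is handled by Lemma~\ref{lem:hybrid-step-control}, where \eqref{eq:localized-growth-compatibility} plays the role of \eqref{eq:taming-compatible} and \eqref{eq:localized-step-control} is the resulting bound, so it is not needed here.
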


\begin{proof}
Since \(1+\eta^\alpha A(x)\ge \eta^\alpha(1+A(x))\),
\[
  \eta\|b^T_\eta(x)\|
  =\eta\frac{\|b(x)\|}{1+\eta^\alpha A(x)}
  \le \eta^{1-\alpha}\frac{\|b(x)\|}{1+A(x)}
  \le C\eta^{1-\alpha}(1+\|x\|).
\]
\end{proof}

\begin{lemma}[Global increment bound for polynomial Lyapunov functions]\label{lem:poly-increment}
Let \(q\ge2\).  There exists a constant \(C_q<\infty\) such that, for all \(x,u\in\R^d\),
\begin{equation}\label{eq:poly-taylor}
 V_q(x+u)-V_q(x)
 \le q\|x\|^{q-2}\langle x,u\rangle
 +C_q(1+\|x\|^{q-2})\|u\|^2+C_q\|u\|^q,
 \qquad V_q(x)=1+\|x\|^q.
\end{equation}
\end{lemma}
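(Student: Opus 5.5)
We will prove \eqref{eq:poly-taylor} by a second-order Taylor expansion of \(\phi(x)=\|x\|^q\) along the segment from \(x\) to \(x+u\), splitting into the cases \(\|u\|\le\|x\|/2\) and \(\|u\|>\|x\|/2\). Since \(q\ge2\), \(\phi\in C^2(\R^d)\) (for \(q=2\) it is a quadratic), with
\[
 \nabla\phi(x)=q\|x\|^{q-2}x,\qquad
 \nabla^2\phi(x)=q\|x\|^{q-2}I+q(q-2)\|x\|^{q-4}xx^\top ,
\]
so that \(\|\nabla^2\phi(y)\|_{\op}\le q(q-1)\|y\|^{q-2}\) for all \(y\) (at \(y=0\) only when \(q>2\), or trivially when \(q=2\)). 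Taylor's formula with integral remainder gives
\[
 V_q(x+u)-V_q(x)=q\|x\|^{q-2}\langle x,u\rangle+\int_0^1(1-s)\,u^\top\nabla^2\phi(x+su)\,u\dd s ,
\]
which already exhibits the leading term of \eqref{eq:poly-taylor}. The remainder is bounded by \(\tfrac{q(q-1)}2\sup_{s\in[0,1]}\|x+su\|^{q-2}\|u\|^2\), and it remains to control this supremum.

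\emph{Case \(\|u\|\le\|x\|\).} Every point of the segment satisfies \(\|x+su\|\le2\|x\|\), so the remainder is at most \(C_q\|x\|^{q-2}\|u\|^2\le C_q(1+\|x\|^{q-2})\|u\|^2\).

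\emph{Case \(\|u\|>\|x\|\).} Now \(\|x+su\|\le 2\|u\|\), so the remainder is at most \(C_q\|u\|^{q-2}\|u\|^2=C_q\|u\|^q\). Alternatively, one can bypass Taylor here: use \(V_q(x+u)-V_q(x)\le(\|x\|+\|u\|)^q\le 2^q\|u\|^q\), and absorb the linear term via \(|q\|x\|^{q-2}\langle x,u\rangle|\le q\|x\|^{q-1}\|u\|\le q\|u\|^q\). Either route gives the bound in this regime.

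Combining the two cases and taking \(C_q\) to be the maximum of the constants obtained, which depend only on \(q\), yields \eqref{eq:poly-taylor} for all \(x,u\in\R^d\). The only mildly delicate point is the regularity of \(\phi\) at the origin when \(2<q<4\): the Hessian formula contains \(\|x\|^{q-4}\), but \(\|x\|^{q-4}xx^\top\) is \(O(\|x\|^{q-2})\to0\), so \(\phi\in C^2\) and the operator-norm bound extends continuously to \(y=0\). Hence the integral form of Taylor's theorem applies along any segment, including segments through the origin. No earlier result of the paper is needed.
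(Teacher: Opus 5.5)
Your proof is correct and takes essentially the same route as the paper: a second-order Taylor bound with a Hessian of size $\|x\|^{q-2}$ for small increments, and a bound of order $\|u\|^q$ for large increments. Your alternative argument in the second case is exactly the paper's triangle-inequality argument. The differences are cosmetic: you split at $\|u\|=\|x\|$ rather than $\|x\|/2$ (your opening sentence still says $\|x\|/2$, so make it consistent), and your remark that $\|y\|^q$ is globally $C^2$ for $q\ge2$ removes the need for the paper's separate treatment of $x=0$ and of segments near the origin.
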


\begin{proof}
When \(x=0\), the estimate reduces to
\[
  \|u\|^q \le C_q\|u\|^2+C_q\|u\|^q,
\]
which holds for \(C_q\ge1\). We therefore assume \(x\ne0\).

Set \(\phi(y)=\|y\|^q\).  If \(\|u\|\le \|x\|/2\), then the segment \(x+tu\), \(0\le t\le1\), stays away from the origin and
\[
  \|\nabla^2\phi(x+tu)\|\le C_q\|x\|^{q-2}.
\]
Taylor's formula gives
\[
  \phi(x+u)-\phi(x)
  \le \langle \nabla\phi(x),u\rangle+C_q\|x\|^{q-2}\|u\|^2,
\]
and \(\nabla\phi(x)=q\|x\|^{q-2}x\).  

For the large-increment case, suppose \(\|u\|>\|x\|/2\). Then \(\|x\|\le2\|u\|\). By the triangle inequality,
\[
  \|x+u\|^q\le 2^{q-1}(\|x\|^q+\|u\|^q).
\]
Also,
\[
  \|\nabla\phi(x)\|=q\|x\|^{q-1}.
\]
Therefore,
\[
\begin{aligned}
  \phi(x+u)-\phi(x)-\langle\nabla\phi(x),u\rangle
  &\le 2^{q-1}(\|x\|^q+\|u\|^q)
      +q\|x\|^{q-1}\|u\|  \\
  &\le C_q\|u\|^q,
\end{aligned}
\]
where the last step uses \(\|x\|\le2\|u\|\).

Combining the two cases, and adding a nonnegative term \(C_q(1+\|x\|^{q-2})\|u\|^2\)  proves the claim.
\end{proof}

\begin{proposition}[Polynomial Lyapunov stability]\label{prop:moment-stability}
Assume Assumptions~\ref{assump:poly-lip}--\ref{assump:envelope}.  For every \(q\ge2\), there exist constants \(c_q>0\), \(C_q<\infty\), and \(\eta_0>0\), independent of \(x\) and \(\eta\), such that for all \(\eta\in(0,\eta_0]\), the tamed chain \eqref{eq:tamed-chain} satisfies
\begin{equation}\label{eq:uniform-moment-trajectory}
  \sup_{k\ge0}\E[1+\|Y_k\|^q]
  \le C_q(1+\E\|Y_0\|^q).
\end{equation}
Equivalently, with \(V_q(x)=1+\|x\|^q\),
\begin{equation}\label{eq:drift-condition}
  \Qeta V_q(x)\le (1-c_q\eta)V_q(x)+C_q\eta.
\end{equation}
\end{proposition}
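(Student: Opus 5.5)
The plan is to prove the one-step drift inequality \eqref{eq:drift-condition} directly, and then obtain \eqref{eq:uniform-moment-trajectory} by iterating it. Fix \(x\) and apply Lemma~\ref{lem:poly-increment} with \(u=\Delta_\eta(x)=\eta b^T_\eta(x)+\sqrt{2\beta^{-1}\eta}\,\xi\), then take expectations over \(\xi\). This splits \(\Qeta V_q(x)-V_q(x)\) into three pieces: a linear term \(q\|x\|^{q-2}\E\langle x,\Delta_\eta(x)\rangle\), a quadratic term \(C_q(1+\|x\|^{q-2})\E\|\Delta_\eta(x)\|^2\), and a top-order term \(C_q\E\|\Delta_\eta(x)\|^q\). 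Since \(\E\xi=0\), the Gaussian part drops out of the linear term, which reduces to \(\eta q\|x\|^{q-2}\langle x,b^T_\eta(x)\rangle\).

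\textbf{Linear term (the key step).} I will use that the denominator is a positive scalar, so that Assumption~\ref{assump:dissipative} gives
\[
  \langle x,b^T_\eta(x)\rangle\le \frac{b_0-m\|x\|^{r+2}}{1+\eta^\alpha A_\eta(x)}
  \le b_0-\frac{m\|x\|^{r+2}}{1+\eta^\alpha A_\eta(x)} .
\]
The denominator then needs an upper bound that is uniform in \(\eta\le1\). For the global envelope this is \(1+\eta^\alpha A(x)\le 1+L_A(1+\|x\|^r)\) by \eqref{eq:envelope-dom}; for localized envelopes with a hard-tail safeguard it is \eqref{eq:localized-upper-growth}. In both cases the denominator is at most \(C(1+\|x\|^r)\), hence \(m\|x\|^{r+2}/(1+\eta^\alpha A_\eta(x))\ge c\|x\|^2-C\). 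The linear term is therefore at most \(\eta q\|x\|^{q-2}(C-c\|x\|^2)\le -2c'\eta\|x\|^q+C\eta\), where the last step uses Young's inequality on \(\|x\|^{q-2}\). This is where the dissipation survives taming: taming can weaken the superlinear restoring force, but at worst down to quadratic-order dissipation, which is exactly what \(V_q\) needs. I expect this step to be the main point to get right, because the constants must not degrade as \(\eta\to0\), and \(\eta^\alpha\le1\) is what guarantees that.

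\textbf{Remaining terms.} These are controlled by Lemma~\ref{lem:effective-linear}, or by \eqref{eq:localized-step-control} for localized rules, which give \(\eta\|b^T_\eta(x)\|\le C\eta^{1-\alpha}(1+\|x\|)\).

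For the quadratic term,
\[
  \E\|\Delta_\eta\|^2\le C\eta^{2-2\alpha}(1+\|x\|^2)+C\eta ,
\]
so that term is at most \(C\eta\cdot\eta^{1-2\alpha}(1+\|x\|^q)+C\eta(1+\|x\|^{q-2})\). The first piece is \(o(\eta)V_q\) because \(\alpha<1/2\). The second is at most \(c'\eta\|x\|^q/2+C\eta\), again by Young's inequality.

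For the top-order term,
\[
  \E\|\Delta_\eta\|^q\le C\eta^{q(1-\alpha)}(1+\|x\|^q)+C\eta^{q/2},
\]
and both exponents satisfy \(q(1-\alpha)>1\) and \(q/2\ge1\) because \(q\ge2\) and \(\alpha<1/2\). So this term is at most \(o(\eta)V_q+C\eta\).

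Collecting the three pieces gives \(\Qeta V_q\le V_q-c'\eta\|x\|^q+o(\eta)V_q+C\eta\). Choosing \(\eta_0\) small enough that the \(o(\eta)\) coefficient is below \(c'/2\), and rewriting \(\|x\|^q=V_q-1\), yields \eqref{eq:drift-condition} with \(c_q=c'/2\).

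\textbf{Moment bound.} Iterating \(\E V_q(Y_{k+1})\le(1-c_q\eta)\E V_q(Y_k)+C_q\eta\) gives \(\E V_q(Y_k)\le \E V_q(Y_0)+C_q/c_q\) uniformly in \(k\), which is \eqref{eq:uniform-moment-trajectory}.
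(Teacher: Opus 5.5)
Your proposal is correct and follows essentially the same route as the paper. Both arguments apply Lemma~\ref{lem:poly-increment} to \(\Delta_\eta(x)\) and control the linear term with the lower bound \(\|x\|^{r+2}/(1+\eta^\alpha A(x))\ge c\|x\|^2-C\) plus Young's inequality. Both then use Lemma~\ref{lem:effective-linear} together with \(\alpha<1/2\) to absorb the quadratic and \(q\)-th order remainders. Your explicit iteration of the drift inequality to obtain \eqref{eq:uniform-moment-trajectory} fills in a step the paper only labels ``equivalently'', and it is the correct direction of implication.
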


\begin{proof}
Throughout this proof, \(C_q\) denotes a finite positive constant depending only on \(q\) and the model constants; its value may increase from line to line.

First, we rewrite the kernel action in one-step form.  By the conditional-expectation representation \eqref{eq:kernel-conditional-expectation}, applied to the Lyapunov function \(V_q\),
\[
  \Qeta V_q(x)
  =\E\big[V_q(X_1)\mid X_0=x\big]
  =\E\big[V_q(x+\Delta_\eta(x))\big],
\]
where
\[
  \Delta_\eta(x)=\eta b^T_\eta(x)+\sqrt{2\beta^{-1}\eta}\,\xi,
  \qquad \xi\sim N(0,I_d).
\]
Applying Lemma~\ref{lem:poly-increment} with \(u=\Delta_\eta(x)\) and taking expectation gives the decomposition
\begin{align}
  \Qeta V_q(x)-V_q(x)
  &\le q\eta\|x\|^{q-2}\langle x,b^T_\eta(x)\rangle  \notag\\
  &\quad +C_q(1+\|x\|^{q-2})\E\|\Delta_\eta(x)\|^2
       +C_q\E\|\Delta_\eta(x)\|^q .
  \label{eq:lyapunov-basic-decomposition}
\end{align}
Here the Gaussian part does not contribute to the first-order term because \(\E\xi=0\).

We next estimate the first-order drift term.  By Assumption~\ref{assump:dissipative},
\[
  \langle x,b^T_\eta(x)\rangle
  =\frac{\langle x,b(x)\rangle}{1+\eta^\alpha A(x)}
  \le \frac{b_0}{1+\eta^\alpha A(x)}
       -m\frac{\|x\|^{r+2}}{1+\eta^\alpha A(x)}.
\]
The positive part is bounded by \(b_0\).  For the negative part, Assumption~\ref{assump:envelope} and \(0<\eta\le1\) give
\[
  1+\eta^\alpha A(x)
  \le 1+L_A(1+\|x\|^r)
  \le C_A(1+\|x\|^r).
\]
There exists some constants \(C_r, c_r>0\) such that the denominator reduces the order from \(r+2\) to 2, but does not destroy dissipation:
\begin{equation}\label{eq:key-ratio}
  \frac{\|x\|^{r+2}}{1+\eta^\alpha A(x)}
  \ge C^{-1}_A\frac{\|x\|^{r+2}}{1+\|x\|^r}
  \ge c_r\|x\|^2-C_r .
\end{equation}
Consequently, let \(C= b_0+mC_r, c=mc_r\)
\[
  \langle x,b^T_\eta(x)\rangle
  \le C-c\|x\|^2 .
\]
Multiplying by \(q\eta\|x\|^{q-2}\),
\[
  q\eta\|x\|^{q-2}\langle x,b^T_\eta(x)\rangle
  \le Cq\eta\|x\|^{q-2}-cq\eta\|x\|^q .
\]
By Young's inequality, for any \(\varepsilon>0\) there exists
\(C_{\varepsilon,q}<\infty\) such that
\[
  \|x\|^{q-2}
  \le \varepsilon \|x\|^q+C_{\varepsilon,q}.
\]
Choosing \(\varepsilon>0\) small enough so that
\(qC\varepsilon\le qc/2\), we obtain
\[
  qC\eta\|x\|^{q-2}
  \le \frac{qc}{2}\eta\|x\|^q+C_q\eta .
\]
Therefore,
\[
  q\eta\|x\|^{q-2}\langle x,b^T_\eta(x)\rangle
  \le -\frac{qc}{2}\eta\|x\|^q+C_q\eta .
\]
Set \(\lambda_q=qc/2\).  Then
\begin{equation}\label{eq:dissipative-tamed}
  q\eta\|x\|^{q-2}\langle x,b^T_\eta(x)\rangle
  \le -\lambda_q\eta\|x\|^q+C_q\eta .
\end{equation}
It remains to control the quadratic and higher-order remainders in \eqref{eq:lyapunov-basic-decomposition}. By Lemma~\ref{lem:effective-linear}, the elementary \(\|a+b\|^m\le c_m(\|a\|^m+\|b\|^m)\), and \(\E\|\xi\|^m<\infty\), for every \(m\ge2\), there exists a constant \(C_m>0\) such that
\begin{equation}\label{eq:delta-moment-bound}
  \E\|\Delta_\eta(x)\|^m
  \le C_m\eta^{m(1-\alpha)}(1+\|x\|^m)+C_m\eta^{m/2} .
\end{equation}

Taking \(m=2\) in \eqref{eq:delta-moment-bound} gives
\[
  \E\|\Delta_\eta(x)\|^2
  \le C_2\eta^{2-2\alpha}(1+\|x\|^2)+C_2\eta .
\]
Moreover,
\[
  (1+\|x\|^{q-2})(1+\|x\|^2)
  \le K_q(1+\|x\|^q).
\]
Hence
\[
\begin{aligned}
  (1+\|x\|^{q-2})\E\|\Delta_\eta(x)\|^2
  &\le C_2K_q\eta^{2-2\alpha}(1+\|x\|^q)
      +C_2\eta(1+\|x\|^{q-2}) .
\end{aligned}
\]
By Young's inequality, for every \(\varepsilon>0\) there exists
\(C_{\varepsilon,q}<\infty\) such that
\[
  1+\|x\|^{q-2}
  \le \varepsilon\|x\|^q+C_{\varepsilon,q}.
\]
Choosing \(\varepsilon>0\) so small that
\(C_2\varepsilon\le \lambda_q/8\), we obtain
\[
  C_2\eta(1+\|x\|^{q-2})
  \le \frac{\lambda_q}{8}\eta\|x\|^q+C_q\eta .
\]
After increasing the generic constant \(C_q\), it follows that
\begin{equation}\label{eq:quadratic-remainder-bound}
  (1+\|x\|^{q-2})\E\|\Delta_\eta(x)\|^2
  \le C_q\eta^{2-2\alpha}(1+\|x\|^q)
      +\frac{\lambda_q}{8}\eta\|x\|^q+C_q\eta .
\end{equation}

Taking \(m=q\) in \eqref{eq:delta-moment-bound} gives
\begin{equation}\label{eq:q-remainder-bound}
  \E\|\Delta_\eta(x)\|^q
  \le C_q\eta^{q(1-\alpha)}(1+\|x\|^q)+C_q\eta^{q/2}.
\end{equation}
Since \(\alpha<1/2\) and \(q\ge2\), we have
\[
  2-2\alpha>1,
  \qquad
  q(1-\alpha)>1 .
\]
Thus, after decreasing \(\eta_0>0\) if necessary,
\[
  C_q\eta^{2-2\alpha}\|x\|^q
  +C_q\eta^{q(1-\alpha)}\|x\|^q
  \le \frac{\lambda_q}{8}\eta\|x\|^q,
  \qquad 0<\eta\le\eta_0 .
\]

Combining \eqref{eq:dissipative-tamed},
\eqref{eq:quadratic-remainder-bound}, and
\eqref{eq:q-remainder-bound} with
\eqref{eq:lyapunov-basic-decomposition}, we obtain, for
\(0<\eta\le\eta_0\),
\[
  \Qeta V_q(x)-V_q(x)
  \le -\frac{3\lambda_q}{4}\eta\|x\|^q+C_q\eta .
\]
After renaming \(3\lambda_q/4\) as \(c_q\) and increasing \(C_q\) if necessary,
\[
  \Qeta V_q(x)-V_q(x)
  \le -c_q\eta\|x\|^q+C_q\eta .
\]
Since \(V_q(x)=1+\|x\|^q\), this implies
\[
  \Qeta V_q(x)
  \le (1-c_q\eta)V_q(x)+C_q\eta .
\]
\end{proof}

\begin{theorem}[Invariant measures of the exact-gradient deterministic-envelope kernel]\label{thm:invariant-measure}
Under the assumptions of Proposition~\ref{prop:moment-stability}, there exist constants \(C_q<\infty\) and \(\eta_0>0\) such that, for every \(0<\eta\le\eta_0\), the exact-gradient deterministic-envelope kernel \(\Qeta\) admits at least one invariant probability measure.  Moreover, every invariant probability measure \(\pi_{\eta}\) of \(\Qeta\) satisfies
\begin{equation}\label{eq:inv-moment-vq}
  \int V_q(x)\,\pi_{\eta}(\dd x)\le C_q .
\end{equation}
Equivalently, for every \(q\ge2\) covered by Proposition~\ref{prop:moment-stability},
\begin{equation}\label{eq:inv-moment}
  \sup_{0<\eta\le\eta_0}\int \|x\|^q\,\pi_{\eta}(\dd x)<\infty .
\end{equation}
\end{theorem}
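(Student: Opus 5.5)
The plan is the standard Krylov--Bogoliubov argument driven by the drift inequality \eqref{eq:drift-condition} of Proposition~\ref{prop:moment-stability}, followed by integrating that inequality against an arbitrary invariant law to get the moment bound.

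\emph{Existence.} Fix \(\eta\in(0,\eta_0]\), take \(q=2\), and start the chain at \(Y_0=0\). Iterating \eqref{eq:drift-condition} gives
\[
\E V_2(Y_k)\le (1-c_2\eta)^k V_2(0)+C_2/c_2\le 1+C_2/c_2 .
\]
Hence the Ces\`aro averages \(\mu_n=\frac1n\sum_{k=0}^{n-1}\delta_0\Qeta^k\) satisfy \(\sup_n\int V_2\,d\mu_n<\infty\). By Markov's inequality they are tight, and Prokhorov gives a weakly convergent subsequence \(\mu_{n_j}\to\pi_\eta\).

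To see that \(\pi_\eta\) is invariant, I will show that \(\Qeta\) is Feller. The map \(x\mapsto b^T_\eta(x)\) is continuous, since \(b\) is continuous by Assumption~\ref{assump:poly-lip} and \(A_\eta\) is continuous with denominator \(\ge1\). The kernel is a Gaussian translate, so for bounded continuous \(\varphi\), \(\Qeta\varphi(x)=\E\varphi(x+\eta b^T_\eta(x)+\sqrt{2\beta^{-1}\eta}\,\xi)\) is continuous by dominated convergence. Then \(\mu_n\Qeta-\mu_n=\frac1n(\delta_0\Qeta^n-\delta_0)\) has total variation at most \(2/n\). Passing to the limit along \(n_j\) against bounded continuous \(\varphi\) yields \(\pi_\eta\Qeta=\pi_\eta\).

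\emph{Moment bound for every invariant measure.} This is the step needing care, since an arbitrary \(\pi_\eta\) is not known a priori to integrate \(V_q\), so one cannot simply integrate \eqref{eq:drift-condition} and cancel. I will truncate. Iterating the drift gives \(\Qeta^k V_q(x)\le (1-c_q\eta)^kV_q(x)+C_q/c_q\). For \(M>0\), invariance and monotonicity give
\[
\int (V_q\wedge M)\,d\pi_\eta=\int \Qeta^k(V_q\wedge M)\,d\pi_\eta\le\int \min\bigl\{M,(1-c_q\eta)^kV_q+C_q/c_q\bigr\}\,d\pi_\eta .
\]
Letting \(k\to\infty\) by dominated convergence (the integrand is bounded by \(M\)) gives \(\int (V_q\wedge M)\,d\pi_\eta\le C_q/c_q\). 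Monotone convergence in \(M\) then yields \eqref{eq:inv-moment-vq} with constant \(C_q/c_q\).

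\emph{Uniformity in \(\eta\).} The constants \(c_q,C_q,\eta_0\) in Proposition~\ref{prop:moment-stability} are independent of \(\eta\), so the bound is uniform and \eqref{eq:inv-moment} follows. For each \(q\) one may need to shrink \(\eta_0=\eta_0(q)\), which matches ``every \(q\) covered by'' the proposition. I expect the truncation argument to be the only delicate point. Feller continuity is routine here because of the Gaussian smoothing.
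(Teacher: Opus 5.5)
Your proposal is correct and follows the same route as the paper: Feller continuity and Krylov--Bogoliubov averages made tight by the drift inequality \eqref{eq:drift-condition}, then integrating \eqref{eq:drift-condition} against an arbitrary invariant law to get the bound $C_q/c_q$. Your truncation through $V_q\wedge M$ and the iterates $\Qeta^k$ adds rigor the paper lacks, because the paper cancels $(1-c_q\eta)\int V_q\,\dd\pi_\eta$ from both sides without first showing that $\int V_q\,\dd\pi_\eta<\infty$.
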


\begin{proof}
The kernel \(\Qeta\) is Feller because the one-step update map is continuous in the initial state and the Gaussian noise is fixed.  Fix \(x\in\mathbb R^d\) and define the Krylov--Bogoliubov averages
\[
  \mu_n^{\eta,x}:=\frac1n\sum_{k=0}^{n-1}\delta_x Q_\eta^k .
\]
The drift estimate \eqref{eq:drift-condition} and the trajectory moment bound \eqref{eq:uniform-moment-trajectory} imply
\[
  \sup_{n\ge1}\int V_q(y)\,\mu_n^{\eta,x}(\dd y)<\infty .
\]
Since \(V_q(y)\to\infty\) as \(\|y\|\to\infty\), the family \((\mu_n^{\eta,x})_{n\ge1}\) is tight.  By the standard Krylov--Bogoliubov argument for Feller kernels, any weak limit point is an invariant probability measure of \(\Qeta\).

Let \(\pi_{\eta}\) be any invariant probability measure of \(\Qeta\).  Integrating \eqref{eq:drift-condition} with respect to \(\pi_{\eta}\) gives
\[
  \int V_q\,\dd\pi_{\eta}
  =
  \int \Qeta V_q\,\dd\pi_{\eta}
  \le
  (1-c_q\eta)\int V_q\,\dd\pi_{\eta}+C_q\eta .
\]
Hence
\[
  \int V_q\,\dd\pi_{\eta}\le \frac{C_q}{c_q} .
\]
After increasing \(C_q\) if necessary, this yields \eqref{eq:inv-moment-vq} and therefore \eqref{eq:inv-moment}.
\end{proof}

Theorem~\ref{thm:invariant-measure} shows that invariant measures of \(Q_\eta\) satisfy a uniform Lyapunov bound.  To avoid repeating this condition, we use the following terminology.

\begin{definition}[Admissible families of invariant measures]\label{def:admissible}
Let \((K_\eta)_{0<\eta\le\eta_0}\) be a family of Markov kernels on \(\mathbb R^d\), and fix a moment order \(q\ge2\).  A family \((\pi_\eta)_{0<\eta\le\eta_0}\) is called an \emph{admissible family of invariant measures for \(K_\eta\) at order \(q\)} if, for every \(0<\eta\le\eta_0\),
\[
  \pi_\eta K_\eta=\pi_\eta,
\]
and the selected invariant measures satisfy the uniform Lyapunov-moment bound
\[
  \sup_{0<\eta\le\eta_0}\int_{\mathbb R^d} V_q(x)\,\pi_\eta(\dd x)<\infty .
\]
When several moment orders are needed, admissibility means admissibility at all those orders.
\end{definition}
For the exact-gradient deterministic-envelope kernel \(Q_\eta\),
Theorem~\ref{thm:invariant-measure} shows that any selection of invariant
probability measures is admissible at every polynomial moment order covered by
the theorem.  The uniform \(V_q\)-moment bound will be used below as the
tightness input: since \(V_q(x)\to\infty\) as \(\|x\|\to\infty\), every sequence
\(\eta_j\downarrow0\) has a subsequence along which
\(\pi_{\eta_j}\Rightarrow\pi\).

We also need the same Lyapunov input for the localized exact-gradient kernel with a hard-tail safeguard.  The next proposition records this extension.

\begin{proposition}[Drift estimate for a localized exact-gradient kernel with hard-tail safeguard]\label{prop:hybrid-moment-stability}
Assume Assumptions~\ref{assump:poly-lip} and \ref{assump:dissipative}.  Suppose the computable envelope \(\bar A\) satisfies \eqref{eq:localized-growth-compatibility}, and let \(A_\eta^{c_s,1}\) be a localized envelope from \eqref{eq:hybrid-envelope} with threshold \(S_\eta=c_S\eta^{-\alpha}\).  Then the corresponding localized exact-gradient kernel \(Q_\eta^{c_s,1}\) satisfies the following drift estimate: for every \(q\ge2\) covered by the polynomial growth assumptions, there are constants \(c_q>0\), \(C_q<\infty\), and \(\eta_0>0\) such that
\begin{equation}\label{eq:hybrid-drift-condition}
  Q_\eta^{c_s,1}V_q(x)
  \le (1-c_q\eta)V_q(x)+C_q\eta,
  \qquad 0<\eta\le\eta_0 .
\end{equation}
Consequently, the localized exact-gradient kernel \(Q_\eta^{c_s,1}\) admits invariant probability measures, and every such invariant measure satisfies the same uniform \(V_q\)-moment bound as in Theorem~\ref{thm:invariant-measure}.
\end{proposition}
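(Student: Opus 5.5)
The plan is to rerun the proof of Proposition~\ref{prop:moment-stability} almost line by line. The only inputs that used the global envelope \(A\) from Assumption~\ref{assump:envelope} get replaced by the two conclusions of Lemma~\ref{lem:hybrid-step-control}. Concretely, set \(b^T_\eta(x)=b(x)/(1+\eta^\alpha A_\eta^{c_s,1}(x))\) and \(\Delta_\eta(x)=\eta b^T_\eta(x)+\sqrt{2\beta^{-1}\eta}\,\xi\). Write \(Q_\eta^{c_s,1}V_q(x)=\E[V_q(x+\Delta_\eta(x))]\) and apply Lemma~\ref{lem:poly-increment} with \(u=\Delta_\eta(x)\). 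This gives the same three-term decomposition as \eqref{eq:lyapunov-basic-decomposition}: a first-order drift term \(q\eta\|x\|^{q-2}\langle x,b^T_\eta(x)\rangle\) (the Gaussian part drops out since \(\E\xi=0\)), a quadratic remainder, and a \(q\)-th order remainder.

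For the remainders, I use \eqref{eq:localized-step-control} in place of Lemma~\ref{lem:effective-linear}. It gives \(\eta\|b^T_\eta(x)\|\le C\eta^{1-\alpha}(1+\|x\|)\), uniformly for small \(\eta\). From this the moment bound \eqref{eq:delta-moment-bound} follows verbatim. The treatment of \eqref{eq:quadratic-remainder-bound} and \eqref{eq:q-remainder-bound} then goes through unchanged: Young's inequality absorbs the \(\|x\|^{q-2}\) terms, and since \(2-2\alpha>1\) and \(q(1-\alpha)>1\), shrinking \(\eta_0\) makes the \(\eta^{2-2\alpha}\|x\|^q\) and \(\eta^{q(1-\alpha)}\|x\|^q\) contributions at most \(\tfrac{\lambda_q}{8}\eta\|x\|^q\) each.

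The step I expect to need genuine care is the dissipation estimate \eqref{eq:key-ratio}. Here the denominator must be bounded \emph{above} by \(C(1+\|x\|^r)\) with a constant independent of \(\eta\), because the threshold \(S_\eta=c_S\eta^{-\alpha}\) blows up. I will use \eqref{eq:localized-upper-growth}. Since \((\bar A-S_\eta)_+\le\bar A\) and \((\bar A-R)_+^\theta\le 1+\bar A\), we get \(A_\eta^{c_s,1}(x)\le C(1+\|x\|^r)\) uniformly in \(\eta\). With \(\eta\le1\), this gives \(1+\eta^\alpha A_\eta^{c_s,1}(x)\le C(1+\|x\|^r)\). Assumption~\ref{assump:dissipative} then yields \(\langle x,b^T_\eta(x)\rangle\le b_0-m\|x\|^{r+2}/(C(1+\|x\|^r))\le C-c\|x\|^2\), where the positive part is bounded by \(b_0\) because the denominator is at least \(1\). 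Multiplying by \(q\eta\|x\|^{q-2}\) and using Young gives \eqref{eq:dissipative-tamed}. Combining the three pieces as before yields \(Q_\eta^{c_s,1}V_q-V_q\le -c_q\eta\|x\|^q+C_q\eta\), hence \eqref{eq:hybrid-drift-condition}.

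For the consequences, I iterate \eqref{eq:hybrid-drift-condition} to get \(\sup_k\E V_q(Y_k)\le V_q(x)+C_q/c_q\). Next I check the Feller property: the map \(x\mapsto x+\eta b^T_\eta(x)\) is continuous because \(b\) is continuous by Assumption~\ref{assump:poly-lip} and \(A_\eta^{c_s,1}\) is continuous whenever \(\bar A\) is. I will assume continuity of \(\bar A\), as for the envelope in Assumption~\ref{assump:envelope}. The Krylov--Bogoliubov argument from the proof of Theorem~\ref{thm:invariant-measure} then produces an invariant probability measure. Finally, integrating \eqref{eq:hybrid-drift-condition} against any invariant \(\pi_\eta\) gives \(\int V_q\,\dd\pi_\eta\le C_q/c_q\), uniformly in \(0<\eta\le\eta_0\).
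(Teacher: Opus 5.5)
Your proposal is correct and matches the paper's proof: you rerun Proposition~\ref{prop:moment-stability} with the effective-linearity bound \eqref{eq:localized-step-control} and the $\eta$-uniform upper-growth bound \eqref{eq:localized-upper-growth} from Lemma~\ref{lem:hybrid-step-control}, then finish with Krylov--Bogoliubov and integration of the drift inequality. The paper also needs continuity of \(\bar A\) for the Feller property, but leaves it implicit; you are right to state it explicitly.
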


\begin{proof}
The proof repeats the proof of Proposition~\ref{prop:moment-stability},
with the envelope inputs supplied by Lemma~\ref{lem:hybrid-step-control}.
For the localized envelope \(A_\eta^{c_s,1}\), write
\[
  b_\eta^{c_s,1}(x)
  :=
  \frac{b(x)}{1+\eta^\alpha A_\eta^{c_s,1}(x)} .
\]
Lemma~\ref{lem:hybrid-step-control} gives the effective-linearity bound
\[
  \|b_\eta^{c_s,1}(x)\|
  \le C\eta^{-\alpha}(1+\|x\|)
\]
and the upper-growth bound needed in the dissipativity estimate.  In
particular, the same argument as in \eqref{eq:key-ratio} yields
\[
  \frac{\|x\|^{r+2}}{1+\eta^\alpha A_\eta^{c_s,1}(x)}
  \ge c_r\|x\|^2-C_r .
\]
Using Assumption~\ref{assump:dissipative}, we obtain
\[
  \langle x,b_\eta^{c_s,1}(x)\rangle
  \le C-c\|x\|^2 .
\]
Therefore, by Young's inequality, there exists \(\lambda_q>0\) such that
\[
  q\eta\|x\|^{q-2}\langle x,b_\eta^{c_s,1}(x)\rangle
  \le -\lambda_q\eta\|x\|^q+C_q\eta .
\]

The increment estimate of Lemma~\ref{lem:poly-increment} applies to
\[
  \Delta_\eta^{c_s,1}(x)
  =
  \eta b_\eta^{c_s,1}(x)+\sqrt{2\eta}\xi .
\]
The effective-linearity bound above gives, for every \(m\ge2\),
\[
  \E\|\Delta_\eta^{c_s,1}(x)\|^m
  \le
  C_m\eta^{m(1-\alpha)}(1+\|x\|^m)+C_m\eta^{m/2}.
\]
Thus the quadratic and \(q\)-th order remainders are the same as in
Proposition~\ref{prop:moment-stability}.  Since \(\alpha<1/2\), they are
absorbed by the negative drift after decreasing \(\eta_0\) if necessary.
This gives
\[
  Q_\eta^{c_s,1}V_q(x)
  \le (1-c_q\eta)V_q(x)+C_q\eta .
\]

The existence of invariant probability measures and the uniform
\(V_q\)-moment bound follow from the same Krylov--Bogoliubov and stationary
moment argument used in Theorem~\ref{thm:invariant-measure}, with the above
drift condition replacing \eqref{eq:drift-condition}.

\end{proof}

\section{Stationary Consistency and Polynomial Observables}\label{sec:weak-stationary-identification}

Section~\ref{sec:moment-stability} gives invariant measures for each fixed-stepsize exact-gradient deterministic-envelope kernel \(\Qeta\). Since \(\Qeta\) is an Euler-type numerical kernel with a modified drift, its invariant measures need not equal the Gibbs law \(\piz\) at fixed \(\eta\). This section proves that every admissible family \((\pi_\eta)_{0<\eta\le\eta_0}\) for \(\Qeta\) satisfies
\[
  \pi_\eta\Rightarrow\piz,\qquad \eta\downarrow0,
\]
and then extends this convergence to polynomial-growth observables.

Admissibility makes \((\pi_\eta)_{0<\eta\le\eta_0}\) tight, so subsequential weak limits exist as \(\eta\downarrow0\). To identify these limits with the Gibbs law, we impose the following stationary identification condition for the limiting Langevin generator.

\begin{assumption}[Stationary identification of the Gibbs law]\label{assump:gibbs-unique}
Let
\[
  \Lgen f(x)
  := \langle b(x),\nabla f(x)\rangle+\beta^{-1}\Delta f(x),
  \qquad f\in C_c^\infty(\R^d).
\]
If a probability measure \(\mu\) on \(\R^d\) satisfies
\[
  \int \Lgen f\,\dd\mu=0,
  \qquad f\in C_c^\infty(\R^d),
\]
then \(\mu=\piz\).
\end{assumption}
\begin{remark}[Role of the identification assumption]
Assumption~\ref{assump:gibbs-unique} is used only to identify subsequential
stationary limits of the numerical kernels.  Once a limit \(\pi\) satisfies
\[
  \int \Lgen f\,\dd\pi=0,
  \qquad f\in C_c^\infty(\R^d),
\]
the assumption gives \(\pi=\piz\).  It is not a uniqueness assumption for the
fixed-stepsize kernels.  In standard overdamped Langevin settings, it follows
from the usual dissipativity, nondegenerate-noise, and recurrence/irreducibility
conditions; see \cite{MeynTweedie1993,MattinglyStuartHigham2002}.
\end{remark}


\begin{lemma}[Local generator consistency for compactly supported test functions]\label{lem:compact-generator-consistency}
Assume Assumptions~\ref{assump:poly-lip}--\ref{assump:envelope}.  Let \(f\in C_c^\infty(\R^d)\).  Then, for every compact set \(K\subset\R^d\),
\begin{equation}\label{eq:compact-generator-consistency}
  \sup_{x\in K}\left|
  \frac{\Qeta f(x)-f(x)}{\eta}-\Lgen f(x)
  \right|\longrightarrow 0,
  \qquad \eta\downarrow0 .
\end{equation}
More generally, \eqref{eq:compact-generator-consistency} holds with
\(\Qeta\) replaced by \(Q_\eta^A\), where \(Q_\eta^A\) is the exact-gradient
kernel with tamed drift
\[
  b_\eta^A(x)
  :=
  \frac{b(x)}{1+\eta^\alpha A_\eta(x)} ,
\]
provided that \(A_\eta\) is deterministic 
and satisfies, for every compact \(K\subset\R^d\),
\[
  \sup_{x\in K}\eta^\alpha A_\eta(x)\to0,
  \qquad \eta\downarrow0 .
\]
\end{lemma}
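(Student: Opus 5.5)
We prove the general version for \(Q_\eta^A\); the statement for \(\Qeta\) is the special case \(A_\eta\equiv A\). There \(\sup_K\eta^\alpha A(x)\le\eta^\alpha\sup_K A\to0\) because \(A\) is continuous, hence bounded on compacts. Fix \(f\in C_c^\infty(\R^d)\) and a compact \(K\). Write
\[
  \Delta_\eta(x)=\eta b_\eta^A(x)+\sqrt{2\beta^{-1}\eta}\,\xi ,
\]
so that \(Q_\eta^A f(x)=\E f(x+\Delta_\eta(x))\).

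\textbf{Step 1 (second-order Taylor expansion).} Since \(f\in C_c^\infty\), all its derivatives are globally bounded. Taylor's formula therefore gives
\[
  f(x+u)=f(x)+\langle\nabla f(x),u\rangle+\tfrac12\langle\nabla^2 f(x)u,u\rangle+R(x,u),
  \qquad |R(x,u)|\le \|\nabla^3 f\|_\infty\|u\|^3/6 .
\]
We take \(u=\Delta_\eta(x)\) and take expectations. Because \(\E\xi=0\), the first-order term is \(\eta\langle b_\eta^A(x),\nabla f(x)\rangle\). The second-order term is
\[
  \tfrac12\eta^2\langle\nabla^2 f\,b_\eta^A,b_\eta^A\rangle+\beta^{-1}\eta\,\Delta f(x),
\]
since \(\E[\xi\xi^\top]=I_d\) and the cross terms are linear in \(\xi\), so they vanish. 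Dividing by \(\eta\), we obtain
\begin{align*}
  \frac{Q_\eta^A f(x)-f(x)}{\eta}-\Lgen f(x)
  &=\langle b_\eta^A(x)-b(x),\nabla f(x)\rangle
   +\tfrac{\eta}{2}\langle\nabla^2 f(x)b_\eta^A(x),b_\eta^A(x)\rangle\\
  &\quad+\eta^{-1}\E R(x,\Delta_\eta(x)).
\end{align*}

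\textbf{Step 2 (bounding the three terms uniformly on \(K\)).} The drift \(b\) is continuous by Assumption~\ref{assump:poly-lip}, so \(M_K:=\sup_K\|b\|<\infty\). Since \(A_\eta\ge0\), we have \(\|b_\eta^A\|\le\|b\|\le M_K\) on \(K\).

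The first term uses the identity
\[
  b_\eta^A-b=-\frac{\eta^\alpha A_\eta}{1+\eta^\alpha A_\eta}\,b ,
\]
which gives
\[
  \sup_K|\langle b_\eta^A-b,\nabla f\rangle|\le \|\nabla f\|_\infty M_K\sup_K\eta^\alpha A_\eta\to0 .
\]
This is the only place where the hypothesis on \(A_\eta\) enters.

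The second term is at most \(\tfrac{\eta}{2}\|\nabla^2 f\|_\infty M_K^2\to0\).

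For the third term, we expand the cube of the increment:
\[
  \E\|\Delta_\eta\|^3\le C\bigl(\eta^3M_K^3+\eta^{3/2}\E\|\xi\|^3\bigr)\le C_K\eta^{3/2}.
\]
Hence \(\eta^{-1}\sup_K|\E R|\le C_K\eta^{1/2}\to0\).

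\textbf{Step 3 (conclusion and main obstacle).} Combining the three bounds yields \eqref{eq:compact-generator-consistency}. The only subtle point is that the Taylor remainder is controlled by global bounds on \(\nabla^3 f\). This is needed because \(x+\Delta_\eta(x)\) leaves \(K\) when the Gaussian increment is large. Compact support of \(f\) settles it: no growth condition on \(b\) outside \(K\) is ever used. Everything else in the argument is local.
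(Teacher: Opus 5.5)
Your proof is correct and follows essentially the same route as the paper: a second-order Taylor expansion of $f$ whose remainder is controlled by global derivative bounds of $f$ and Gaussian moments, combined with the uniform estimate $\sup_K\|b_\eta^A-b\|\le \sup_K\|b\|\,\sup_K\eta^\alpha A_\eta\to0$. You make explicit what the paper compresses into an $o_K(\eta)$ term (the $O(\eta^{1/2})$ cubic remainder and the $O(\eta)$ drift-square term), and you also check, via continuity of $A$, that the global envelope satisfies the local-vanishing hypothesis, which the paper leaves implicit.
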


\begin{proof}
We prove the statement for \(Q_\eta^A\); the case \(Q_\eta\) is obtained by
taking \(A_\eta=A\).  Fix a compact set \(K\subset\R^d\).  By
Assumption~\ref{assump:poly-lip}, \(b\) is bounded on \(K\).  Since
\[
  b_\eta^A(x)=\frac{b(x)}{1+\eta^\alpha A_\eta(x)},
\]
we have
\[
  \sup_{x\in K}\|b_\eta^A(x)-b(x)\|
  \le
  \Big(\sup_{x\in K}\|b(x)\|\Big)
  \Big(\sup_{x\in K}\eta^\alpha A_\eta(x)\Big)
  \to0 .
\]
Thus \(b_\eta^A\to b\) uniformly on \(K\), and \(b_\eta^A\) is bounded on \(K\)
for all sufficiently small \(\eta\).

Writing one step as
\[
  x+\eta b_\eta^A(x)+\sqrt{2\beta^{-1}\eta}\xi,
  \qquad \xi\sim N(0,I_d),
\]
a second-order Taylor expansion of \(f\in C_c^\infty(\R^d)\), together with
the Gaussian moment bounds and the uniform boundedness of \(b_\eta^A\) on
\(K\), gives uniformly for \(x\in K\),
\[
  Q_\eta^A f(x)-f(x)
  =
  \eta\langle b_\eta^A(x),\nabla f(x)\rangle
  +\beta^{-1}\eta\Delta f(x)
  +o_K(\eta).
\]
Dividing by \(\eta\) and using the uniform convergence \(b_\eta^A\to b\) on
\(K\) yields the claim.
\end{proof}

The condition \(\eta^\alpha A_\eta\to0\) locally uniformly ensures that the
denominator does not alter the limiting drift on compact sets.  Indeed, by the
local boundedness of \(b\),
\[
  \sup_{x\in K}
  \left\|
  \frac{b(x)}{1+\eta^\alpha A_\eta(x)}-b(x)
  \right\|
  \to0,\,\, \eta\downarrow0 .
\]
Without this local vanishing, the small-stepsize generator could converge to
one with a rescaled drift instead of \(\Lgen\).

\begin{theorem}[Weak stationary Gibbs consistency]\label{thm:weak-consistency}
Suppose the assumptions of Theorem~\ref{thm:invariant-measure} and Assumption~\ref{assump:gibbs-unique} hold.  Let \((\pi_\eta)_{0<\eta\le\eta_0}\) be an admissible family of invariant measures for \(\Qeta\) at the moment order required to control the generator-consistency terms below.  Then
\begin{equation}\label{eq:weak-conv}
  \pi_\eta\Rightarrow\piz,\qquad \eta\downarrow0 .
\end{equation}
Equivalently, for every sequence \(\eta_j\downarrow0\), the selected invariant measures satisfy \(\pi_{\eta_j}\Rightarrow\piz\).
\end{theorem}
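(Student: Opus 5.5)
The proof follows the standard tightness-plus-identification route. Fix a sequence \(\eta_j\downarrow0\). By admissibility, \(\sup_j\int V_q\,\dd\pi_{\eta_j}<\infty\) with \(V_q(x)=1+\|x\|^q\to\infty\), so \((\pi_{\eta_j})\) is tight. By Prokhorov every subsequence then has a further subsequence, still denoted \(\pi_{\eta_j}\), with \(\pi_{\eta_j}\Rightarrow\pi\) for some probability measure \(\pi\). If we show that every such limit satisfies \(\int\Lgen f\,\dd\pi=0\) for all \(f\in C_c^\infty(\R^d)\), then Assumption~\ref{assump:gibbs-unique} gives \(\pi=\piz\). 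Since every subsequence has a further subsequence converging to the same limit \(\piz\), the full family converges, which is \eqref{eq:weak-conv}.

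To identify the limit, fix \(f\in C_c^\infty(\R^d)\). Invariance gives \(\int(\Qeta f-f)\,\dd\pi_\eta=0\), hence
\[
  \int \Lgen f\,\dd\pi_\eta=-\int\Big(\frac{\Qeta f-f}{\eta}-\Lgen f\Big)\dd\pi_\eta .
\]
The left side converges to \(\int\Lgen f\,\dd\pi\) along the subsequence, because \(\Lgen f\) is continuous with compact support (\(b\) is continuous by Assumption~\ref{assump:poly-lip}). It remains to show the right side tends to zero. Split it over a ball \(B_M\) and its complement. On \(B_M\), Lemma~\ref{lem:compact-generator-consistency} gives uniform convergence of the integrand to zero. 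For the global taming, its hypothesis \(\sup_K\eta^\alpha A\to0\) holds because \(A\) is continuous, hence bounded on compacts.

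The main obstacle is the tail \(\|x\|>M\), where the integrand is not uniformly small. There \(\Lgen f=0\) once \(M\) exceeds the radius \(\rho\) of \(\operatorname{supp} f\), so only \(\eta^{-1}|\Qeta f(x)|\) needs bounding, since \(f(x)=0\). The step lands in \(\operatorname{supp}f\) only if \(\|\Delta_\eta(x)\|\ge\|x\|-\rho\ge\|x\|/2\) for \(M\ge 2\rho\). Lemma~\ref{lem:effective-linear} gives \(\eta\|b^T_\eta(x)\|\le C\eta^{1-\alpha}(1+\|x\|)\le\|x\|/4\) for small \(\eta\) and \(\|x\|\ge M\ge1\). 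Hence the event forces \(\sqrt{2\beta^{-1}\eta}\,\|\xi\|\ge\|x\|/4\), and Markov's inequality on Gaussian moments gives
\[
  \eta^{-1}|\Qeta f(x)|\le \|f\|_\infty\,\eta^{-1}\,\mathbb P\big(\|\xi\|\ge c\|x\|\eta^{-1/2}\big)\le C\|f\|_\infty\,\eta^{p/2-1}\|x\|^{-p}
\]
for any \(p\). Taking \(p=2\) makes this \(\le C/M^2\) uniformly in small \(\eta\), and the tail integral is at most \(C/M^2\) times a total mass at most one. Alternatively one can bound it by \(C\int_{\|x\|>M}V_q\,\dd\pi_\eta/M^q\) using admissibility. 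Sending \(\eta_j\to0\) and then \(M\to\infty\) yields \(\int\Lgen f\,\dd\pi=0\), which completes the identification and the proof.
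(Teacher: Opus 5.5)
Your proof is correct and follows essentially the same route as the paper: tightness from the admissible \(V_q\) bound, a subsequential weak limit, invariance combined with Lemma~\ref{lem:compact-generator-consistency} on a compact set, a Gaussian-tail estimate off that set, and identification of the limit via Assumption~\ref{assump:gibbs-unique}. Your tail step is somewhat more careful than the paper's. You use Lemma~\ref{lem:effective-linear} to rule out the drift carrying far-away points into \(\operatorname{supp} f\), whereas the paper only says the Gaussian must make a long jump. Your \(p=2\) Chebyshev bound also shows that the moment bound is needed only for tightness, not for the tail term.
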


\begin{proof}
Let \(\eta_j\downarrow0\).  By admissibility, the family
\((\pi_{\eta_j})\) is tight.  Hence, after passing to a subsequence, we may
assume
\[
  \pi_{\eta_j}\Rightarrow\pi_* .
\]
Let \(f\in C_c^\infty(\R^d)\).  By invariance of \(\pi_{\eta_j}\),
\[
  \int \frac{Q_{\eta_j}f-f}{\eta_j}\,\dd\pi_{\eta_j}=0.
\]

We claim that
\[
  \int \frac{Q_{\eta_j}f-f}{\eta_j}\,\dd\pi_{\eta_j}
  \longrightarrow
  \int \Lgen f\,\dd\pi_* .
\]
Let \(K\) be a compact neighborhood of \(\operatorname{supp} f\) large enough
so that \(\Lgen f=0\) on \(K^c\).  On \(K\), Lemma~\ref{lem:compact-generator-consistency}
gives
\[
  \frac{Q_{\eta_j}f-f}{\eta_j}\to \Lgen f
\]
uniformly.  Together with \(\pi_{\eta_j}\Rightarrow\pi_*\), this identifies the
contribution over \(K\).

It remains to control the contribution from \(K^c\).  Since \(f=0\) and
\(\Lgen f=0\) on \(K^c\), this contribution is bounded by
\[
  \int_{K^c}\frac{Q_{\eta_j}|f|(x)}{\eta_j}\,\pi_{\eta_j}(\dd x).
\]
Choosing \(K\) with positive distance from \(K^c\) to
\(\operatorname{supp} f\), the Gaussian proposal must make a jump of at least
that distance in order for \(f\) to be nonzero after one step.  The resulting
Gaussian tail, together with the admissible \(V_q\)-moment bound on
\(\pi_{\eta_j}\), gives
\[
  \int_{K^c}\frac{Q_{\eta_j}|f|(x)}{\eta_j}\,\pi_{\eta_j}(\dd x)
  \longrightarrow0 .
\]
Therefore
\[
  \int \Lgen f\,\dd\pi_*=0,
  \qquad f\in C_c^\infty(\R^d).
\]
By Assumption~\ref{assump:gibbs-unique}, \(\pi_*=\piz\).  Since every sequence
\(\eta_j\downarrow0\) has the same possible subsequential limit, the full
family satisfies
\[
  \pi_\eta\Rightarrow\piz .
\]
\end{proof}

Weak convergence only guarantees convergence of integrals against bounded continuous test functions \(f\in C_b(\R^d)\).  Some quantities used below, including empirical risks in the examples, may be unbounded.  We refer to these quantities as observables and record the standard compact--tail argument needed to compare their expectations under invariant measures.

\begin{lemma}[Weak convergence plus uniform integrability]\label{lem:ui-observable}
Let \(\mu_n\Rightarrow\mu\) on \(\R^d\).  Suppose the observable \(H:\R^d\to\R\) is continuous and has polynomial growth,
\[
  |H(x)|\le C_H(1+\|x\|^s),
\]
and suppose that for some \(\delta>0\),
\[
  \sup_n\int \|x\|^{s+\delta}\,\mu_n(\dd x)<\infty,
  \qquad
  \int \|x\|^{s+\delta}\,\mu(\dd x)<\infty.
\]
Then
\[
  \int H\,\dd\mu_n\to \int H\,\dd\mu.
\]
\end{lemma}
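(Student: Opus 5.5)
The approach is the standard truncation (compact–tail) argument: split \(H\) into a bounded continuous piece, handled by weak convergence, and a tail piece, controlled uniformly by the \((s+\delta)\)-moment bounds. Fix a cutoff level \(M>0\) and choose a continuous function \(\chi_M:\R^d\to[0,1]\) with \(\chi_M=1\) on \(\{\|x\|\le M\}\) and \(\chi_M=0\) on \(\{\|x\|\ge M+1\}\). Write
\[
  H=H\chi_M+H(1-\chi_M).
\]
The first term \(H\chi_M\) is continuous and compactly supported, hence bounded. By the definition of weak convergence,
\[
  \int H\chi_M\,\dd\mu_n\to\int H\chi_M\,\dd\mu .
\]

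Next we bound the tail term uniformly in \(n\). On the support of \(1-\chi_M\) we have \(\|x\|\ge M\). For \(M\ge1\) this gives \(1+\|x\|^s\le 2\|x\|^s\le 2M^{-\delta}\|x\|^{s+\delta}\). Hence
\[
  \sup_n\left|\int H(1-\chi_M)\,\dd\mu_n\right|
  \le 2C_H M^{-\delta}\sup_n\int\|x\|^{s+\delta}\,\mu_n(\dd x),
\]
and the same bound holds for \(\mu\) using its own finite \((s+\delta)\)-moment. Both right-hand sides tend to zero as \(M\to\infty\), uniformly in \(n\).

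The conclusion is then an \(\varepsilon/3\) argument. Given \(\varepsilon>0\), first choose \(M\) so that both tail contributions are below \(\varepsilon/3\). Then choose \(n\) large enough that the bounded pieces differ by less than \(\varepsilon/3\). Together these give \(\limsup_n\left|\int H\,\dd\mu_n-\int H\,\dd\mu\right|\le\varepsilon\).

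There is no real obstacle. The only point needing care is that the tail bound is uniform in \(n\), and that is exactly what the \(\delta\) gain in the moment assumption provides: it turns \(\|x\|^s\) into a factor \(M^{-\delta}\) that vanishes as \(M\to\infty\). The hypothesis \(\int\|x\|^{s+\delta}\,\dd\mu<\infty\) is not strictly needed, since it follows from the uniform bound on the \(\mu_n\) by Fatou's lemma and the portmanteau theorem. Since it is assumed, we simply use it directly.
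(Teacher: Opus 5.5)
Your proposal is correct and follows essentially the same compact--tail truncation argument as the paper. A continuous cutoff makes $H\chi_M$ bounded and continuous, so weak convergence handles that piece, and the uniform $(s+\delta)$-moment bounds control the tail uniformly in $n$. Your explicit estimate $|H(1-\chi_M)|\le 2C_H M^{-\delta}\|x\|^{s+\delta}$ spells out the step the paper asserts without computation.
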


\begin{proof}
Fix \(R>0\), set \(K_R=\{x:\|x\|\le R\}\), and choose a continuous cutoff \(\chi_R\) with \(\chi_R=1\) on \(K_R\) and \(\chi_R=0\) outside \(K_{2R}\).  Then \(H_R:=H\chi_R\in C_b(\R^d)\), so
\[
  \int H_R\,\dd\mu_n\to \int H_R\,\dd\mu .
\]
The remaining terms are supported in \(K_R^c\).  From \(|H(x)|\le C_H(1+\|x\|^s)\) and the uniform \((s+\delta)\)-moment bound,
\[
  \sup_n\int_{K_R^c}|H(x)|\,\mu_n(\dd x)\to0,
  \qquad
  \int_{K_R^c}|H(x)|\,\mu(\dd x)\to0
  \qquad (R\to\infty).
\]
Letting \(n\to\infty\) for fixed \(R\), and then \(R\to\infty\), gives the result.
\end{proof}

\begin{corollary}[Polynomial stationary observable consistency]\label{cor:poly-observable-consistency}
Suppose the assumptions of Theorem~\ref{thm:weak-consistency} hold.  Let \(H\) be continuous and satisfy
\[
  |H(x)|\le C_H(1+\|x\|^s).
\]
If the admissible family of invariant measures satisfies a uniform \(s+\delta\) moment bound for some \(\delta>0\), and \(\piz\) has finite \(s+\delta\) moment, then
\[
  \int H\,\dd\pieta\to \int H\,\dd\piz,
  \qquad \eta\downarrow0.
\]
In particular, for empirical-risk observables with polynomial growth, the expectations under \(\pi_\eta\) converge to the expectation under \(\piz\) whenever the Lyapunov estimate provides a uniform moment bound of strictly higher order.
\end{corollary}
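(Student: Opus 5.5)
This is a direct combination of Theorem~\ref{thm:weak-consistency} and Lemma~\ref{lem:ui-observable}, applied along an arbitrary sequence of stepsizes.

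Fix any sequence \(\eta_j\downarrow0\) and set \(\mu_j:=\pi_{\eta_j}\) and \(\mu:=\piz\).

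\emph{Step 1 (weak convergence).} By Theorem~\ref{thm:weak-consistency}, applied under the standing assumptions to the admissible family, \(\pi_{\eta_j}\Rightarrow\piz\). This gives the hypothesis \(\mu_n\Rightarrow\mu\) of Lemma~\ref{lem:ui-observable}.

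\emph{Step 2 (moment inputs).} The two moment hypotheses of Lemma~\ref{lem:ui-observable} are supplied by the corollary's own assumptions.
\begin{itemize}
\item For the numerical measures, \(\sup_j\int\|x\|^{s+\delta}\,\dd\pi_{\eta_j}<\infty\) follows from the assumed uniform \((s+\delta)\)-moment bound of the admissible family. If one wants to derive it instead, take \(q=\max(2,s+\delta)\) in Theorem~\ref{thm:invariant-measure}, which gives \(\sup_\eta\int V_q\,\dd\pi_\eta\le C_q\). Then use \(\|x\|^{s+\delta}\le V_q(x)\) when \(q\ge s+\delta\).
\item For the Gibbs law, \(\int\|x\|^{s+\delta}\,\dd\piz<\infty\) is assumed directly. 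It also holds automatically under Assumption~\ref{assump:dissipative}: the bound \(\langle x,b(x)\rangle\le b_0-m\|x\|^{r+2}\) forces \(F_z\) to grow at least like \(c\|x\|^{r+2}-C\), so \(e^{-\beta F_z}\) has all polynomial moments.
\end{itemize}

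\emph{Step 3 (conclusion).} Since \(H\) is continuous with \(|H|\le C_H(1+\|x\|^s)\), Lemma~\ref{lem:ui-observable} gives \(\int H\,\dd\pi_{\eta_j}\to\int H\,\dd\piz\). The sequence \(\eta_j\downarrow0\) was arbitrary, so the convergence holds as \(\eta\downarrow0\).

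\emph{The ``in particular'' clause.} Proposition~\ref{prop:moment-stability} gives the drift condition for every \(q\ge2\), so the uniform moment of order \(s+\delta\) is available for any polynomial-growth empirical risk.

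\emph{Main obstacle.} There is no real difficulty; the only care needed is bookkeeping. One must make sure the moment order used for admissibility in Theorem~\ref{thm:weak-consistency} is compatible with \(s+\delta\). Taking admissibility at the larger of the two orders resolves this.
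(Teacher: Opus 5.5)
Your proposal is correct and matches the paper's argument. The paper's proof is exactly the one-line application of Lemma~\ref{lem:ui-observable} with \(\mu_n=\pi_{\eta_n}\) and \(\mu=\piz\), using the weak convergence from Theorem~\ref{thm:weak-consistency} along an arbitrary sequence \(\eta_n\downarrow0\); your side remarks deriving the two moment bounds from Theorem~\ref{thm:invariant-measure} and from dissipativity are valid but unnecessary, since both bounds are hypotheses of the corollary.
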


\begin{proof}
Apply Lemma~\ref{lem:ui-observable} with \(\mu_n=\pi_{\eta_n}\) and \(\mu=\piz\).
\end{proof}

This corollary gives the stationary interpretation used in this paper: fixed-stepsize taming may bias the invariant law, but the bias disappears for polynomial-growth observables as \(\eta\to0\), provided the admissible family has a uniform moment bound of order strictly higher than the growth order of the observable.

\section{Exact-Gradient Stationary Residual Decomposition}\label{sec:quantitative-residual}

Section~\ref{sec:weak-stationary-identification} proves the qualitative limit \(\pieta\Rightarrow\piz\) as \(\eta\downarrow0\).  This section studies what remains when the stepsize is fixed. For a probability measure \(\pi\) and an integrable observable \(H\), write
\[ \pi(H):=\int H\,\dd\pi . \] 

We compare the two stationary expectations
\[
  \pieta(H)-\piz(H).
\]
The Poisson equation rewrites this difference as the average one-step generator residual of the numerical kernel.  We then decompose that residual into an Euler discretization residual and a deterministic-envelope residual. 

\begin{proposition}[Poisson-residual identity for stationary bias]\label{prop:poisson-residual-identity}
Let \(H\) be an observable with \(\piz(H)\) finite.  Suppose the Poisson equation, written with the sign convention
\begin{equation}\label{eq:poisson-certificate}
  \Lgen u_H=H-\piz(H),
\end{equation}
has a solution \(u_H\) such that all integrals below are finite.  Let \(Q_\eta\) be any Markov kernel admitting an invariant probability measure \(\pieta\).  Then
\begin{equation}\label{eq:poisson-residual-identity}
  \pieta(H)-\piz(H)
  =
  \int\left(\Lgen u_H-\frac{Q_\eta u_H-u_H}{\eta}\right)\dd\pieta .
\end{equation}
Consequently,
\begin{equation}\label{eq:poisson-residual-bound}
  \left|\pieta(H)-\piz(H)\right|
  \le
  \int \left|\Lgen u_H-\frac{Q_\eta u_H-u_H}{\eta}\right|\dd\pieta .
\end{equation}
\end{proposition}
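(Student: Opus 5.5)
The identity follows from two facts: the Poisson equation \eqref{eq:poisson-certificate} and the invariance $\pieta Q_\eta=\pieta$. I would integrate the Poisson equation against $\pieta$ and then subtract zero, written in a form that invariance supplies.

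\emph{Step 1.} Integrate $\Lgen u_H=H-\piz(H)$ against the probability measure $\pieta$. Since $\pieta(\R^d)=1$ and the constant $\piz(H)$ is finite, this gives
\[
  \int \Lgen u_H\,\dd\pieta=\pieta(H)-\piz(H).
\]
Here the finiteness of $\int|\Lgen u_H|\,\dd\pieta$ and of $\pieta(|H|)$ is taken from the hypothesis that all integrals below are finite.

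\emph{Step 2.} Show that $\int (Q_\eta u_H-u_H)\,\dd\pieta=0$. Invariance means $\int Q_\eta\varphi\,\dd\pieta=\int\varphi\,\dd\pieta$ for bounded measurable $\varphi$. It extends to $u_H$ whenever $u_H\in L^1(\pieta)$. One way is to apply invariance to the bounded functions $u_H^{\pm}\wedge n$ and pass to the limit by monotone convergence; this uses that $Q_\eta$ is a positive kernel. It yields $\int Q_\eta u_H^\pm\,\dd\pieta=\int u_H^\pm\,\dd\pieta<\infty$, so $Q_\eta u_H$ is well defined $\pieta$-a.e. and the two integrals coincide. Dividing by $\eta>0$ and subtracting from Step 1 gives \eqref{eq:poisson-residual-identity}, with the integrand combined by linearity of the integral, which is legitimate because both pieces are integrable.

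\emph{Step 3.} The bound \eqref{eq:poisson-residual-bound} follows from $|\int g\,\dd\pieta|\le\int|g|\,\dd\pieta$.

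The only delicate point is the integrability bookkeeping in Step 2, namely extending invariance from bounded test functions to the possibly unbounded $u_H$ and splitting the integral of the difference. The truncation and monotone convergence argument above handles this under the stated finiteness hypothesis. In later applications this hypothesis would be verified from the polynomial growth of $u_H$ together with the admissible $V_q$-moment bounds of Theorem~\ref{thm:invariant-measure}.
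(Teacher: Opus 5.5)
Your argument is correct and matches the paper's proof: integrate \(\Lgen u_H=H-\piz(H)\) against \(\pieta\), then subtract \(\int (Q_\eta u_H-u_H)/\eta\,\dd\pieta=0\), which follows from invariance. Your truncation and monotone-convergence step spells out the requirement \(u_H\in L^1(\pieta)\), which the paper folds into ``all integrals below are finite''. That requirement is genuinely needed, because integrability of \(Q_\eta u_H-u_H\) alone does not force its \(\pieta\)-integral to vanish.
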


\begin{proof}
By invariance of \(\pieta\),
\[
  \int \frac{Q_\eta u_H-u_H}{\eta}\,\dd\pieta=0.
\]
Using \eqref{eq:poisson-certificate},
\[
  \pieta(H)-\piz(H)
  =\int \Lgen u_H\,\dd\pieta
  =\int\left(\Lgen u_H-\frac{Q_\eta u_H-u_H}{\eta}\right)\dd\pieta .
\]
The inequality follows by taking absolute values.
\end{proof}

This is the standard Poisson-equation device used in invariant-measure weak error analysis for numerical SDEs; see Mattingly--Stuart--Tretyakov~\cite{MattinglyStuartTretyakov2010}.  In the present setting, it turns stationary bias into a one-step generator residual, so that the effect of the deterministic envelope appears explicitly in the residual.

\begin{assumption}[Poisson regularity for residual bounds]\label{assump:poisson-regularity}
Fix an observable \(H\).  The Poisson equation
\[
  \Lgen u_H=H-\piz(H)
\]
admits a solution \(u_H\) with the following weighted \(C^{2,\gamma}\)-type bounds for some \(\gamma\in(0,1]\).  Namely, there exist constants \(C_H,m_H\) such that
\[
  \|\nabla u_H(x)\|+\|\nabla^2 u_H(x)\|
  \le C_H(1+\|x\|^{m_H}),
  \qquad x\in\mathbb R^d,
\]
and
\[
  \|\nabla^2u_H(x)-\nabla^2u_H(y)\|
  \le C_H(1+\|x\|^{m_H}+\|y\|^{m_H})\|x-y\|^\gamma,
  \qquad x,y\in\mathbb R^d .
\]
\end{assumption}

\begin{remark}[A route to Poisson regularity]\label{rem:poisson-quartic-route}
Assumption~\ref{assump:poisson-regularity} supplies the analytic regularity
needed for the Poisson-residual argument.  For ergodic elliptic diffusions,
results of Pardoux--Veretennikov~\cite{PardouxVeretennikov2001} provide a
standard route to Poisson solutions with polynomial-growth estimates.  In
smooth uniformly elliptic Langevin settings, these estimates can be combined
with local elliptic regularity, such as Schauder estimates
\cite{GilbargTrudinger2001}, to obtain weighted \(C^{2,\gamma}\)-type bounds of
the form assumed above.  We keep the assumption explicit because the precise
weights and H\"older exponent depend on the model.
\end{remark}

\begin{proposition}[Pointwise one-step residual bound]
\label{prop:onestep-residual-decomposition}
Assume Assumptions~\ref{assump:poly-lip} and
\ref{assump:poisson-regularity}.  Consider the exact-gradient
deterministic-envelope update
\[
  Y=x+\eta b_\eta(x)+\sqrt{2\beta^{-1}\eta}\,Z,
  \qquad
  b_\eta(x):=\frac{b(x)}{1+\eta^\alpha A_\eta(x)},
  \qquad
  Z\sim N(0,I_d),
\]
where \(A_\eta:\R^d\to[0,\infty)\) is deterministic.  Let \(Q_\eta\) be the
corresponding Markov kernel.  Then there exist constants \(C,m\) and
\(\eta_1>0\), depending on \(H\) but not on \(\eta\), such that for all
\(0<\eta\le\eta_1\),
\[
\left|
\frac{Q_\eta u_H(x)-u_H(x)}{\eta}
-\Lgen u_H(x)
\right|
\le C(1+\|x\|^m)
\left[
\eta^{\gamma/2}
+\eta^\alpha A_\eta(x)\|b(x)\|
\right].
\]
\end{proposition}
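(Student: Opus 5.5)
The plan is a second-order Taylor expansion of \(u_H\) around \(x\) along the increment \(\Delta:=Y-x=\eta b_\eta(x)+\sqrt{2\beta^{-1}\eta}\,Z\), followed by a term-by-term comparison with \(\Lgen u_H(x)\).

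\textbf{Expansion.} By Taylor's formula with integral remainder and the Hölder bound in Assumption~\ref{assump:poisson-regularity},
\[
  u_H(x+\Delta)-u_H(x)=\langle\nabla u_H(x),\Delta\rangle+\tfrac12\Delta^\top\nabla^2u_H(x)\Delta+\mathcal R(x,\Delta),
\]
\[
  |\mathcal R(x,\Delta)|\le C(1+\|x\|^{m_H}+\|x+\Delta\|^{m_H})\|\Delta\|^{2+\gamma}.
\]
Taking expectations and using \(\E Z=0\) and \(\E ZZ^\top=I_d\) gives
\[
  Q_\eta u_H(x)-u_H(x)=\eta\langle\nabla u_H,b_\eta\rangle+\beta^{-1}\eta\,\Delta u_H(x)+\tfrac{\eta^2}{2}b_\eta^\top\nabla^2u_H\,b_\eta+\E\mathcal R .
\]
The cross term between \(\eta b_\eta(x)\) and \(Z\) has mean zero and drops out. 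Dividing by \(\eta\) and subtracting \(\Lgen u_H(x)\) leaves three contributions.

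\textbf{The three contributions.}

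\emph{(i) Envelope term.} Since \(A_\eta\) is deterministic,
\[
  \langle\nabla u_H(x),b_\eta(x)-b(x)\rangle=-\langle\nabla u_H(x),b(x)\rangle\frac{\eta^\alpha A_\eta(x)}{1+\eta^\alpha A_\eta(x)}.
\]
Its absolute value is at most \(C(1+\|x\|^{m_H})\,\eta^\alpha A_\eta(x)\|b(x)\|\). This is exactly the second bracketed term.

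\emph{(ii) Drift-squared term.} This is \(\tfrac{\eta}{2}b_\eta^\top\nabla^2u_H\,b_\eta\). I will use only the crude bound \(\|b_\eta(x)\|\le\|b(x)\|\). Assumption~\ref{assump:poly-lip} with \(y=0\) gives \(\|b(x)\|\le C(1+\|x\|^{r+1})\). Hence this term is at most \(C\eta(1+\|x\|^{m_H+2r+2})\), and since \(\eta\le1\) it is bounded by \(C\eta^{\gamma/2}(1+\|x\|^m)\).

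\emph{(iii) Remainder.} This is \(\eta^{-1}\E\mathcal R\). Use \(\|x+\Delta\|^{m_H}\le C(\|x\|^{m_H}+\|\Delta\|^{m_H})\) and then Cauchy--Schwarz to separate the weight from \(\|\Delta\|^{2+\gamma}\). The moment bound
\[
  \E\|\Delta\|^p\le C_p\bigl(\eta^p\|b(x)\|^p+\eta^{p/2}\bigr)\le C_p\,\eta^{p/2}(1+\|x\|^{p(r+1)})
\]
holds for all \(p\ge1\) and \(\eta\le1\), using the Gaussian moments of \(Z\). Applying it yields
\[
  \eta^{-1}\E|\mathcal R|\le C\eta^{-1}\eta^{(2+\gamma)/2}(1+\|x\|^m)=C\eta^{\gamma/2}(1+\|x\|^m).
\]

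\textbf{Main obstacle.} The delicate step is (iii). The Hölder weight is evaluated at \(x+\Delta\) rather than at \(x\), so its moments have to be controlled together with \(\|\Delta\|^{2+\gamma}\) while keeping the factor \(\eta^{(2+\gamma)/2}\). The decoupling inequality followed by Cauchy--Schwarz handles this, and only polynomial growth of \(b\) is needed. Neither the effective-linearity bound nor \(\alpha<1/2\) is required here, which is why the statement holds for an arbitrary deterministic \(A_\eta\).

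Collecting (i)--(iii) and taking \(m\) to be the largest of the exponents that appear proves the bound for all \(0<\eta\le\eta_1=1\).
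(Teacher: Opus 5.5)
Your proposal is correct and matches the paper's proof step for step. It uses the same second-order Taylor expansion with the weighted \(C^{2,\gamma}\) remainder and the same cancellation of the drift--noise cross term against the \(\beta^{-1}\Delta u_H\) term. It uses the same crude bound \(\|b_\eta(x)\|\le\|b(x)\|\) (with \(\eta\le\eta^{\gamma/2}\)) for the drift-squared term and the remainder, and the same identity \(b_\eta-b=-\frac{\eta^\alpha A_\eta}{1+\eta^\alpha A_\eta}\,b\) for the envelope residual. Your decoupling \(\|x+\Delta\|^{m_H}\le C(\|x\|^{m_H}+\|\Delta\|^{m_H})\) only spells out the step the paper compresses into ``polynomial growth of \(b\), together with Gaussian moment bounds.''
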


\begin{proof}
Write the one-step increment as
\[
  \Delta=\eta b_\eta(x)+\sqrt{2\beta^{-1}\eta}\,Z .
\]
Then
\[
  Q_\eta u_H(x)=\E[u_H(x+\Delta)] .
\]
A second-order Taylor expansion with the \(C^{2,\gamma}\)-type remainder from
Assumption~\ref{assump:poisson-regularity} gives
\[
  Q_\eta u_H(x)-u_H(x)
  =
  \langle\nabla u_H(x),\E\Delta\rangle
  +\frac12\E\big[\Delta^\top \nabla^2u_H(x)\Delta\big]
  +R_\eta(x),
\]
where, by Assumption~\ref{assump:poisson-regularity}, after increasing the polynomial weight if necessary,
\[
  |R_\eta(x)|=|\E \int_0^1(1-\tau)\, \Delta^\top \left[ \nabla^2u_H(x+\tau\Delta)-\nabla^2u_H(x) \right] \Delta\,\dd\tau|
  \le
  C\,\E\Big[(1+\|x\|^m+\|x+\Delta\|^m)\|\Delta\|^{2+\gamma}\Big].
\]
Since \(A_\eta\ge0\), we have \(\|b_\eta(x)\|\le\|b(x)\|\).  The polynomial
growth of \(b\), together with Gaussian moment bounds, therefore gives
\[
  \eta^{-1}|R_\eta(x)|
  \le C(1+\|x\|^m)\eta^{\gamma/2}.
\]

The first-order term satisfies
\[
  \eta^{-1}\langle\nabla u_H(x),\E\Delta\rangle
  =
  \langle b_\eta(x),\nabla u_H(x)\rangle .
\]
For the second-order term \(\frac12\eta^{-1}\E\big[\Delta^\top \nabla^2u_H(x)\Delta\big]\), expand
\[
  \Delta=\eta b_\eta(x)+\sqrt{2\beta^{-1}\eta}\,Z .
\]
The drift--noise cross term has zero expectation.  The pure Gaussian term gives
\[
  \frac{1}{2\eta}
  \E\left[
    (\sqrt{2\beta^{-1}\eta}Z)^\top
    \nabla^2u_H(x)
    (\sqrt{2\beta^{-1}\eta}Z)
  \right]
  =
  \beta^{-1}\operatorname{tr}(\nabla^2u_H(x))
  =
  \beta^{-1}\Delta u_H(x),
\]
where we used \(\E[ZZ^\top]=I_d\).

  The deterministic drift-square contribution is
\[
  \frac{\eta}{2}b^\top_\eta(x)\nabla^2u_H(x)b_\eta(x).
\]
Using the polynomial bounds on \(\nabla^2u_H\), the polynomial growth of \(b\),
and \(\|b_\eta(x)\|\le\|b(x)\|\), this term is bounded by
\[
  C(1+\|x\|^m)\eta .
\]
Since \(0<\gamma\le1\), we have \(\eta\le\eta^{\gamma/2}\) for
\(0<\eta\le1\).  Hence this term is also bounded by
\[
  C(1+\|x\|^m)\eta^{\gamma/2}.
\]

Combining the preceding estimates yields
\[
  \frac{Q_\eta u_H(x)-u_H(x)}{\eta}
  =
  \langle b_\eta(x),\nabla u_H(x)\rangle
  +\beta^{-1}\Delta u_H(x)
  +\mathcal R_\eta(x),
\]
with
\[
  |\mathcal R_\eta(x)|
  \le C(1+\|x\|^m)\eta^{\gamma/2}.
\]
Since
\[
  \Lgen u_H(x)
  =
  \langle b(x),\nabla u_H(x)\rangle
  +\beta^{-1}\Delta u_H(x),
\]
the Laplacian terms cancel in \(\frac{Q_\eta u_H(x)-u_H(x)}{\eta}
-\Lgen u_H(x)\).  The remaining drift difference is
\[
  b_\eta(x)-b(x)
  =
  -\frac{\eta^\alpha A_\eta(x)}{1+\eta^\alpha A_\eta(x)}\,b(x).
\]
Therefore
\[
  |\langle b_\eta(x)-b(x),\nabla u_H(x)\rangle|
  \le
  C(1+\|x\|^m)\eta^\alpha A_\eta(x)\|b(x)\|.
\]
Combining this bound with the estimate on \(\mathcal R_\eta\) proves the
claim.
\end{proof}

\begin{corollary}[Fixed-stepsize stationary bias decomposition]\label{cor:stationary-bias-decomposition}
Under the assumptions of Proposition~\ref{prop:onestep-residual-decomposition}, let \(\pi_\eta\) be an invariant probability measure of \(Q_\eta\) that integrates the polynomial weights and residual terms appearing on the right-hand side below.  Then
\begin{equation}\label{eq:stationary-bias-decomposition}
\left|\pi_\eta(H)-\piz(H)\right|
\le C_H\int (1+\|x\|^m)
\left[
\eta^{\gamma/2}
+\eta^\alpha A_\eta(x)\|b(x)\|
\right]\pi_\eta(\dd x).
\end{equation}
\end{corollary}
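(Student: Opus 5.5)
The bound will follow by combining the Poisson-residual identity of Proposition~\ref{prop:poisson-residual-identity} with the pointwise estimate of Proposition~\ref{prop:onestep-residual-decomposition}. First we check the hypotheses of Proposition~\ref{prop:poisson-residual-identity}. Assumption~\ref{assump:poisson-regularity} supplies a solution \(u_H\) of \(\Lgen u_H=H-\piz(H)\) whose gradient and Hessian grow polynomially. Integrating these bounds shows that \(u_H\) itself satisfies \(|u_H(x)|\le C(1+\|x\|^{m_H+2})\). Hence \(Q_\eta u_H\) and \(\Lgen u_H\) are finite and of polynomial growth, because the Gaussian increment has all moments and \(\|b_\eta\|\le\|b\|\) grows polynomially by Assumption~\ref{assump:poly-lip}. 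The hypothesis that \(\pi_\eta\) integrates the polynomial weights appearing in the bound then makes \(\int Q_\eta u_H\,\dd\pi_\eta\), \(\int u_H\,\dd\pi_\eta\) and \(\int \Lgen u_H\,\dd\pi_\eta\) finite. In particular the invariance step \(\int (Q_\eta u_H-u_H)\,\dd\pi_\eta=0\) is legitimate. If the weight \(1+\|x\|^m\) in the statement is not already large enough to dominate \(u_H\), we simply enlarge \(m\), which is allowed since \(m\) is a generic exponent depending only on \(H\).

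Next we apply \eqref{eq:poisson-residual-bound}:
\[
  |\pi_\eta(H)-\piz(H)|\le\int\Big|\Lgen u_H-\frac{Q_\eta u_H-u_H}{\eta}\Big|\,\dd\pi_\eta .
\]
We insert the pointwise bound of Proposition~\ref{prop:onestep-residual-decomposition} inside the integral. This is valid for every \(x\) once \(0<\eta\le\eta_1\), and it gives exactly the right-hand side of \eqref{eq:stationary-bias-decomposition}, with \(C_H\) equal to the constant \(C\) from that proposition. Monotonicity of the integral then finishes the argument. The right-hand side is finite by the stated integrability assumption on \(\pi_\eta\).

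The only step that needs real care is the first one: checking that the integrability hypotheses actually cover every term used in the Poisson identity. In particular, \(Q_\eta u_H\) involves \(u_H\) evaluated at \(x+\Delta\). We handle it as follows. By \(\|x+\Delta\|^k\le C(\|x\|^k+\|\Delta\|^k)\) and the moment bound on \(\Delta\) (drift part of polynomial growth, Gaussian part with finite moments), we get \(Q_\eta|u_H|(x)\le C(1+\|x\|^{m'})\) for some exponent \(m'\). We then take \(m\ge m'\), so that the assumed integrability of \(1+\|x\|^m\) against \(\pi_\eta\) covers this term. For the kernels of Theorem~\ref{thm:invariant-measure} and Proposition~\ref{prop:hybrid-moment-stability}, this integrability holds automatically by the uniform \(V_q\)-moment bound with \(q\) large.
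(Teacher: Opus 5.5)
Your proposal is correct and follows the paper's own proof, which simply combines the Poisson-residual identity (Proposition~\ref{prop:poisson-residual-identity}) with the pointwise bound of Proposition~\ref{prop:onestep-residual-decomposition} and integrates against \(\pi_\eta\). Your integrability check correctly fills in what the paper leaves to the hypothesis on \(\pi_\eta\). The separate bound on \(Q_\eta|u_H|\) is not actually needed: invariance already gives \(\int Q_\eta|u_H|\,\dd\pi_\eta=\int|u_H|\,\dd\pi_\eta<\infty\) once \(u_H\in L^1(\pi_\eta)\).
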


\begin{proof}
Combine Proposition~\ref{prop:poisson-residual-identity} with Proposition~\ref{prop:onestep-residual-decomposition}.
\end{proof}

The bound separates two residuals.  The term \(\eta^{\gamma/2}\) is the Euler residual from the Taylor remainder in the one-step expansion of \(Q_\eta u_H\).  The term \(\eta^\alpha A_\eta(x)\|b(x)\|\) is the envelope residual.  It is paid only where the denominator is active.  Thus localized denominators reduce the envelope residual in the typical region while retaining a tail correction for stability.  This observation is structural; it does not by itself rank invariant measures produced by different kernels.

\section{Deterministic-Envelope Stochastic-Gradient Extension}\label{sec:sg-extension}

Section~\ref{sec:quantitative-residual} decomposed the fixed-stepsize residual for the exact-gradient deterministic-envelope kernel into an Euler residual and a deterministic-envelope residual. This section extends the deterministic-envelope kernel to stochastic-gradient oracles. The key point is that the denominator is fixed conditional on the current state, so a conditionally unbiased oracle remains conditionally unbiased after deterministic scaling.

Let
\begin{equation}\label{eq:sg-oracle}
  g(x,U)=\nabla F_z(x)+\zeta(x,U),
  \qquad \E[\zeta(x,U)\mid x]=0.
\end{equation}
Here \(U\) denotes the oracle randomness, such as a random index or mini-batch.  Its law may depend on the data \(z\).

The deterministic-envelope stochastic-gradient update is
\begin{equation}\label{eq:sg-tamed-kernel}
  Y_{k+1}^{\mathrm{sg}}
  =Y_k^{\mathrm{sg}}
  -\eta\frac{g(Y_k^{\mathrm{sg}},U_{k+1})}{1+\eta^\alpha A_\eta(Y_k^{\mathrm{sg}})}
  +\sqrt{2\beta^{-1}\eta}\,\xi_{k+1}.
\end{equation}
Since \(b=-\nabla F_z\), the tamed stochastic drift can be written as
\begin{equation}\label{eq:sg-tamed-drift-decomposition}
  \widehat b^T_\eta(x,U)
  :=\frac{b(x)-\zeta(x,U)}{1+\eta^\alpha A_\eta(x)}
  =b^T_\eta(x)+\varepsilon_\eta(x,U),
  \qquad
  \varepsilon_\eta(x,U):=-\frac{\zeta(x,U)}{1+\eta^\alpha A_\eta(x)}.
\end{equation}
Because the denominator is deterministic conditional on \(x\),
\begin{equation}\label{eq:sg-centered-tamed-drift}
  \E[\widehat b^T_\eta(x,U)\mid x]=b^T_\eta(x),
  \qquad
  \E[\varepsilon_\eta(x,U)\mid x]=0.
\end{equation}
This identity will be used in both the Lyapunov estimate and the stationary residual decomposition below.

A mini-batch-dependent denominator generally does not have this property.
For example, let \(\widetilde A_\eta(x,U)\) be a nonnegative quantity computed
from the current stochastic-gradient realization \(g(x,U)\).  Then typically
\begin{equation}\label{eq:random-denominator-centering-failure}
  \E\left[
  \frac{g(x,U)}{1+\eta^\alpha \widetilde A_\eta(x,U)}
  \mid x
  \right]
  \ne
  b^T_\eta(x).
\end{equation}
 
The stochastic-gradient extension keeps the denominator deterministic, but the update now contains oracle noise. 
Two remaining assumptions are the stochastic-gradient analogues of the polynomial-growth and effective-linearity inputs used for the deterministic drift.

\begin{assumption}[Oracle moments and envelope-compatible increments]\label{assump:sg-oracle}
The stochastic-gradient oracle satisfies the following conditions.
\begin{enumerate}[label=\textup{(\roman*)},leftmargin=2.1em]

\item \textup{(Polynomial conditional moments.)}  Fix a maximal moment order \(r\ge2\) large enough to include all Lyapunov and one-step generator expansion orders used below.  For each \(2\le j\le r\), the centered oracle error satisfies
\begin{equation}\label{eq:sg-raw-moments}
  \E[\|\zeta(x,U)\|^j\mid x]
  \le C_j\delta^{j/2}(1+\|x\|^{s_j}),
\end{equation}
where \(0\le\delta\le\delta_0<\infty\).

\item \textup{(Envelope-compatible stochastic increments.)}  For each moment order \(q\le r\) used in the Lyapunov estimate, the envelope-scaled oracle increment satisfies, for \(2\le j\le q\),
\begin{equation}\label{eq:sg-compatible-increment}
  \E\left\|
  \eta\frac{\zeta(x,U)}{1+\eta^\alpha A_\eta(x)}
  \right\|^j
  \le C_{q,j}\eta^{j(1-\alpha)}\delta^{j/2}(1+\|x\|^j).
\end{equation}
\end{enumerate}
\end{assumption}

With this oracle condition in place, the Lyapunov stability estimate extends to the stochastic-gradient kernel.  The only new term is the envelope-scaled oracle increment, whose conditional mean and moments are controlled by Assumption~\ref{assump:sg-oracle}.

\begin{proposition}[Moment stability for the stochastic-gradient chain]\label{prop:sg-moment}
Assume Assumptions~\ref{assump:poly-lip}--\ref{assump:envelope} and Assumption~\ref{assump:sg-oracle}; alternatively, use any localized deterministic envelope satisfying the effective-linearity and polynomial-growth bounds used in Proposition~\ref{prop:moment-stability}.  For every \(q\ge2\) for which \eqref{eq:sg-compatible-increment} holds, the stochastic-gradient tamed chain \eqref{eq:sg-tamed-kernel} satisfies the same type of Lyapunov estimate as \eqref{eq:drift-condition}: for all sufficiently small \(\eta\),
\begin{equation}\label{eq:sg-drift-condition}
  \Qsg V_q(x)
  \le (1-c_q\eta)V_q(x)+C_q\eta,
  \qquad V_q(x)=1+\|x\|^q.
\end{equation}
Consequently \(\Qsg\) admits invariant measures.  Moreover, any selected family of invariant measures satisfying \eqref{eq:sg-drift-condition} is admissible at the corresponding moment order for the consistency arguments below.
\end{proposition}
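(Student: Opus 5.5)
The plan is to repeat the proof of Proposition~\ref{prop:moment-stability} with the increment
\[
  \Delta^{\rm sg}_\eta(x)=\eta b^T_\eta(x)+\eta\varepsilon_\eta(x,U)+\sqrt{2\beta^{-1}\eta}\,\xi,
\]
where \(U\) and \(\xi\) are independent. The new oracle term should vanish at first order because of \eqref{eq:sg-centered-tamed-drift}. At higher orders it should be absorbed exactly as the drift part was. Concretely, I first apply Lemma~\ref{lem:poly-increment} with \(u=\Delta^{\rm sg}_\eta(x)\) and take the conditional expectation given \(x\). This gives the analogue of \eqref{eq:lyapunov-basic-decomposition}. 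The linear term is \(q\|x\|^{q-2}\langle x,\E[\Delta^{\rm sg}_\eta(x)\mid x]\rangle\). Because the denominator \(1+\eta^\alpha A_\eta(x)\) is deterministic given \(x\), \eqref{eq:sg-centered-tamed-drift} and \(\E\xi=0\) reduce this term to \(q\eta\|x\|^{q-2}\langle x,b^T_\eta(x)\rangle\). That is exactly the term already bounded in \eqref{eq:dissipative-tamed}, or in its localized version in Proposition~\ref{prop:hybrid-moment-stability}, by \(-\lambda_q\eta\|x\|^q+C_q\eta\).

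Next I bound the remainder moments. For \(m\in\{2,q\}\), the elementary inequality \(\|a+b+c\|^m\le c_m(\|a\|^m+\|b\|^m+\|c\|^m)\) applies. Combined with Lemma~\ref{lem:effective-linear} (or Lemma~\ref{lem:hybrid-step-control}), \eqref{eq:sg-compatible-increment}, and Gaussian moments, it gives
\[
  \E\|\Delta^{\rm sg}_\eta(x)\|^m\le C_m\big(1+\delta_0^{m/2}\big)\eta^{m(1-\alpha)}(1+\|x\|^m)+C_m\eta^{m/2}.
\]
This has the same form as \eqref{eq:delta-moment-bound}, with only the constant changed.

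From this point the absorption argument of Proposition~\ref{prop:moment-stability} goes through word for word. Since \(\alpha<1/2\), both \(2-2\alpha\) and \(q(1-\alpha)\) exceed \(1\). So after shrinking \(\eta_0\), the terms \(C\eta^{2-2\alpha}\|x\|^q\) and \(C\eta^{q(1-\alpha)}\|x\|^q\) are at most \(\tfrac{\lambda_q}{8}\eta\|x\|^q\). The term \(C\eta(1+\|x\|^{q-2})\) is handled by Young's inequality. The result is \eqref{eq:sg-drift-condition}.

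Existence of invariant measures then follows from the Krylov--Bogoliubov argument of Theorem~\ref{thm:invariant-measure}, which needs two inputs. The first is tightness, which comes from the drift bound. The second is the Feller property of \(\Qsg\). For \(\varphi\in C_b\), \(\Qsg\varphi(x)=\E[\varphi(x-\eta g(x,U)/(1+\eta^\alpha A_\eta(x))+\sqrt{2\beta^{-1}\eta}\xi)]\). Integrating against the Gaussian density first turns this into a convolution that is smooth in its shift argument. Continuity in \(x\) therefore needs only continuity in probability of \(x\mapsto g(x,U)\) and continuity of \(A_\eta\). I would either add this as a mild standing regularity assumption, which is natural for finite-sum mini-batch oracles where \(U\) is a discrete index, or bypass Feller by noting that the Gaussian smoothing gives the strong Feller property whenever the oracle is continuous in \(x\). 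Admissibility of any selected family of invariant measures follows by integrating \eqref{eq:sg-drift-condition} against \(\pi_\eta\), as in Theorem~\ref{thm:invariant-measure}, which gives \(\int V_q\,\dd\pi_\eta\le C_q/c_q\) uniformly in \(\eta\).

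The main obstacle is the quadratic remainder term \((1+\|x\|^{q-2})\E\|\Delta^{\rm sg}_\eta\|^2\). The oracle contributes \(\eta^{2-2\alpha}\delta(1+\|x\|^2)\) to it, and this must be of order strictly larger than \(\eta\) to be absorbed. That is exactly why \eqref{eq:sg-compatible-increment} is assumed with the factor \(\eta^{j(1-\alpha)}\) rather than the raw moments \eqref{eq:sg-raw-moments}. The raw moments could grow like \(\|x\|^{s_j}\) and would not be dominated by the \(\|x\|^q\) dissipation. I would therefore rely only on \eqref{eq:sg-compatible-increment} at orders \(j=2\) and \(j=q\).
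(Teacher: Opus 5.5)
Your proposal is correct and follows the paper's proof essentially step for step. It uses the same three-part increment, the centering identity \eqref{eq:sg-centered-tamed-drift} to remove the first-order oracle term, and Assumption~\ref{assump:sg-oracle}(ii) at orders $2$ and $q$ to recover a bound of the form \eqref{eq:delta-moment-bound}. It then repeats the absorption argument using $\alpha<1/2$ and invokes Krylov--Bogoliubov for existence.

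Your scrutiny of the Feller property is more careful than the paper, which only says existence follows ``exactly as in Theorem~\ref{thm:invariant-measure}''. However, your strong-Feller alternative is not a bypass, since it still needs continuity of $x\mapsto g(x,U)$. The Gaussian noise gives a route with no continuity assumption at all. By Markov's inequality and the moment bounds, $\Qsg(x,\cdot)\ge c_K\,\mathrm{Leb}(\cdot\cap B(0,1))$ holds uniformly for $x$ in any compact $K$. Hence compact sublevel sets of $V_q$ are small, and together with \eqref{eq:sg-drift-condition} the Meyn--Tweedie theory gives an invariant probability measure (in fact a unique one) without any continuity of the oracle.
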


\begin{proof}
We only indicate the modifications to the proof of Proposition~\ref{prop:moment-stability}.  Write the stochastic-gradient increment as
\[
  \Delta_\eta^{\mathrm{sg}}
  =
  \eta b^T_\eta(x)
  +
  \eta\varepsilon_\eta(x,U)
  +
  \sqrt{2\beta^{-1}\eta}\,\xi .
\]
The deterministic drift and Gaussian parts are the same as in Proposition~\ref{prop:moment-stability}.  The only new term is the envelope-scaled oracle increment \(\eta\varepsilon_\eta(x,U)\).

The first-order Lyapunov contribution of this oracle increment vanishes.  By \eqref{eq:sg-centered-tamed-drift},
\[
  \E[\varepsilon_\eta(x,U)\mid x]=0.
\]
Hence the first-order drift term remains
\[
  q\eta\|x\|^{q-2}\langle x,b^T_\eta(x)\rangle,
\]
the same deterministic-envelope dissipative term estimated in Proposition~\ref{prop:moment-stability}.

It remains to control the oracle contribution to the polynomial remainders.  Assumption~\ref{assump:sg-oracle}\textup{(ii)} gives, for the moment orders used in the Lyapunov expansion,
\[
  \E\|\eta\varepsilon_\eta(x,U)\|^j
  \le
  C_{q,j}\eta^{j(1-\alpha)}\delta^{j/2}(1+\|x\|^j),
  \qquad 2\le j\le q.
\]
Since \(\delta\le\delta_0\), the factor \(\delta^{j/2}\) is absorbed into the constant below.
Thus the oracle increment has the same effective size as the deterministic tamed drift increment in the remainder estimates.  Combining this bound with the deterministic and Gaussian increment estimates from Proposition~\ref{prop:moment-stability}, the same absorption argument yields \eqref{eq:sg-drift-condition}.  The invariant-measure existence and uniform moment bound then follow exactly as in Theorem~\ref{thm:invariant-measure}.
\end{proof}

Before stating the weak consistency result, define the conditional second moment of the envelope-scaled oracle perturbation by
\begin{equation}\label{eq:sg-oracle-second-moment}
\begin{split}
  \mathsf{Var}_x(\varepsilon_\eta)
  &:=
  \E[\|\varepsilon_\eta(x,U)\|^2\mid x]  \\
  &=
  \E\left[
  \left\|
  \frac{\zeta(x,U)}{1+\eta^\alpha A_\eta(x)}
  \right\|^2
  \mid x\right].
\end{split}
\end{equation}
This scalar second moment is the noise scale used in the stochastic-gradient residual bounds below.  For random-denominator schemes, the same conditional second-moment diagnostic measures the oracle-noise part of the stationary residual.

Proposition~\ref{prop:sg-moment} gives the stability and tightness needed to identify stationary limits.  We now identify the small-step limits of the stochastic-gradient invariant measures.

\begin{theorem}[Weak stationary consistency for deterministic-envelope SGLD]\label{thm:sg-weak-consistency}
Assume the hypotheses of Proposition~\ref{prop:sg-moment}, Lemma~\ref{lem:compact-generator-consistency} for the deterministic part, and Assumption~\ref{assump:gibbs-unique}.  Let \(\eta_j\downarrow0\), and let \(\pi_{\eta_j}^{\mathrm{sg}}\) be any invariant measure from an admissible family for the stochastic-gradient kernel \(Q_{\eta_j}^{\mathrm{sg}}\).  Suppose that, for a polynomial weight \(W\) large enough to control the weighted remainders in the one-step generator expansion used in Proposition~\ref{prop:onestep-residual-decomposition},
\begin{equation}\label{eq:sg-weak-covariance-condition}
  \eta_j\int W(x)\,\mathsf{Var}_x(\varepsilon_{\eta_j})\,
  \pi_{\eta_j}^{\mathrm{sg}}(\dd x)
  \longrightarrow0,
\end{equation}
and that the moment bounds from Proposition~\ref{prop:sg-moment} hold at a sufficiently high order to integrate these remainders. Then
\[
  \pi_{\eta_j}^{\mathrm{sg}}\Rightarrow \piz .
\]
\end{theorem}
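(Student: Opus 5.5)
The argument follows the template of Theorem~\ref{thm:weak-consistency}: tightness, generator consistency on compacts, control of tails, and identification of the limit through Assumption~\ref{assump:gibbs-unique}.

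\textbf{Tightness and invariance.} Admissibility, which holds by Proposition~\ref{prop:sg-moment}, gives $\sup_j\int V_q\,\dd\pi^{\mathrm{sg}}_{\eta_j}<\infty$. Hence every subsequence has a further subsequence with $\pi^{\mathrm{sg}}_{\eta_j}\Rightarrow\pi_*$. Fix $f\in C_c^\infty(\R^d)$. Invariance gives
\[
  \int \frac{Q^{\mathrm{sg}}_{\eta_j}f-f}{\eta_j}\,\dd\pi^{\mathrm{sg}}_{\eta_j}=0 .
\]
It therefore suffices to show that this integral converges to $\int\Lgen f\,\dd\pi_*$. Assumption~\ref{assump:gibbs-unique} then yields $\pi_*=\piz$, and since every subsequence has the same limit, the whole sequence converges.

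\textbf{One-step expansion.} For fixed $x$, write the increment as
\[
  \Delta^{\mathrm{sg}}=\eta b^T_\eta(x)+\eta\varepsilon_\eta(x,U)+\sqrt{2\beta^{-1}\eta}\,\xi .
\]
Expand $f(x+\Delta^{\mathrm{sg}})$ to second order, exactly as in Proposition~\ref{prop:onestep-residual-decomposition}. Since $f\in C_c^\infty$, the Hessian is globally Lipschitz, so the remainder is bounded by $C\,\E\|\Delta^{\mathrm{sg}}\|^3$. The first-order term is $\langle b^T_\eta(x),\nabla f(x)\rangle$, because $\E[\varepsilon_\eta\mid x]=0$ by \eqref{eq:sg-centered-tamed-drift}. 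The Gaussian second-order term gives $\beta^{-1}\Delta f(x)$. The cross terms involving $\xi$ vanish by independence, and the drift–oracle cross term vanishes because $\varepsilon_\eta$ is conditionally centered. The terms that remain are:
\begin{itemize}
  \item the deterministic drift-square term $\tfrac{\eta}{2}(b^T_\eta)^\top\nabla^2 f\, b^T_\eta$, which is handled by Lemma~\ref{lem:compact-generator-consistency};
  \item the new oracle term $\tfrac{\eta}{2}\E[\varepsilon_\eta^\top\nabla^2f(x)\varepsilon_\eta\mid x]$, which is bounded by $C\eta\,\mathsf{Var}_x(\varepsilon_\eta)$;
  \item the cubic remainder, bounded by $C\eta^{-1}\E\|\Delta^{\mathrm{sg}}\|^3$.
\end{itemize}
Using \eqref{eq:sg-compatible-increment} with $j=3$ together with Lemma~\ref{lem:effective-linear}, the cubic remainder is at most $C(\eta^{2-3\alpha}+\eta^{1/2})(1+\|x\|^3)$. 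This vanishes after integration against the admissible moment bounds, since $\alpha<1/2$ gives $2-3\alpha>1/2>0$.

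\textbf{Splitting into compact and tail regions.} Take $K$ to be a compact neighbourhood of $\operatorname{supp}f$. On $K$, the deterministic part converges uniformly to $\Lgen f$ by Lemma~\ref{lem:compact-generator-consistency}, and combined with weak convergence this gives the desired contribution from $K$. The oracle term is controlled globally, not just on $K$: it is at most $C\eta_j\int W\,\mathsf{Var}_x(\varepsilon_{\eta_j})\,\dd\pi^{\mathrm{sg}}_{\eta_j}$ once $W\ge1$, and this tends to zero by hypothesis \eqref{eq:sg-weak-covariance-condition}.

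\textbf{Main obstacle: the tail region $K^c$.} This is where I expect the difficulty. For $x\in K^c$ we have $f(x)=0$, so the integrand reduces to $\eta^{-1}\E f(x+\Delta^{\mathrm{sg}})$, which is bounded by $\eta^{-1}\|f\|_\infty\,\mathbb P(\|\Delta^{\mathrm{sg}}\|\ge \rho)$, where $\rho=\operatorname{dist}(\operatorname{supp}f,K^c)$. Unlike the Gaussian case, the oracle jump has only polynomial moments. I would apply Markov's inequality of order $p$:
\[
  \mathbb P(\|\Delta^{\mathrm{sg}}\|\ge\rho)\le \rho^{-p}\,\E\|\Delta^{\mathrm{sg}}\|^p
  \le C\bigl(\eta^{p(1-\alpha)}+\eta^{p/2}\bigr)(1+\|x\|^p),
\]
using \eqref{eq:sg-compatible-increment} and Lemma~\ref{lem:effective-linear}. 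Choosing $p\ge 3$ makes both exponents exceed $1$, since $p(1-\alpha)>p/2\ge 3/2$, so the tail contribution is $O(\eta^{1/2})$ times a $p$-th moment of $\pi^{\mathrm{sg}}_{\eta_j}$. This is finite by the high-order admissibility assumed in the statement, so the tail contribution vanishes and the proof is complete.
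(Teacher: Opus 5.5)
Your proposal is correct and follows essentially the same route as the paper: tightness from admissibility, the invariance identity, centering to kill the first-order oracle term, the Hessian bound producing the $C\eta\,\mathsf{Var}_x(\varepsilon_\eta)$ term that the covariance condition removes, and identification of the limit via Assumption~\ref{assump:gibbs-unique}. Your explicit cubic-remainder bound and polynomial Markov tail estimate make concrete the term $r_j(f)\to0$ that the paper asserts without detail; the Markov step is actually redundant, since the global third-order Taylor bound already covers $K^c$, where $f$ and all its derivatives vanish.
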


\begin{proof}
 The Lyapunov estimate in Proposition~\ref{prop:sg-moment} gives tightness of \((\pi_\eta^{\mathrm{sg}})_\eta\). Let \(\eta_j\downarrow0\), and pass to a weakly convergent subsequence, still denoted by \(\eta_j\), such that
 
 \[ \pi_{\eta_j}^{\mathrm{sg}}\Rightarrow \mu . \]
 By invariance of \(\pi_{\eta_j}^{\mathrm{sg}}\) for \(Q_{\eta_j}^{\mathrm{sg}}\), \[ \int \frac{Q_{\eta_j}^{\mathrm{sg}}f-f}{\eta_j} \,\dd\pi_{\eta_j}^{\mathrm{sg}}=0, \qquad f\in C_c^\infty(\R^d). \]  We next show that
\[
  \int
  \left[
    \frac{Q_{\eta_j}^{\mathrm{sg}}f-f}{\eta_j}
    -
    \Lgen f
  \right]
  \,\dd\pi_{\eta_j}^{\mathrm{sg}}
  \longrightarrow 0 .
\]
For compactly supported smooth \(f\), Lemma~\ref{lem:compact-generator-consistency} gives the deterministic-envelope part of the local generator convergence. The stochastic-gradient increment adds the centered term \(-\eta_j\varepsilon_{\eta_j}(x,U)\). Its first-order Taylor contribution vanishes by the centering identity \eqref{eq:sg-centered-tamed-drift}. Its second-order contribution is bounded by
 \[ C_f\,\eta_j\,\mathsf{Var}_x(\varepsilon_{\eta_j}), \] 
 because \(f\in C_c^\infty(\R^d)\) has bounded Hessian. The terms not involving the oracle covariance vanish as \(j\to\infty\). More precisely, the deterministic-envelope generator-consistency error, together with the higher-order oracle Taylor remainders controlled by Assumption~\ref{assump:sg-oracle} and the uniform moment bounds, contributes a quantity \(r_j(f)\) with \(r_j(f)\to0\). Hence, for a polynomial weight \(W\) depending on \(f\) and on the moment bounds, 
 \[ \left| \int \left[ \frac{Q_{\eta_j}^{\mathrm{sg}}f-f}{\eta_j} - \Lgen f \right] \,\dd\pi_{\eta_j}^{\mathrm{sg}} \right| \le r_j(f) + C\eta_j \int W(x)\mathsf{Var}_x(\varepsilon_{\eta_j}) \,\pi_{\eta_j}^{\mathrm{sg}}(\dd x). \] 
 The last term vanishes by the covariance condition \eqref{eq:sg-weak-covariance-condition}. Therefore 
 \[ \int \Lgen f\,\dd\mu=0, \qquad f\in C_c^\infty(\R^d). \] 
 Assumption~\ref{assump:gibbs-unique} identifies \(\mu=\piz\). Since every weakly convergent subsequence has the same limit, the full family converges weakly to \(\piz\). 
 \end{proof}

The covariance condition in Theorem~\ref{thm:sg-weak-consistency} has a simple
mini-batch interpretation.  Suppose the mini-batch oracle noise satisfies
\[
  \mathsf{Var}_x(\zeta_{\ell_b})
  \le
  \frac{C}{\ell_b}(1+\|x\|^s).
\]
Since the deterministic envelope only divides the oracle error by
\(1+\eta^\alpha A_\eta(x)\ge1\), it cannot increase the conditional second
moment:
\[
  \mathsf{Var}_x(\varepsilon_\eta)
  \le
  \mathsf{Var}_x(\zeta_{\ell_b}).
\]
If the invariant measures have a uniform moment bound for the weight
\(W(x)(1+\|x\|^s)\), then
\[
\begin{aligned}
\eta_j
\int W(x)\mathsf{Var}_x(\varepsilon_{\eta_j})
\,\pi_{\eta_j}^{\rm sg}(\dd x)
\le
\frac{C\eta_j}{\ell_b}
\int W(x)(1+\|x\|^s)\,
\pi_{\eta_j}^{\rm sg}(\dd x) 
\le
C'\frac{\eta_j}{\ell_b}.
\end{aligned}
\]
Thus the weighted covariance condition holds whenever
\(\eta_j/\ell_b\to0\).  In particular, it holds for any fixed mini-batch size
\(\ell_b\ge1\) as \(\eta_j\to0\).

\begin{corollary}[Stochastic-gradient polynomial-observable consistency]\label{cor:sg-observable-consistency}
Under the assumptions of Theorem~\ref{thm:sg-weak-consistency}, let \(H\) be a continuous polynomial-growth observable.  If the invariant measures have a uniform moment bound of order strictly larger than the growth order of \(H\), then
\[
  \int H\,\dd\pi_{\eta_j}^{\mathrm{sg}}
  \to
  \int H\,\dd\piz .
\]
\end{corollary}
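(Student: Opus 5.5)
The plan is to reduce this corollary directly to Lemma~\ref{lem:ui-observable}, mirroring the proof of Corollary~\ref{cor:poly-observable-consistency} for the exact-gradient kernel. Let \(s\) be the growth order of \(H\), so that \(|H(x)|\le C_H(1+\|x\|^s)\). By hypothesis there is \(\delta>0\) with
\[
  \sup_j\int \|x\|^{s+\delta}\,\pi_{\eta_j}^{\mathrm{sg}}(\dd x)<\infty .
\]
This is provided by the admissible family built from the Lyapunov estimate \eqref{eq:sg-drift-condition} of Proposition~\ref{prop:sg-moment} at some order \(q\ge s+\delta\), since \(\|x\|^{s+\delta}\le V_q(x)\). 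Theorem~\ref{thm:sg-weak-consistency} gives \(\pi_{\eta_j}^{\mathrm{sg}}\Rightarrow\piz\). It then remains only to verify the limiting moment condition \(\int\|x\|^{s+\delta}\,\dd\piz<\infty\).

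For the Gibbs law, I will deduce the moment bound from the uniform bound by lower semicontinuity rather than from \(\piz\) directly. For each \(M>0\), the function \(\min(\|x\|^{s+\delta},M)\) is bounded and continuous. Weak convergence therefore gives
\[
  \int \min(\|x\|^{s+\delta},M)\,\dd\piz
  =\lim_j\int\min(\|x\|^{s+\delta},M)\,\dd\pi_{\eta_j}^{\mathrm{sg}}
  \le \sup_j\int\|x\|^{s+\delta}\,\dd\pi_{\eta_j}^{\mathrm{sg}} .
\]
Letting \(M\to\infty\) by monotone convergence yields the required finite moment. As an alternative, dissipativity (Assumption~\ref{assump:dissipative}) implies \(F_z\) grows like \(\|x\|^{r+2}\), so \(e^{-\beta F_z}\) integrates every polynomial. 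The semicontinuity route is cleaner, though, since it uses only the stated hypotheses.

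With both moment conditions in hand, I will apply Lemma~\ref{lem:ui-observable} with \(\mu_n=\pi_{\eta_j}^{\mathrm{sg}}\) and \(\mu=\piz\); this gives \(\int H\,\dd\pi_{\eta_j}^{\mathrm{sg}}\to\int H\,\dd\piz\). The argument is short. The only mildly delicate point is to make sure the strict excess \(\delta\) is available simultaneously for the sequence and for the limit, and the truncation argument handles this. Everything else is already contained in Theorem~\ref{thm:sg-weak-consistency} and Lemma~\ref{lem:ui-observable}.
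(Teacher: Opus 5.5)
Your proposal is correct and matches the paper's proof, which simply applies Lemma~\ref{lem:ui-observable} to the weak convergence from Theorem~\ref{thm:sg-weak-consistency} together with the assumed uniform moment bound. Your truncation (lower-semicontinuity) argument showing that \(\piz\) has a finite \((s+\delta)\)-moment supplies a hypothesis of that lemma which the paper's one-line proof leaves implicit.
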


\begin{proof}
Apply Lemma~\ref{lem:ui-observable} to the weak convergence in Theorem~\ref{thm:sg-weak-consistency} and to the stated uniform moment bound.
\end{proof}

The consistency result above identifies the small-stepsize limit.  We next bound the fixed-stepsize stationary bias by one-step residuals for Poisson observables.

\begin{theorem}[Stationary bias bound through stochastic-gradient residuals]\label{thm:sg-stationary-error-decomposition}
Assume the Poisson regularity condition in Assumption~\ref{assump:poisson-regularity}.  Assume also that Assumption~\ref{assump:sg-oracle} supplies the finite conditional moment order needed for the polynomially weighted Taylor remainder below, and that \(b\) has the polynomial growth implied by Assumption~\ref{assump:poly-lip}.  Let \(\pi_\eta^{\mathrm{sg}}\) be an invariant probability measure of \(\Qsg\) for which the right-hand side below is integrable.  Then there exist constants \(C_H,m\), independent of sufficiently small \(\eta\), such that
\begin{equation}\label{eq:sg-stationary-bias-decomposition}
\left|\pi_\eta^{\mathrm{sg}}(H)-\piz(H)\right|
\le C_H\int (1+\|x\|^m)
\left[
\eta^{\gamma/2}
+\eta^\alpha A_\eta(x)\|b(x)\|
+\eta\,\mathsf{Var}_x(\varepsilon_\eta)
\right]\pi_\eta^{\mathrm{sg}}(\dd x).
\end{equation}
\end{theorem}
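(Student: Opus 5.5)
The plan is to repeat the exact-gradient argument of Corollary~\ref{cor:stationary-bias-decomposition}, now with the stochastic-gradient kernel \(\Qsg\) in place of \(Q_\eta\). Proposition~\ref{prop:poisson-residual-identity} is stated for an arbitrary Markov kernel with an invariant probability measure. Applied to \(\Qsg\) and \(\pi_\eta^{\mathrm{sg}}\), it gives
\[
  \left|\pi_\eta^{\mathrm{sg}}(H)-\piz(H)\right|
  \le\int\left|\Lgen u_H-\frac{\Qsg u_H-u_H}{\eta}\right|\dd\pi_\eta^{\mathrm{sg}} .
\]
It therefore suffices to prove a pointwise one-step bound
\[
  \left|\frac{\Qsg u_H(x)-u_H(x)}{\eta}-\Lgen u_H(x)\right|
  \le C(1+\|x\|^m)\bigl[\eta^{\gamma/2}+\eta^\alpha A_\eta(x)\|b(x)\|+\eta\,\mathsf{Var}_x(\varepsilon_\eta)\bigr],
\]
which is the stochastic-gradient analogue of Proposition~\ref{prop:onestep-residual-decomposition}, and then integrate it against \(\pi_\eta^{\mathrm{sg}}\).

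For the pointwise bound, write the increment as
\[
  \Delta^{\mathrm{sg}}=\eta b_\eta(x)+\eta\varepsilon_\eta(x,U)+\sqrt{2\beta^{-1}\eta}\,Z ,
\]
with \(U\) independent of \(Z\). Taylor-expand \(u_H\) to second order with the \(C^{2,\gamma}\) remainder of Assumption~\ref{assump:poisson-regularity}, exactly as in Proposition~\ref{prop:onestep-residual-decomposition}.
\begin{itemize}
\item \emph{First-order term.} By the centering identity \eqref{eq:sg-centered-tamed-drift}, \(\E\varepsilon_\eta=0\), so this term is \(\eta\langle b_\eta(x),\nabla u_H(x)\rangle\), the same as in the exact-gradient case.
\item \emph{Second-order term.} All cross terms vanish: those involving \(Z\) by independence and \(\E Z=0\), and the drift--oracle cross term by \(\E[\varepsilon_\eta\mid x]=0\). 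The pure Gaussian term gives \(\beta^{-1}\eta\Delta u_H\), and the drift-square term gives \(O(\eta^2(1+\|x\|^m))\), as before. The only new piece is
\[
  \tfrac{\eta^2}{2}\E[\varepsilon_\eta^\top\nabla^2u_H(x)\varepsilon_\eta],
\]
which is at most \(C\eta^2(1+\|x\|^m)\mathsf{Var}_x(\varepsilon_\eta)\). After dividing by \(\eta\), this is the term \(\eta\,\mathsf{Var}_x(\varepsilon_\eta)\).
\item \emph{Drift difference.} Subtracting \(\Lgen u_H\) leaves \(\langle b_\eta-b,\nabla u_H\rangle\). As before, this gives the envelope residual \(\eta^\alpha A_\eta(x)\|b(x)\|\).
\end{itemize}

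The main obstacle is the Taylor remainder, which is bounded by
\[
  C\,\E\bigl[(1+\|x\|^m+\|x+\Delta^{\mathrm{sg}}\|^m)\|\Delta^{\mathrm{sg}}\|^{2+\gamma}\bigr],
\]
and now involves the oracle. The plan is as follows.
\begin{enumerate}
\item Bound \(\|x+\Delta^{\mathrm{sg}}\|^m\le C(\|x\|^m+\|\Delta^{\mathrm{sg}}\|^m)\), and split \(\|\Delta^{\mathrm{sg}}\|^{p}\) over its three components using \(\|a+b+c\|^p\le C(\|a\|^p+\|b\|^p+\|c\|^p)\).
\item The drift and Gaussian parts are handled as in Proposition~\ref{prop:onestep-residual-decomposition}, using \(\|b_\eta\|\le\|b\|\), the polynomial growth of \(b\), and Gaussian moments. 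After division by \(\eta\), they contribute \(O(\eta^{\gamma/2}(1+\|x\|^m))\).
\item For the oracle part, use \(1+\eta^\alpha A_\eta\ge1\), so \(\|\eta\varepsilon_\eta\|\le\eta\|\zeta\|\), together with the raw moment bounds \eqref{eq:sg-raw-moments}. This gives \(\E\|\eta\varepsilon_\eta\|^p\le C\eta^p\delta^{p/2}(1+\|x\|^{s_p})\). For \(p\ge2+\gamma\), dividing by \(\eta\) leaves at least \(\eta^{1+\gamma}\le\eta^{\gamma/2}\). Since \(\delta\le\delta_0\), the factor \(\delta^{p/2}\) goes into the constant and the polynomial factor into the weight.
\item Non-integer powers and the weighted mixed terms are handled by Hölder's inequality, provided \(r\) in Assumption~\ref{assump:sg-oracle} is chosen large enough.
\end{enumerate}
If one wants the oracle contribution to the remainder to scale with \(\mathsf{Var}_x(\varepsilon_\eta)\) itself rather than be absorbed into the \(\eta^{\gamma/2}\) term, interpolate instead between second and higher conditional moments; but absorption into \(\eta^{\gamma/2}\) already yields the stated form.

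Finally, collect the terms, enlarge \(m\) to the maximal polynomial weight that appeared, and integrate the pointwise bound against \(\pi_\eta^{\mathrm{sg}}\). Integrability of the right-hand side is part of the hypothesis, and this proves \eqref{eq:sg-stationary-bias-decomposition}.
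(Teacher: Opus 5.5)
Your proposal is correct and follows the paper's proof essentially step for step. You apply the Poisson-residual identity to \(\Qsg\) and prove the pointwise one-step bound by a second-order Taylor expansion: centering removes the first-order oracle term and the mixed second-order terms, the pure oracle term gives \(\eta\,\mathsf{Var}_x(\varepsilon_\eta)\), and the \(C^{2,\gamma}\) remainder is absorbed into \(\eta^{\gamma/2}\) via the oracle moments \eqref{eq:sg-raw-moments} (with H\"older for non-integer orders), before integrating against \(\pi_\eta^{\mathrm{sg}}\); your remainder treatment just spells out what the paper calls ``the same weighted \(C^{2,\gamma}\) argument.''
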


\begin{proof}
We first record the local residual estimate.  The argument is the proof of Proposition~\ref{prop:onestep-residual-decomposition} with only the new oracle terms retained.  Write the stochastic-gradient increment as
\[
  \Delta
  =
  \eta b^T_\eta(x)+\eta\varepsilon_\eta(x,U)
  +\sqrt{2\beta^{-1}\eta}\,Z,
  \qquad Z\sim N(0,I_d).
\]
The deterministic drift part and the Gaussian covariance are the same as in the exact-gradient residual decomposition.  The new first-order oracle term vanishes because
\[
  \E[\varepsilon_\eta(x,U)\mid x]=0.
\]
Thus the deterministic-envelope drift residual remains
\[
  |\langle b^T_\eta(x)-b(x),\nabla u_H(x)\rangle|
  \le C_H(1+\|x\|^m)\eta^\alpha A_\eta(x)\|b(x)\|.
\]

It remains to bound the new second-order and higher-order oracle terms.  The mixed deterministic--oracle term vanishes conditionally by centering, and the mixed Gaussian--oracle term vanishes by independence and centering.  The pure oracle covariance is estimated directly by the Hessian growth bound:
\[
\begin{aligned}
\left|
  \frac{\eta}{2}
  \E\!\left[
    \varepsilon_\eta(x,U)^\top
    \nabla^2u_H(x)
    \varepsilon_\eta(x,U)
    \mid x
  \right]
\right|
&\le
  \frac{\eta}{2}\|\nabla^2u_H(x)\|_{\mathrm{op}}
  \E\!\left[\|\varepsilon_\eta(x,U)\|^2\mid x\right] \\
&\le
  C_H(1+\|x\|^m)\eta\,\mathsf{Var}_x(\varepsilon_\eta).
\end{aligned}
\]

The second line uses the polynomial Hessian bound from Assumption~\ref{assump:poisson-regularity}.  The remaining Taylor remainder is controlled by the same weighted \(C^{2,\gamma}\) argument as in Proposition~\ref{prop:onestep-residual-decomposition}, using the oracle moment bounds in Assumption~\ref{assump:sg-oracle}; after division by \(\eta\), it is bounded by
\[
  C_H(1+\|x\|^m)\eta^{\gamma/2}.
\]
Combining these estimates gives the pointwise local residual bound
\[
\left|
\frac{\Qsg u_H(x)-u_H(x)}{\eta}
-\Lgen u_H(x)
\right|
\le C_H(1+\|x\|^m)
\left[
\eta^{\gamma/2}
+\eta^\alpha A_\eta(x)\|b(x)\|
+\eta\,\mathsf{Var}_x(\varepsilon_\eta)
\right].
\]
Applying the Poisson-residual identity in Proposition~\ref{prop:poisson-residual-identity} with \(Q_\eta=\Qsg\), and then integrating this local bound against \(\pi_\eta^{\mathrm{sg}}\), gives \eqref{eq:sg-stationary-bias-decomposition}.
\end{proof}

Theorem~\ref{thm:sg-stationary-error-decomposition} bounds stationary bias by
an integrated one-step residual.  The bound has three residual terms.  The term
\(\eta^{\gamma/2}\) is the Euler residual from the local Taylor remainder.  The
term \(\eta^\alpha A_\eta(x)\|b(x)\|\) is the envelope residual from replacing
\(b\) by \(b_\eta^T\).  The term
\[
  \eta\,\mathsf{Var}_x(\varepsilon_\eta)
\]
is the oracle-covariance residual.  Because the envelope is deterministic
conditional on \(x\), there is no extra first-order mean-shift residual.  Thus
the stationary bias is controlled by Euler, envelope, and oracle-covariance
residuals, rather than by an additional oracle-coupled drift shift.

\section{Numerical Experiments}\label{sec:experiments}

The experiments address three questions suggested by the theory.  First, does using the current stochastic-gradient sample in the denominator create an additional mean-drift bias?  Second, how do different deterministic envelopes affect stability and stationary distortion?  Third, do the same mechanisms persist beyond radial toy targets, including regression examples, implementable envelope rules, and higher-dimensional stress tests?

\subsection{Experimental setup}\label{subsec:numerical-protocol}

The experiments isolate how denominator design affects stationary error.  We compare gradient-dependent denominators, global deterministic taming, and localized deterministic-envelope designs on controlled high-growth and empirical-risk examples.  Reported errors are absolute gaps to a reference stationary expectation.  For
radial targets this reference is available analytically; for regression targets we use a long exact-gradient chain as the numerical reference.

When a numerical reference is used, we account for its Monte Carlo uncertainty.  For an error of the form
\[
  \Delta H=\widehat H_{\rm method}-\widehat H_{\rm ref},
\]
we use the propagated standard error
\[
  \mathrm{SE}(\Delta H)
  =
  \left(
  \mathrm{SE}_{\rm method}(H)^2+
  \mathrm{SE}_{\rm ref}(H)^2
  \right)^{1/2}.
\]
Errors comparable to this scale are interpreted as reaching the numerical-reference uncertainty level, rather than as genuine stationary bias.

All denominator parameters are fixed before the reported sampling runs.  In the radial stress tests, the denominator scales and localized-envelope thresholds are fixed by the stated stress-test protocol, and the analytic reference values are used only to report errors.  In the quartic-regression examples, the localized thresholds are chosen from warm-up quantiles of a deterministic
growth indicator before the evaluation runs.  In the (d=50) hybrid stress test, the warm-up quantiles are combined with the stated small grid for \(R,S,\theta\).  The polynomial envelope is set from the objective growth bound.  Thus the reported tables are evaluation runs rather than post hoc parameter searches.

\subsection{Bias from a random denominator}\label{subsec:denominator-coupling}

\medskip
\noindent\textbf{Experiment 1: One-step random-denominator bias.}\par
\smallskip
This short diagnostic illustrates the local mechanism from
Propositions~\ref{prop:gradient-dependent-denominator-bias} and~\ref{prop:vector-gradient-dependent-denominator-bias}.  We use the scalar unbiased oracle
\[
  g_b=\mu+\sigma b^{-1/2}Z,
  \qquad Z\sim N(0,1),
\]
with \(\mu=1\) and \(\sigma=3\), and compare a random-denominator transform with its deterministic-denominator counterpart,
\[
  T_\lambda^{\rm rand}(g)=\frac{g}{1+\lambda |g|},
  \qquad
  T_\lambda^{\rm det}(\mu)=\frac{\mu}{1+\lambda |\mu|}.
\]
The diagnostic compares the conditional mean distortion of the random
denominator with the Monte Carlo baseline for the deterministic denominator.
The random-denominator distortion is
\[
\left|
\E T_\lambda^{\rm rand}(g_b)-T_\lambda^{\rm det}(\mu)
\right|.
\]
For the deterministic denominator, the corresponding distortion is zero in expectation because the denominator is fixed conditional on the current state; the small nonzero curve in the figure is only the finite-sample Monte Carlo error.

\begin{figure}[H]
\centering
\includegraphics[width=0.58\textwidth]{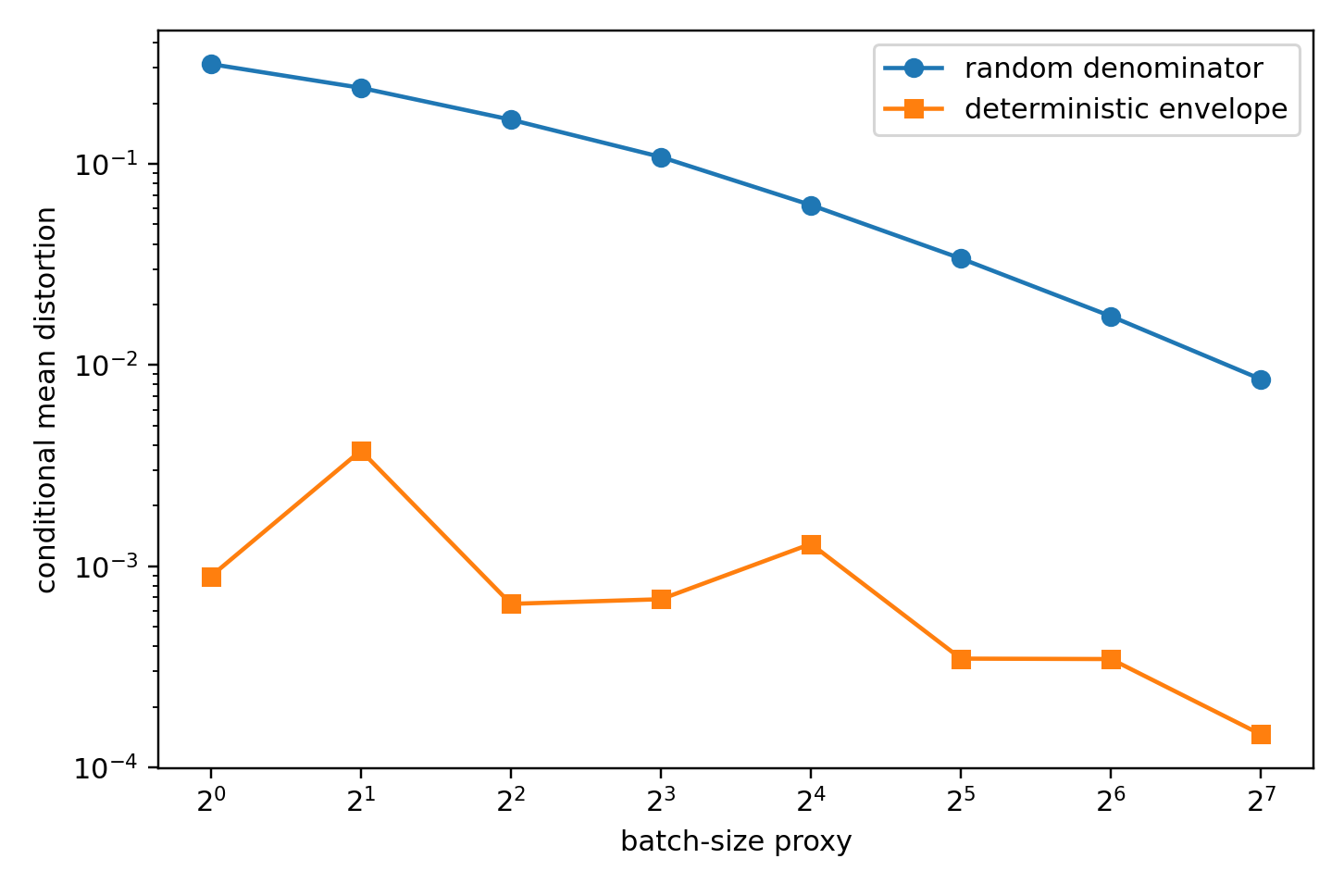}
\caption{One-step mean drift shift caused by a random denominator.}
\label{fig:oracle-bias-diagnostic}
\end{figure}

Experiment~1 is a local diagnostic: it shows that a random denominator can create a conditional mean shift even when the stochastic-gradient oracle is unbiased.  This does not yet show that the effect matters for the invariant law.  Experiment~2 therefore asks whether the same mean-shift mechanism is visible at stationarity.

\medskip
\noindent\textbf{Experiment 2: Stationary observable errors for the quartic target.}\par
\smallskip
We now test the same denominator effect at the stationary-observable level.  Before specifying the target, we first define the diagnostics used in the table.  Let \(\widehat b_\eta(x,U)\) denote the tamed stochastic drift used by the method being tested, and let \(b_\eta^T(x)\) be the deterministic-envelope drift.  The reported mean drift shift is
\[
  \E_x\left\|\E_U[\widehat b_\eta(x,U)\mid x]-b_\eta^T(x)\right\|.
\]
It measures the extra first-order oracle-bias channel that appears for a random denominator and is removed by a deterministic denominator.

The reported \(\eta\)-noise variance is an empirical estimate of the
second-order covariance term
\(\eta\int \mathsf{Var}_x(\varepsilon_\eta)\pi_\eta(\dd x):
\)

\[
  \eta\,\E_x\E_U\left[
  \left\|\widehat b_\eta(x,U)-\E_U[\widehat b_\eta(x,U)\mid x]\right\|^2
  \right].
\]
It measures the scale of the covariance-level stochastic-gradient contribution. Finally, the observable columns report the stationary errors
\[
|\Delta E H|
=
\left|
E_{\pi^{\rm method}_\eta}H-E_{\rm ref}H
\right|,
\]
for \(H=x^2, F\). 
The moment \(H=x^2\) tracks the scale of the sampled invariant law, while \(H=F=x^4/4\) tracks the average potential under that law. Together they check whether the drift-level denominator effect is visible in the invariant measure, rather than only in the one-step diagnostics.

We instantiate these diagnostics on the one-dimensional quartic target
\[
  F(x)=\frac{x^4}{4},\qquad b(x)=-x^3,
\]
for which the Gibbs moments are available analytically.  The stochastic-gradient oracle is
\[
  g(x,U)=b(x)+\sigma U,
  \qquad U\sim N(0,1),
\]
with \(\sigma=2.5\), and the stepsize is \(\eta=0.04\).  We compare exact deterministic-envelope taming, stochastic-gradient taming with the same deterministic denominator, and stochastic-gradient taming with a denominator recomputed from the current oracle realization.  Each row averages ten independent runs; the errors are absolute deviations from the analytic Gibbs values.

\begin{table}[H]
\centering
\scriptsize
\caption{Stationary errors for random and deterministic denominators.}
\label{tab:observable-denominator-coupling}
\resizebox{\textwidth}{!}{%
\begin{tabular}{lcccc}
\toprule
Method & mean drift shift & \(\eta\)-noise var. & \(|\Delta \E x^2|\) & \(|\Delta \E F|\) \\
\midrule
Exact-gradient deterministic envelope
& \(0\)
& \(0\)
& \(0.233\pm0.001\)
& \(0.273\pm0.003\) \\
SG, deterministic denominator
& \(0\)
& \(0.187\pm0.0001\)
& \(0.297\pm0.002\)
& \(0.334\pm0.004\) \\
SG, random denominator
& \(0.140\pm0.0002\)
& \(0.0727\pm0.00004\)
& \(0.362\pm0.003\)
& \(0.399\pm0.004\) \\
\bottomrule
\end{tabular}%
}
\end{table}

Table~\ref{tab:observable-denominator-coupling} shows the denominator effect at the level of stationary averages. With a deterministic denominator, the measured mean drift shift is zero, although the \(\eta\)-noise variance is larger than for the random-denominator chain. The random denominator therefore reduces the covariance-level noise diagnostic, but it also creates a nonzero mean drift shift. For the two reported observables, \(x^2\) and \(F\), the resulting stationary errors are smaller for the deterministic denominator. Thus, in this experiment, the random denominator's smaller noise variance is outweighed by the additional mean-shift channel.

\medskip
\noindent\textbf{Experiment 3: Batch-size scaling of the stochastic-gradient error.}\par
\smallskip
The previous experiment uses a fixed oracle noise level.  Experiment~3 checks how the drift-level diagnostics change when the oracle noise is reduced.  As discussed in Section~7, if the mini-batch oracle noise has conditional variance of order \(1/\ell_b\), then the covariance-level residual scale \(\eta\operatorname{Var}_x(\varepsilon_\eta)\) should decrease at the same order.  We vary the batch-size proxy while keeping the same quartic target and stepsize \(\eta=0.04\).  The stochastic-gradient oracle is
\[
  g_{\ell_b}(x,U)=b(x)+\sigma \ell_b^{-1/2}U,
  \qquad U\sim N(0,1),
\]
so increasing \(\ell_b\) reduces only the oracle noise.  Thus the \(\eta\)-scaled noise variance for the deterministic-denominator scheme should decrease approximately like \(1/\ell_b\), while the mean drift shift should remain absent.  For the random-denominator scheme, the denominator-induced mean drift shift should also decrease as the oracle noise is reduced, but it need not vanish at finite \(\ell_b\).

\begin{table}[H]
\centering
\scriptsize
\caption{Batch-size scaling for the quartic target.}
\label{tab:batch-size-scaling}
\begin{tabular}{ccccccc}
\toprule
\(\ell_b\) & \multicolumn{2}{c}{Deterministic denominator} & \multicolumn{2}{c}{Random denominator} & \multicolumn{2}{c}{\(\lvert\Delta \E F\rvert\)} \\
\cmidrule(lr){2-3}\cmidrule(lr){4-5}\cmidrule(lr){6-7}
& mean drift shift & \(\eta\)-noise var. & mean drift shift & \(\eta\)-noise var. & det. denom. & rand. denom. \\
\midrule
1 & \(0\) & \(0.187 \pm 0.0001\) & \(0.140 \pm 0.0003\) & \(0.0726 \pm 5\times10^{-5}\) & \(0.339 \pm 0.003\) & \(0.405 \pm 0.004\) \\
2 & \(0\) & \(0.0944 \pm 7\times10^{-5}\) & \(0.0919 \pm 0.0002\) & \(0.0450 \pm 4\times10^{-5}\) & \(0.312 \pm 0.004\) & \(0.354 \pm 0.004\) \\
4 & \(0\) & \(0.0474 \pm 4\times10^{-5}\) & \(0.0580 \pm 0.0001\) & \(0.0266 \pm 2\times10^{-5}\) & \(0.299 \pm 0.003\) & \(0.323 \pm 0.003\) \\
8 & \(0\) & \(0.0238 \pm 2\times10^{-5}\) & \(0.0355 \pm 5\times10^{-5}\) & \(0.0151 \pm 1\times10^{-5}\) & \(0.282 \pm 0.003\) & \(0.298 \pm 0.004\) \\
16 & \(0\) & \(0.0119 \pm 1\times10^{-5}\) & \(0.0210 \pm 4\times10^{-5}\) & \(0.00826 \pm 8\times10^{-6}\) & \(0.278 \pm 0.003\) & \(0.297 \pm 0.003\) \\
\bottomrule
\end{tabular}
\end{table}

Table~\ref{tab:batch-size-scaling} shows two effects.  For the deterministic denominator, the mean drift shift is structurally zero and the \(\eta\)-scaled noise variance is nearly halved whenever \(\ell_b\) is doubled.  For the random denominator, increasing \(\ell_b\) also reduces the extra mean drift shift, but this first-order bias is not absent at finite batch size.  The observable error \(\lvert\Delta \E F\rvert\) moves in the same direction, so the batch-size trend is visible both at the drift level and at the stationary-observable level.

\subsection{Choosing the deterministic envelope}\label{subsec:tamed-family-distortion}

\medskip
\noindent\textbf{Experiment 4: Why hybrid localization is needed.}\par
\smallskip

This experiment stress-tests the deterministic-envelope design in a radial
model where the reference risk is known.  We use
\begin{equation}\label{eq:radial-stress-potential}
  F_p(x)=\frac{\|x\|^p}{p},
  \qquad x\in\mathbb R^d,
\end{equation}
with \(d=5\) and \(\beta=1\).  Under the Gibbs law
\(\pi_p(dx)\propto \exp(-F_p(x))dx\), the radial identity gives
\[
  \pi_p(F_p)=\frac{d}{p}.
\]
We therefore report the empirical risk error
\[
  \left|
  \widehat{\mathbb E}F_p-\frac{d}{p}
  \right|.
\]
For stable runs this estimates the stationary risk bias.  When a run leaves
the stable regime, the corresponding entry is interpreted as evidence of the
failure mode of that denominator.

All rows in Table~\ref{tab:fixed-eta-p-scan} use the same stepsize
\(\eta=0.015\), dimension, horizon, burn-in, initialization, seeds, and
threshold rule.  The radial exponent \(p\) is varied to increase tail
stiffness while keeping the numerical design fixed.  The stochastic-gradient
proxy is
\[
  g(x)=\nabla F_p(x)+\sigma\xi,
  \qquad
  \xi\sim N(0,I_d),
  \qquad
  \sigma=3.
\]
The random-denominator row uses \(\|g(x)\|\) in the denominator.  All
deterministic rows use the exact gradient norm in the denominator, while the
update still uses the same stochastic-gradient proxy \(g(x)\).  Thus the
comparison changes the denominator design but not the gradient oracle.

Write
\[
  \bar A(x):=\|\nabla F_p(x)\|.
\]
The global hard and global soft baselines apply their damping everywhere:
\[
  A_{\rm glob}^{\rm hard}(x)=1+\bar A(x),
  \qquad
  A_{\rm glob}^{\rm soft}(x)=1+\bar A(x)^\theta .
\]
The localized rows use the common template
\[
  A_\eta^{c_s,c_h}(x)
  =
  c_s(\bar A(x)-R)_+^\theta
  +
  c_h(\bar A(x)-S)_+,
\]
with
\[
  A_\eta^{\rm soft}=A_\eta^{1,0},
  \qquad
  A_\eta^{\rm hard}=A_\eta^{0,1},
  \qquad
  A_\eta^{\rm hyb}=A_\eta^{1,1}.
\]
The reported runs use
\[
  \alpha=0.45,
  \qquad
  \theta=0.5,
  \qquad
  R=100,
  \qquad
  S=1000.
\]
Thus the localized soft, localized hard, and hybrid rows share the same
activation thresholds; they differ only in whether the soft component, the
hard-tail component, or both components are present.

\begin{table}[H]
\centering
\scriptsize
\caption{Fixed-stepsize radial stress test.}   
\label{tab:fixed-eta-p-scan}
\begin{tabular}{lcccc}
\toprule
Method & \(p=8\) & \(p=10\) & \(p=12\) & \(p=14\) \\
\midrule
Random denominator
& \(8.13\pm0.14\)
& \(21.9\pm0.65\)
& \(85.4\pm4.6\)
& \(374\pm42\) \\
Global hard deterministic
& \(7.55\pm0.13\)
& \(19.2\pm0.76\)
& \(78.0\pm11.0\)
& \(359\pm40\) \\
Global soft deterministic
& \(0.540\pm0.0046\)
& \(0.548\pm0.012\)
& \(0.603\pm0.0098\)
& unstable \\
Localized soft deterministic
& \(0.0928\pm0.0043\)
& \(0.121\pm0.0049\)
& \(0.169\pm0.010\)
& unstable \\
Localized hard deterministic
& \(0.0928\pm0.0062\)
& \((8.77\pm2.70)\times10^5\)
& \((2.82\pm0.18)\times10^9\)
& \((1.17\pm0.05)\times10^{12}\) \\
Hybrid deterministic
& \(0.0909\pm0.0038\)
& \(0.114\pm0.0038\)
& \(0.181\pm0.0070\)
& \(0.322\pm0.021\) \\
\bottomrule
\end{tabular}
\end{table}

Here ``unstable'' means that the repetitions do not produce a reliable averaged risk error because some runs leave the numerical stability range.  

The large and unstable entries show different ways in which a one-component denominator can fail.  The random denominator produces large errors across all exponents because the denominator
is coupled to the stochastic-gradient noise.  The global hard deterministic denominator remains stable, but it damps the drift everywhere and therefore distorts the typical region.  The global soft and localized soft denominators reduce this distortion for moderate exponents, but lose tail control when the
radial growth becomes stiff.  The localized hard denominator has the opposite problem: it provides a strong far-tail safeguard, but it is inactive on the intermediate band \(R<\bar A(x)\le S\).  The hybrid denominator combines the two missing pieces.  Its soft component starts acting before the far tail is reached, and its hard component keeps the far-tail step under control.

To make this mechanism visible, Table~\ref{tab:envelope-range-diagnostic}
reports three scale diagnostics for the hardest case \(p=14\).  For a
deterministic envelope \(A_\eta\), define the effective drift factor
\[
  R_\eta(x)=\frac{1}{1+\eta^\alpha A_\eta(x)}.
\]
The first diagnostic evaluates \(R_\eta(x)\) at the \(99.9\%\) reference
quantile of \(\|\nabla F_p(X)\|\), where \(X\sim\pi_p\).  This measures how
much the denominator compresses the typical region.  The other two diagnostics
evaluate the deterministic drift step
\[
  \eta\|\nabla F_p(x)\|R_\eta(x)
\]
at an intermediate scale \(\|\nabla F_p(x)\|=500\) and at a far-tail scale
\(\|\nabla F_p(x)\|=10^5\).

\begin{table}[H]
\centering
\scriptsize
\caption{Envelope diagnostics for the radial stress test.}
\label{tab:envelope-range-diagnostic}
\begin{tabular}{lccc}
\toprule
Method
& Reference factor
& Intermediate step
& Far-tail step \\
\midrule
Global hard deterministic & \(0.114\) & \(0.0978\) & \(0.0993\) \\
Global soft deterministic & \(0.450\) & \(1.66\) & \(30.7\) \\
Localized soft deterministic & \(1.000\) & \(1.86\) & \(30.8\) \\
Localized hard deterministic & \(1.000\) & \(7.50\) & \(0.100\) \\
Hybrid deterministic & \(1.000\) & \(1.86\) & \(0.100\) \\
\bottomrule
\end{tabular}
\end{table}

The three columns separate the two design quantities used throughout the
experiments.  The reference factor records how much the typical region is
compressed.  The intermediate and far-tail steps show where taming starts to
act and how strong it is after activation.  Global hard taming has strong tail
control, but it also compresses the reference region.  Soft denominators
preserve the reference region and provide intermediate damping, but they leave
a large far-tail step.  Localized hard and hybrid taming have the same
reference factor and the same far-tail step; the difference is the intermediate
band.  The hard-only localized rule is still inactive at
\(\|\nabla F_p\|=500\), whereas the hybrid rule has already activated its
soft component.  Thus the hybrid rule reduces the delayed-activation failure
of the hard-only localized rule without compressing the reference mass.

\subsection{Regression and implementable-envelope examples}\label{subsec:empirical-risk-examples}

\medskip
\noindent\textbf{Experiment 5: Non-radial quartic-regression target.}\par
\smallskip
The radial tests above isolate the denominator and envelope choices in a setting with an analytic reference value.  We next compare ordinary global taming with selective deterministic envelopes in a non-radial empirical-risk problem.  The purpose is to check whether the advantage of the envelope family persists beyond radial examples.

We use the quartic-regression objective in \eqref{eq:quartic-risk}, repeated here for convenience:
\begin{equation}\label{eq:quartic-regression-experiment}
  F_z(w)=\frac1n\sum_{i=1}^n \frac14(a_i^\top w-y_i)^4+\frac\lambda2\|w\|^2 .
\end{equation}
Its gradient depends cubically on the residuals \(a_i^\top w-y_i\), so the drift scale is nonuniform across directions and regions of the state space.  This makes it a natural test case for selective taming: global taming can be stable but conservative, while local or hybrid envelopes can leave moderate-gradient regions less changed and still act when the drift scale is large.

The stochastic-gradient oracle is the mini-batch estimator
\[
  g(w,B)=\frac{1}{|B|}\sum_{i\in B}\nabla_w
  \left[\frac14(a_i^\top w-y_i)^4+\frac\lambda2\|w\|^2\right],
\]
with batches sampled uniformly from the dataset.  In the first run we use \(d=4\), \(n=50\), ridge parameter \(\lambda=0.2\), Gaussian covariates scaled by \(d^{-1/2}\), and mini-batch size \(|B|=6\).

Since no closed-form Gibbs reference is available, we use small-stepsize exact-gradient Langevin runs as numerical references: \(\eta_{\rm ref}=10^{-3}\), 80 parallel chains, 5000 iterations, and 1500 burn-in iterations.  Eight independent reference runs give reference empirical-risk average approximately \(1.32\), with standard error about \(1.4\times10^{-2}\).  The sampled chains use 40 parallel chains, 2500 iterations, and 800 burn-in iterations.  A run is declared unstable if \(\|w_k\|>25\) or if the empirical risk becomes non-finite.

To isolate the denominator-design effect, Table~\ref{tab:quartic-regression-diagnostic} first uses the full-gradient envelope \(\bar A_{\rm diag}(w)=\|\nabla F_z(w)\|\) for deterministic-envelope methods.  Errors of the same order as the combined Monte Carlo uncertainty of the sampled and reference averages should be interpreted as reaching the numerical-reference noise floor rather than as meaningful nonzero stationary bias.

\begin{table}[H]
\centering
\scriptsize
\caption{Non-radial quartic regression: stationary risk averages and errors.}
\label{tab:quartic-regression-diagnostic}
\begin{tabular}{lcccc}
\toprule
Method & \multicolumn{2}{c}{$\eta=0.005$} & \multicolumn{2}{c}{$\eta=0.009$} \\
\cmidrule(lr){2-3}\cmidrule(lr){4-5}
 & Risk mean $\pm$ SE & Error & Risk mean $\pm$ SE & Error \\
\midrule
Random denominator & $2.151\pm0.074$ & $0.829$ & $2.566\pm0.032$ & $1.244$ \\
Global hard deterministic & $1.840\pm0.047$ & $0.518$ & $2.082\pm0.028$ & $0.760$ \\
Global soft deterministic & $1.650\pm0.052$ & $0.329$ & $1.729\pm0.010$ & $0.407$ \\
Local hard deterministic & $1.397\pm0.036$ & $0.075$ & $1.387\pm0.021$ & $0.065$ \\
Hybrid deterministic & $1.358\pm0.028$ & $0.036$ & $1.385\pm0.019$ & $0.063$ \\
\bottomrule
\end{tabular}
\end{table}

\begin{figure}[H]
\centering
\includegraphics[width=0.55\textwidth]{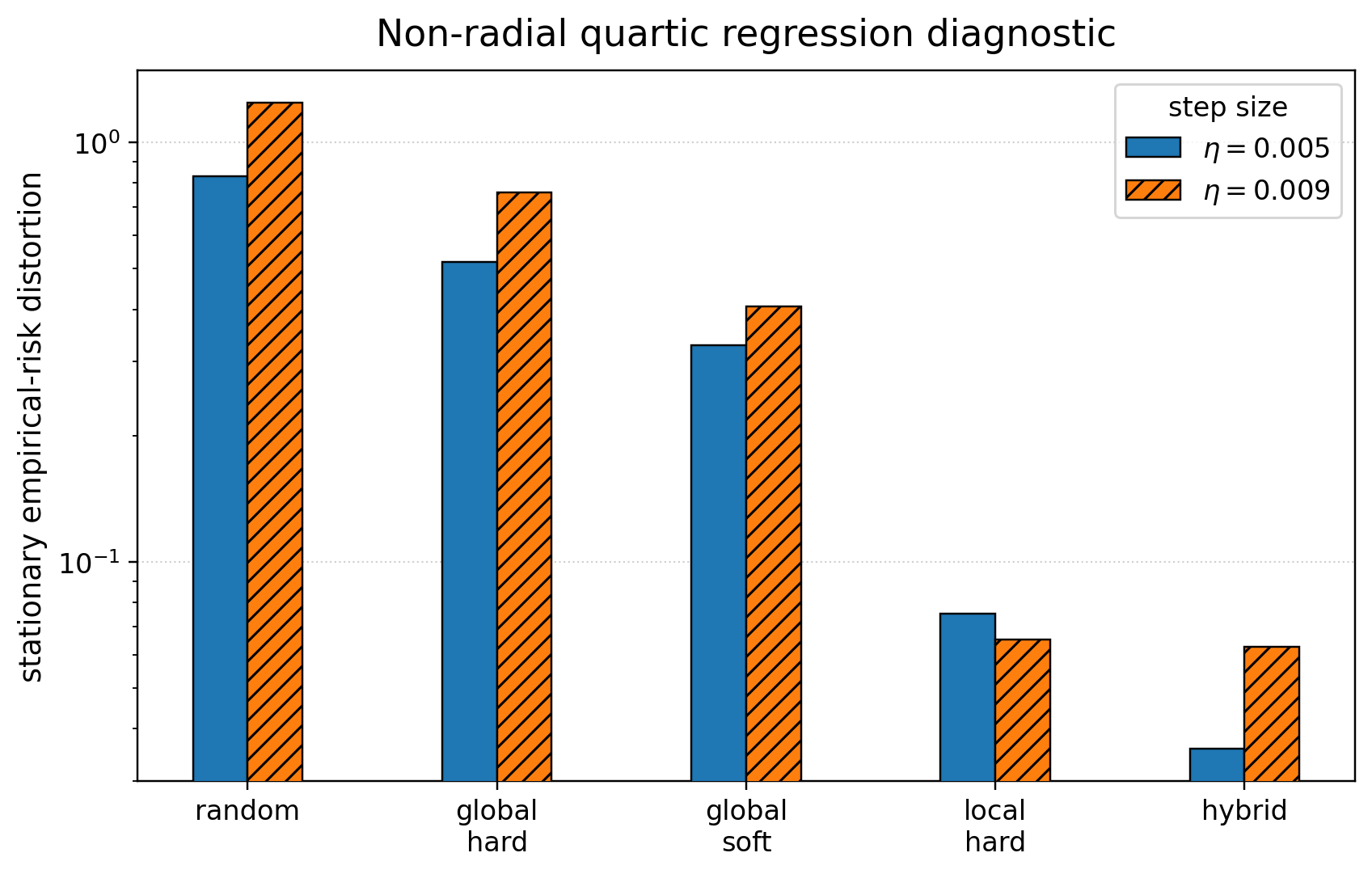}
\caption{Stationary empirical-risk distortion in the non-radial quartic-regression example.}
\label{fig:quartic-regression-diagnostic}
\end{figure}
The reported error is
\(|\widehat{\mathbb E}F_z-F_{\rm ref}|\), where \(F_{\rm ref}\) is this numerical reference average. The reported risk values are stationary averages, not optimized training losses.

The local hard and hybrid rows are close to the numerical-reference noise floor, whereas the random-denominator and global hard rows remain visibly farther from the exact-gradient reference.  This shows that selective deterministic envelopes improve on ordinary global taming in this non-radial example.  In these runs, the local hard and hybrid errors are comparable.  The hard-only rule adds little shrinkage when the sampled paths rarely enter the hard-tail region \(\bar A(x)>S\), while the hybrid rule can also apply a soft correction on the intermediate region \(R<\bar A(x)\le S\).

The same qualitative separation appears in a moderately larger synthetic
instance with \(n=500\), \(d=10\), and mini-batch size \(|B|=20\).  The
reference is again a small-stepsize exact-gradient Langevin numerical average
for the same empirical objective, with estimated reference standard error
about \(9.6\times10^{-2}\).  The local hard and hybrid rows remain closer to
the reference than the random-denominator and global-hard rows.  All listed
runs had zero blow-up.

\begin{table}[H]
\centering
\small
\caption{Quartic-regression check on a moderately larger instance.}
\label{tab:quartic-larger-instance}
\begin{tabular}{lcc}
\toprule
Method & Risk mean $\pm$ SE & Error \\
\midrule
Random denominator & $5.307\pm0.146$ & \(2.10\) \\
Global hard deterministic & $4.814\pm0.060$ & \(1.61\) \\
Global soft deterministic & $4.459\pm0.294$ & \(1.25\) \\
Local hard deterministic & $3.682\pm0.036$ & \(4.77\times10^{-1}\) \\
Hybrid deterministic & $3.861\pm0.049$ & \(6.56\times10^{-1}\) \\
\bottomrule
\end{tabular}
\end{table}

\medskip
\noindent\textbf{Experiment 6: Coarse implementable envelopes and calibration.}\par
\smallskip

The preceding experiments use the full-gradient diagnostic envelope
\(\bar A_{\rm diag}(w)=\|\nabla F_z(w)\|\).  This envelope is useful for
isolating the mechanism, but it is not a low-cost rule for stochastic-gradient
sampling.  As noted in Example~\ref{ex:quartic-risk}, the quartic-regression
drift has cubic growth.  We therefore test the crude deterministic growth
indicator
\[
  \bar A_{\rm poly}(w)=C_z(1+\|w\|^3),
\]
which is independent of the current mini-batch and can be evaluated without
computing the full gradient.  This proxy is not meant to be optimized; the
purpose of the experiment is to see what remains of the selective-envelope
mechanism when the growth scale is only coarsely matched.

Table~\ref{tab:quartic-envelope-implementation} compares the full-gradient
diagnostic envelope with the polynomial proxy.  Since the two envelopes have
different numerical scales, the local and hybrid thresholds are rescaled.  The
same exact-gradient numerical reference as in the preceding \(d=4,n=50\) run is used, with
reference standard error about \(1.4\times10^{-2}\); the reported error is
\(|\widehat{\mathbb E}F_z-F_{\rm ref}|\).

\begin{table}[H]
\centering
\scriptsize
\caption{Diagnostic and polynomial envelopes for quartic regression.}
\label{tab:quartic-envelope-implementation}
\begin{tabular}{llccc}
\toprule
Method & Envelope & Thresholds \((R,S)\) & Risk mean $\pm$ SE & Error \\
\midrule
Global hard & ideal & -- & $1.848\pm0.039$ & $0.526$ \\
Global hard & poly & -- & $43.823\pm3.613$ & $42.501$ \\
Local hard & ideal & \((5,50)\) & $1.388\pm0.008$ & $0.066$ \\
Local hard & poly & \((200,2000)\) & $1.369\pm0.022$ & $0.048$ \\
Hybrid & ideal & \((5,50)\) & $1.372\pm0.024$ & $0.051$ \\
Hybrid & poly & \((200,2000)\) & $1.406\pm0.026$ & $0.084$ \\
\bottomrule
\end{tabular}
\end{table}

The comparison has two messages.  First, using the polynomial proxy as a global
hard denominator creates a large stationary distortion.  Second, the same proxy
works well once its correction is localized.  Thus the polynomial proxy is not
useless as a location signal.  The failure of the global row comes from the
strength of the correction applied on the typical region.

To see this directly, Table~\ref{tab:quartic-poly-shrinkage} reports the
retained-drift factor
\[
  r_\eta(w)=\frac{1}{1+\eta^\alpha A_\eta(w)}
\]
on samples from the reference chain at \(\eta=0.005\).  Larger values of
\(r_\eta\) mean that more of the original drift is retained.  The table
compares the full-gradient scale \(\|\nabla F_z(W)\|\) with the polynomial
scale \(\bar A_{\rm poly}(W)\).  The thresholds for the localized polynomial
rules are \(R=200\) and \(S=2000\).

\begin{table}[H]
\centering
\scriptsize
\caption{Polynomial-envelope scale and drift factors on the reference region.}
\label{tab:quartic-poly-shrinkage}
\begin{tabular}{ccccc}
\toprule
Reference quantile
& \(\|\nabla F_z(W)\|\)
& \(\bar A_{\rm poly}(W)\)
& Global hard poly
& Local/hybrid poly \\
\midrule
50\%   & 1.47 & 22.8  & 0.313 & 1.000 \\
95\%   & 4.01 & 89.1  & 0.107 & 1.000 \\
99\%   & 5.43 & 135.6 & 0.074 & 1.000 \\
99.9\% & 7.41 & 193.6 & 0.053 & 1.000 \\
\bottomrule
\end{tabular}
\end{table}

The diagnostic separates two design quantities.  The first is the activation
location: where the denominator starts to act.  The second is the taming
strength: how much drift is removed after activation.  In this example the
polynomial scale is much larger than the full-gradient scale on the reference
region.  A global polynomial denominator therefore keeps only about \(31\%\)
of the drift at the median and about \(11\%\) at the \(95\%\) quantile.  The
localized polynomial rules avoid this over-shrinkage because the chosen
threshold leaves almost all of the reference region inactive.

A threshold sweep confirms this interpretation.  With \(R=25\), the local hard
polynomial rule is active on about \(46\%\) of the reference region and its
risk rises to about \(37.4\).  With \(R=200\), the active fraction drops below
\(0.1\%\) and the risk returns to the reference scale.  The hybrid rule is
less sensitive to early activation: at \(R=25\) its risk is about \(3.30\), and
by \(R=75\) it is already close to the reference level.  The soft component
therefore moderates the taming strength before the hard-tail correction is
activated.

This experiment marks the practical boundary of the present paper.  The
selective-envelope principle can make even a crude deterministic proxy useful,
but only when activation location and taming strength are calibrated together.
A systematic treatment of proxy construction, scale matching, and threshold
calibration is developed in the companion work~\cite{ZhouChen2026DenominatorDesign}.

\subsection{A high-dimensional stress test}\label{subsec:high-dimensional-pilot}

\medskip
\noindent\textbf{Experiment 7: High-dimensional quartic-regression stress test.}\par
\smallskip

Finally, we test the same quartic-regression structure in a higher-dimensional
setting with \(n=1000\), \(d=50\), mini-batch size \(|B|=50\), and
\(\eta=8\times10^{-4}\).  The numerical reference is an averaged
small-stepsize exact-gradient run for the same empirical objective, computed
from twelve independent reference chains.  The estimated reference empirical
risk is \(8.69\), with standard error about \(8.3\times10^{-2}\).

For the hybrid rule, the thresholds are selected from warm-up quantiles of the
deterministic growth indicator: \(R\approx Q_{0.6}\), \(S\approx Q_{0.99}\),
and \(\theta=0.5\).  The selected thresholds are then held fixed for the
reported production runs.  In Table~\ref{tab:quartic-d50-pilot}, errors are
relative to the averaged exact-gradient numerical reference.  The safe, soft,
and tail columns report the sampled fractions in the three regimes
\[
  \bar A\le R,\qquad R<\bar A\le S,\qquad \bar A>S.
\]

\begin{table}[H]
\centering
\scriptsize
\caption{High-dimensional quartic-regression stress test.}
\label{tab:quartic-d50-pilot}
\begin{tabular}{lccccc}
\toprule
Method & Risk mean $\pm$ SE & Error & Safe & Soft & Tail \\
\midrule
Random denominator & $10.579\pm0.610$ & \(1.89\) & -- & -- & -- \\
Global hard deterministic & $9.662\pm0.382$ & \(9.76\times10^{-1}\) & -- & -- & -- \\
Local hard deterministic & $10.458\pm0.257$ & \(1.77\) & 0.55 & 0.36 & 0.09 \\
Hybrid deterministic & $8.999\pm0.347$ & \(3.13\times10^{-1}\) & 0.63 & 0.36 & 0.01 \\
\bottomrule
\end{tabular}
\end{table}

Table~\ref{tab:quartic-d50-pilot} should be read as a stress test rather than
a high-precision ranking.  The high-dimensional reference has non-negligible
Monte Carlo uncertainty.  The hybrid error, \(3.13\times10^{-1}\), is
comparable to the propagated uncertainty scale, about \(3.57\times10^{-1}\).
Thus, for this quantile-selected configuration, the hybrid row is already at
the numerical-reference resolution of this experiment and gives the smallest
reported error among the tested denominators.  The other reported
denominators remain well outside this scale.

The regime fractions give a useful mechanism check for this run.  The local
hard rule visits the tail regime about \(9\%\) of the time, whereas the hybrid
rule reduces this fraction to about \(1\%\), while keeping a similar
soft-region fraction.  This is consistent with the intended hybrid mechanism:
the soft component acts before frequent hard-tail visits occur.  Thus, in this
particular high-dimensional stress instance, the hybrid choice is the best
among the reported denominator designs, while the broader lesson is that the
useful localized rule can depend on the envelope scale and threshold selection.

\section{Discussion}\label{sec:discussion}

The analysis above used analytic deterministic envelopes.  This choice isolates
the oracle effect of the denominator.  It is not yet a complete
implementation strategy.  In applications, the full drift scale may be costly
or unavailable.  One must then build a cheaper proxy for the region where
taming is needed.  Such a proxy has to match both scale and activation
geometry.  Otherwise, it may preserve the oracle mean but still damp the chain
in the wrong region.  Proxy construction, quantile calibration, and scale
matching are studied in the companion work~\cite{ZhouChen2026DenominatorDesign}.

A second direction is to sharpen the residual theory.  The Poisson argument
bounds stationary bias by an integrated one-step residual.  For deterministic
stochastic-gradient envelopes, this residual has three parts: the Euler
residual, the envelope residual, and the oracle-covariance residual.  This
residual decomposition should be connected with small-stepsize weak
convergence.  Such a theory would make the role of the denominator more
quantitative.  It would show how a scheme allocates weak error between the
typical region and the tail.

A third direction is to apply the same test to existing tamed SGLD schemes.
TULA, TUSLA, clipping, and related transformations all modify the stochastic
oracle seen by the chain.  If the transformation is deterministic conditional
on the state, the main question is the deterministic residual it creates.  If
it depends on the current stochastic-gradient draw, a mean-shift channel can
also appear.  This gives a common way to compare concrete schemes: not only by
stability, but by the residuals they introduce.

\end{document}